\newtheorem{lemma}{Lemma}
\newtheorem{theorem}{Theorem}
\newtheorem{corollary}{Corollary}
\newtheorem*{definition*}{Definition}
\newtheorem{assumption}{Assumption}
\newtheorem{remark}{Remark}
\DeclareMathOperator{\argmax}{argmax} 
\DeclareMathOperator{\argmin}{argmin} 
\DeclareMathOperator{\rank}{rank}
\title{Successive Nonnegative Projection Algorithm \\ for Robust Nonnegative Blind Source Separation}
\date{}
\author{Nicolas Gillis \\ 
Department of Mathematics and Operational Research \\ 
Facult\'e Polytechnique, Universit\'e de Mons \\ 
Rue de Houdain 9, 7000 Mons, Belgium\\
 nicolas.gillis@umons.ac.be  
}
\begin{document}

\maketitle

\begin{abstract} 
In this paper, we propose a new fast and robust recursive algorithm for near-separable nonnegative matrix factorization, a particular nonnegative blind source separation problem. This algorithm, which we refer to as the successive nonnegative projection algorithm (SNPA), is closely related to the popular successive projection algorithm (SPA), but takes advantage of the nonnegativity constraint in the decomposition. We prove that SNPA is more robust than SPA and can be applied to a broader class of nonnegative matrices. This is illustrated on some synthetic data sets, and on a real-world hyperspectral image.  
\end{abstract} 

\textbf{Keywords.} Nonnegative matrix factorization, nonnegative blind source separation, separability, robustness to noise, hyperspectral unmixing, pure-pixel assumption.

\section{Introduction}

Nonnegative matrix factorization (NMF)  has become a widely used tool for analysis of high-dimensional data. 
NMF decomposes approximately a nonnegative input data matrix $M \in \mathbb{R}^{m \times n}_+$ into the product of two nonnegative matrices $W \in \mathbb{R}^{m \times r}_+$ and $H \in \mathbb{R}^{r \times n}_+$ so that $M \approx WH$. 
Although NMF is NP-hard in general \cite{V09} and ill-posed (see \cite{G12} and the references therein), it has been used in many different areas such as image processing~\cite{LS99},  document classification~\cite{SBPP06}, hyperspectral unmixing~\cite{PPP06}, community detection~\cite{WLW11}, and computational biology~\cite{D08}. 
Recently, Arora et al.~\cite{AGKM11} introduced a subclass of nonnegative matrices, referred to as separable, for which NMF can be solved efficiently (that is, in polynomial time), even in the presence of noise. This subclass of NMF problems are referred to as near-separable NMF, and has been shown to be useful in several applications such as 
document classification \cite{AGM12, Ar13, KSK12, DRIS13},  
blind source separation \cite{CMCW08}, 
video summarization and  image
classification \cite{ESV12}, and 
hyperspectral unmixing (see Section~\ref{nsnmf} below). 


\subsection{Near-Separable NMF} \label{nsnmf}

A matrix $M$ is $r$-separable if there exists an index set $\mathcal{K}$ of cardinality $r$ and a nonnegative matrix $H \in \mathbb{R}^{r \times n}_+$ with \mbox{$M = M(:,\mathcal{K}) H$}. Equivalently, $M$ is $r$-separable if \vspace{-0.2cm}
\[
M = W \, [I_r, \, H'] \, \Pi, 
\]
where $I_r$ is the $r$-by-$r$ identity matrix, $H'$ is a nonnegative matrix and $\Pi$ is a permutation. Given a separable matrix, the goal is to identify the $r$ columns of $M$ allowing to reconstruct it perfectly, that is, to identify the columns of $M$ corresponding the columns of $W$. 
In the presence of noise, the problem is referred to as near-separable NMF and can be stated as follows.  \vspace{0.2cm} 

\noindent \textbf{Near-Separable NMF:}
Given the noisy $r$-separable matrix $\tilde{M} = WH + N \in \mathbb{R}^{m \times n}$ where 
$N$ is the noise, $W \in \mathbb{R}^{m \times r}_+$, $H = [I_r, H']\Pi$ with $H' \geq 0$ and $\Pi$ is a permutation, recover approximately the columns of $W$ among the columns of~$\tilde{M}$. \vspace{0.2cm}

\noindent An important application of near-separable NMF is blind hyperspectral unmixing in the presence of pure pixels \cite{GV12, Ma14}: A hyperspectral image is a set of images taken at different wavelengths. It can be associated with a nonnegative matrix $M \in \mathbb{R}^{m \times n}_+$ where $m$ is the number of wavelengths and $n$ the number of pixels. Each column of $M$ is equal to the spectral signature of a given pixel, that is, $M(i,j)$ is the fraction of incident light reflected by the $j$th pixel at the $i$th wavelength. Under the linear mixing model, the spectral signature of a pixel is equal to a linear combination of the spectral signatures of the constitutive materials present in the image, referred to as endmembers. 
The weights in that linear combination are nonnegative and sum to one, and correspond to the abundances of the endmembers in that pixel. 
If for each endmember, there exists a pixel in the image containing only that endmember, then the pure-pixel assumption is satisfied. This assumption is equivalent to the separability assumption: each column of $W$ is the spectral signature of an endmember and is equal to a column of $M$ corresponding to a pure pixel; 
see the survey \cite{BP12} for more details. \\

 Several provably robust algorithms have been proposed to solve the near-separable NMF problem using, e.g.,  geometric constructions~\cite{AGKM11,Ar13}, linear programming~\cite{EMO12, BRRT12, G13, GL13}, or semidefinite programming~\cite{M13, GV13}. 
In the next section, we briefly describe the successive projection algorithm (SPA) which is closely related to the algorithm we propose in this paper.

\subsection{Successive Projection Algorithm} The successive projection algorithm is a simple but fast and robust recursive algorithm for solving near-separable NMF; 
see Algorithm~\ref{spa}. At each step of the algorithm, the column of the input matrix $\tilde{M}$ with maximum $\ell_2$ norm  is selected, and then  $\tilde{M}$ is updated by projecting each column onto the orthogonal complement of the columns selected so far. It was first introduced in \cite{MC01}, and later proved to be robust~in~\cite{GV12}. \vspace{-0.2cm}
\renewcommand{\thealgorithm}{SPA}
\algsetup{indent=2em}
\begin{algorithm}[ht!]
\caption{Successive Projection Algorithm \cite{MC01, GV12} \label{spa}}
\begin{algorithmic}[1] 
\REQUIRE Near-separable matrix $\tilde{M} = WH + N \in \mathbb{R}^{m \times n}$ satisfying Ass.~\ref{asssep} where $W$ has full column rank,  the number $r$ of columns to be extracted. 
\ENSURE Set of indices $\mathcal{K}$ such that $M(:,\mathcal{K}) \approx W$ (up to permutation). 
    \medskip  

\STATE Let $R = \tilde{M}$, $\mathcal{K} = \{\}$, $k = 1$. 
\WHILE {$R \neq 0$ and $k \leq r$}   
\STATE $p = \argmax_j ||R_{:j}||_2$. $^\dagger$ 
\STATE $R = \left(I-\frac{{R_{:p}} R_{:p}^T}{||{R_{:p}}||_2^2}\right)R$. \vspace{0.1cm} 
\STATE $\mathcal{K} = \mathcal{K} \cup \{p\}$. 
\STATE $k = k + 1$. 
\ENDWHILE
\end{algorithmic}  
$^\dagger$ In case of a tie, the index $j$ whose corresponding column of the original matrix $\tilde{M}$ maximizes $f$ is selected. In case of another tie, one of these columns is picked randomly. 
\end{algorithm}  
\begin{theorem}[\cite{GV12}, Th.~3] \label{th1} 
Let $\tilde{M} = WH + N$ be a near-separable matrix  (see Assumption~\ref{asssep}) where $W$ has full column rank and $\max_i ||N(:,i)||_2 \leq \epsilon$. 
If $\epsilon \leq \mathcal{O} \left( \,  \frac{  \sigma_{\min}(W)  }{\sqrt{r} \kappa^2(W)} \right)$, then \ref{spa} identifies the columns of $W$ up to error $\mathcal{O} \left( \epsilon \, \kappa^2(W) \right)$, that is, the index set $\mathcal{K}$ identified by \ref{spa} satisfies \vspace{-0.1cm}
\[
\max_{1 \leq j \leq r} \min_{k \in \mathcal{K}} \left\|W(:,j) - \tilde{M}(:,k)\right\|_2 \leq \mathcal{O} \left( \epsilon \, \kappa^2(W) \right), \vspace{-0.1cm}
\]
where $\kappa(W) = \frac{\sigma_{\max}(W)}{\sigma_{\min}(W)}$ is the condition number of $W$. 
\end{theorem}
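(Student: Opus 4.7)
The natural approach is induction on the iteration counter $k$, with the invariant that after $k$ steps the set $\mathcal{K}$ produced by \ref{spa} contains $k$ indices whose corresponding columns of $\tilde{M}$ are each within $\mathcal{O}(\epsilon\,\kappa^2(W))$ of a \emph{distinct} column of $W$. To make the residual after each projection look structurally like a fresh near-separable instance, I would work with the matrix $R^{(k)} = P_k \tilde{M}$, where $P_k$ denotes orthogonal projection onto the complement of the span of the $k$ already-selected columns, and analyze $R^{(k)}$ as a perturbed product $P_k W\,H + P_k N$ with a reduced-rank mixing matrix.

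For the base case one examines the noiseless problem $M = WH$ under the standard SPA normalization that the columns of $H$ lie in the unit simplex. Since $\|\cdot\|_2^2$ is strictly convex, its maximum over the convex hull of the columns of $W$ is attained at a vertex, so the column of $M$ of largest Euclidean norm coincides with a column of $W$. The vertex-selection must be made quantitative: any point in the simplex whose barycentric weight on the winning vertex is bounded away from $1$ has squared norm smaller than the maximum by a margin controlled below by $\sigma_{\min}^2(W)$, and this is the mechanism producing one factor of $\kappa(W)$ in the final bound.

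The bulk of the work is the perturbation analysis, which I would organise around two quantitative lemmas. The first asserts that if $R$ is the current residual and $p=\argmax_j \|R_{:j}\|_2$, then $R_{:p}$ is within $\mathcal{O}(\epsilon\,\kappa(W))$ of $P_k W(:,j^\ast)$ for some as-yet-unselected index $j^\ast$; this follows from the quantitative vertex argument above combined with the uniform noise bound $\|N(:,i)\|_2\le\epsilon$ and the full column rank hypothesis, which lets one convert a small norm gap into a small positional gap. The second lemma tracks how orthogonal projection onto the \emph{noisy} selected column, rather than onto the true $W(:,j^\ast)$, perturbs the residual: using $\|uu^T-\tilde u\tilde u^T\|\le\mathcal{O}(\|u-\tilde u\|/\|u\|)$ together with $\|W(:,j^\ast)\|\ge\sigma_{\min}(W)$, this introduces an additional error of order $\epsilon\,\kappa(W)$ per step that multiplies the existing structure.

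The hard part is closing the induction tightly so that errors do not compound geometrically across the $r$ iterations. The $\sqrt{r}$ in the hypothesis on $\epsilon$ is precisely what is needed to absorb the $r$-fold accumulation of projection-induced perturbations while keeping the effective noise level at each step below the threshold demanded by the vertex-selection lemma; concretely, I would prove by induction that the noise level governing step $k$ is bounded by $\epsilon\cdot(1+c\,\kappa(W))^{k}$ times a mild combinatorial factor, and then use the smallness assumption $\epsilon\lesssim\sigma_{\min}(W)/(\sqrt{r}\,\kappa^2(W))$ to ensure this stays $\mathcal{O}(\epsilon\,\kappa(W))$ throughout. Once the invariant survives all $r$ iterations, pulling back from the residual error to the error on the original column $\tilde M(:,p)$ against $W(:,j^\ast)$ costs at most one further factor of $\kappa(W)$, yielding the stated $\mathcal{O}(\epsilon\,\kappa^2(W))$ bound.
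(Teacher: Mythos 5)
The paper does not actually prove this statement: Theorem~\ref{th1} is imported from \cite{GV12} (their Theorem~3), and the closest in-house analogue is the SNPA analysis of Section~\ref{fullrank} (Theorems~\ref{Th2}, \ref{threc} and \ref{th4}), which follows the same template as the SPA proof. Measured against that template, your high-level architecture is the right one: a quantitative vertex-selection lemma (here Lemma~\ref{lemscf}, i.e.\ \cite[Lemma~2]{GV12}) showing that any convex combination placing weight at most $1-\delta$ on every not-yet-extracted vertex loses at least $\tfrac{1}{2}\mu\delta(1-\delta)\omega(P)^2$ in objective value, fed into an induction in which the residual matrix is treated as a fresh near-separable instance with mixing matrix $P_kW$. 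Your two single-step perturbation lemmas (closeness of the selected residual column to a projected true column, and the rank-one projector perturbation $\|uu^T-\tilde u\tilde u^T\|$) are plausible ingredients for the one-step claim, the second being an alternative to the route taken in \cite{GV12}.

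The genuine gap is in how you close the induction. You propose to let the effective noise at step $k$ grow like $\epsilon(1+c\,\kappa(W))^{k}$ and to absorb this with the $\sqrt{r}$ in the hypothesis on $\epsilon$. That cannot work: $(1+c\,\kappa(W))^{k}$ is exponential in $k$ and independent of $\epsilon$, so no smallness assumption on $\epsilon$ keeps $\epsilon(1+c\,\kappa(W))^{r}$ of order $\epsilon\,\kappa(W)$, and the theorem would be unreachable along that route. What the actual argument establishes (compare Theorems~\ref{Th2} and~\ref{threc}) is a \emph{fixed-point} invariant: there is a single constant $C=1+\mathcal{O}(\kappa^2(W))$ such that if every already-extracted column is within $\bar{\epsilon}=C\epsilon$ of a distinct true column, then the next extracted column is again within $C\epsilon$ --- the same bound, with no growth in $k$; the per-step errors do not accumulate at all. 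The $\sqrt{r}$ plays a different role: the extracted columns $\tilde{B}$ differ from the true $B$ by at most $\bar{\epsilon}$ per column, and the Weyl-type bound of Lemma~\ref{weyl} degrades the relevant smallest singular value of $[A,\tilde{B}]$ by at most $\sqrt{s}\,\bar{\epsilon}$ with $s\le r$; the hypothesis $\epsilon\lesssim \sigma_{\min}(W)/(\sqrt{r}\,\kappa^2(W))$ is exactly what keeps this degraded singular value above $\sigma_{\min}(W)/2$, so that the vertex-selection margin survives every step. Without replacing your geometric-compounding invariant by this uniform one, the induction does not close and the stated $\mathcal{O}(\epsilon\,\kappa^2(W))$ bound does not follow.
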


\noindent Moreover, \ref{spa} can be generalized by replacing the $\ell_2$ norm (step 2 of Algorithm~\ref{spa}) with any strongly convex function with a Lipschitz continuous gradient \cite{GV12}. 

\ref{spa} is closely related to several hyperspectral unmixing algorithms such as the automatic target generation process (ATGP)~\cite{RC03} and the successive volume maximization algorithm (SVMAX)~\cite{CM11}. 
It is also closely related to older techniques from other fields of research, in particular the modified Gram-Schmidt with column pivoting; 
 see, e.g., ~\cite{GV12, Ma14, Fal13, G14} and the references therein. 
Although \ref{spa} has many advantages (in particular, it is very fast and rather effective in practice), 
a drawback is that it requires the matrix $W$ to have rank~$r$. In fact, if $M$ is $r$-separable with $r < \rank(M)$, then \ref{spa} cannot extract enough columns even in noiseless conditions. Moreover, if the matrix $W$ is ill-conditioned, \ref{spa} will most likely fail even for very small noise levels (see Theorem~\ref{th1}).

\subsection{Contribution and Outline of the Paper}

The main contributions of this paper are 
\begin{itemize}

\item The introduction of a new fast and robust recursive algorithm for near-separable NMF, referred to as 
  the successive nonnegative projection algorithm (SNPA), which overcomes the drawback of \ref{spa} that the matrix $W$ has to be full column rank (Section~\ref{snpasec}). 
	
	\item The robustness analysis of \ref{snpa} (Section~\ref{robsnpa}). 
	First, we show that Theorem~\ref{th1} applies to \ref{snpa} as well, that is, we show that \ref{snpa} is robust to noise when $W$ has  full column rank.  
	Second, given a matrix $W$, we define a new parameter $\beta(W) \geq \sigma_{r}(W)$ which is in general positive even if $W$ has not full column rank. 
	We also define 
	$\kappa_{\beta}(W) = \frac{\max_i||W(:,i)||_2}{\beta(W)}$ and show that 
 \begin{quote} \textbf{Theorem~\ref{maincor}}. Let $\tilde{M}$ be a near-separable matrix satisfying Assumption~\ref{asssep} with \mbox{$\beta(W) > 0$}. If~$\epsilon \leq \mathcal{O} \left( \,  \frac{  \beta(W)  }{\kappa^3_{\beta}(W)} \right)$, then \ref{snpa} with $f(.) = ||.||_2^2$ identifies the columns of $W$ up to error $\mathcal{O} \left( \epsilon \, \kappa^3_{\beta}(W) \right)$. 
 \end{quote}
\noindent This proves that \ref{snpa} applies to a broader class of matrices ($W$ does not need to have full column rank). 
It also proves that \ref{snpa} is more robust than \ref{spa}: in fact, even when $W$ has rank~$r$, 
if $\frac{\sigma_{\min}(W)}{\sqrt{r} \kappa^2(W)} \ll \frac{  \beta(W)  }{\kappa^3_{\beta}(W)}$, then SNPA will outperform \ref{spa} 
as the noise level allowed by \ref{snpa} can be much larger. 
\end{itemize}

We illustrate the effectiveness of \ref{snpa} on several synthetic data sets and a real-world hyperspectral image in Section~\ref{ne}.  


\subsection{Notations} 

The unit simplex is defined as 
$\Delta^m = \left\{  x \in \mathbb{R}^m  \ \Big| \ x \geq 0, \sum_{i=1}^m x_i \leq 1 \right\}$, 
and the dimension $m$ will be dropped when it is clear from the context.  Given a matrix $W \in \mathbb{R}^{m \times r}$, $W(:,j)$, $W_{:j}$ or $w_j$ denotes its $j$th column. The zero vector is denoted $0$, its dimension will be clear from the context. We also denote $||W||_{1,2} = \max_{x, ||x||_1 \leq 1} ||Wx||_2 = \max_i ||W(:,i)||_2$. A matrix $W \in \mathbb{R}^{m \times r}$ is said to have full column rank if $\rank(W) = r$.

\section{Successive Nonnegative Projection Algorithm} \label{snpasec}

In this paper, we propose a new family of fast and robust recursive algorithms to solve near-separable NMF problems; see Algorithm \ref{snpa}. At each step of the algorithm, the column of the input matrix $\tilde{M}$ maximizing the function $f$ is selected, and then each column of $\tilde{M}$ is projected onto the convex hull of the columns extracted so far and the origin using the semi-metric induced by $f$. (A natural choice for the function $f$ in \ref{snpa} is $f(x) = ||x||_2^2$.) 
Hence the difference with \ref{spa} is the way the projection is performed. 
\renewcommand{\thealgorithm}{SNPA} 
\algsetup{indent=2em}
\begin{algorithm} 
\caption{Successive Nonnegative Projection Algorithm \label{snpa}}
\begin{algorithmic}[1] 
\REQUIRE Near-separable matrix $\tilde{M} = WH + N \in \mathbb{R}^{m \times n}$ satisfying Ass.~\ref{asssep} with \mbox{$\beta(W) > 0$},  the number $r$ of columns to be extracted, and a strongly convex function $f$ satisfying Ass.~\ref{fass1}. 
\ENSURE Set of indices $\mathcal{K}$ such that $\tilde{M}(:,\mathcal{K}) \approx W$ up to permutation. 
    \medskip 
		
\STATE Let $R = \tilde{M}$, $\mathcal{K} = \{\}$, $k = 1$.  
\WHILE {$R \neq 0$ and $k \leq r$}  
\STATE $p = \argmax_j f(R_{:j})$. $^\dagger$  \vspace{0.1cm} 
\STATE $\mathcal{K} = \mathcal{K} \cup \{p\}$. \vspace{0.1cm}  
\STATE $R(:,j) = \tilde{M}(:,j) - \tilde{M}(:,\mathcal{K})H^*(:,j)$ for all $j$,  
where \vspace{-0.25cm}
\[ 
H^*(:,j) = \argmin_{x \in \Delta} f\left(\tilde{M}(:,j) - \tilde{M}(:,\mathcal{K})x\right); \qquad \text{see Appendix~\ref{appA}.} \vspace{-0.5cm}
\] 
\STATE $k = k + 1$. 
\ENDWHILE 
\end{algorithmic}
$^\dagger$ In case of a tie, the index $j$ whose corresponding column of the original matrix $\tilde{M}$ maximizes $f$ is selected. In case of another tie, one of these columns is picked randomly. 
\end{algorithm} 
In this work, we perform the projections at step~5 of \ref{snpa} (which are convex optimization problems) using a fast gradient method, which is an optimal first-order method for minimizing convex functions with a Lipschitz continuous gradient \cite{Y04}; see Appendix~\ref{appA} for the implementation details.  
Although \ref{snpa} is computationally more expensive than \ref{spa}, it has the same asymptotic complexity, requiring a total of $\mathcal{O}(mnr)$ operations. 

SNPA is also closely related to the fast canonical hull algorithm, referred to as XRAY, from \cite{KSK12}. XRAY is a recursive algorithm for near-separable NMF and projects, at each step, the data points onto the convex cone of the columns extracted so far. 
The main differences between XRAY and \ref{snpa} are that 
\begin{enumerate} 
\item[(i)] XRAY uses another criterion to select a column of $M$ at each step. This is a crucial difference between SNPA and XRAY. In fact, it was discussed in \cite{KSK12} that in some cases (e.g., when a data point belongs to the cone spanned by two columns of $W$ and these two columns maximize the criterion simultaneously), XRAY may fail to identify a column of $W$ even in noiseless conditions; see the remarks on page 5 of \cite{KSK12}. 
This will be illustrated in Section~\ref{ne}.  (Note that there actually exists several variants of XRAY with different but closely related criteria for the selection step; however, they all share this undesirable property.) 

\item[(ii)] At each step, XRAY projects the data matrix onto the convex cone of the columns extracted so far while \ref{snpa} projects onto their convex hull (with the origin). 
In this paper, we will assume that the entries of each column of the matrix $H$ sum to at most one (equivalently that the columns of the data matrix belongs to the convex hull of the columns of $W$ and the origin); see Assumption~\ref{asssep} (and the ensuing discussion). 
Hence, performing the projection onto the convex hull allows to take this prior information into account.  
However, a variant of SNPA with projections onto the convex cone of the columns extracted so far is also possible although we have observed\footnote{We performed numerical experiments similar to that of Section~\ref{ne} and the variant of SNPA with projection onto the convex cones was less robust than \ref{snpa}, while being slightly more robust than XRAY (because of the difference in the selection criterion).} that, under Assumption~\ref{asssep}, it is less robust. 
It would be an interesting direction for further research to analyze this variant in details\footnote{Under Assumption~\ref{asssep}, the robustness analysis with projections onto the convex hull is made easier, and allowed us to derive better error bounds. The reason is that the columns of $H^*$ (see step 5 of Algorithm~\ref{snpa}) are normalized while an additional constant would be needed in the analysis (if we would follow exactly the same steps) to bound the norm of these columns if the projection was onto the convex cone.}.  
(Note that a variant of XRAY with projections onto convex hull does not work because the criterion used by XRAY in the selection step relies on the projections being performed onto the convex cone.)

\item[(iii)] XRAY performs the projection step with respect to the $\ell_2$ norm, while \ref{snpa} performs the projection with respect to the function $f$. 

\end{enumerate}

\section{Robustness of SNPA} \label{robsnpa}

In this section, we prove robustness of \ref{snpa} for any sufficiently small noise. The proofs are closely related to the robustness analysis of \ref{spa} developed in \cite{GV12}.

In Section~\ref{assdef}, we give the assumptions and definitions needed throughout the paper. 
In Section~\ref{noiseless}, we prove that \ref{snpa} identifies the columns of $W$ among the columns of $M$ exactly in the noiseless case, which explains the intuition behind \ref{snpa}. 
In Section~\ref{keylem}, we derive our key lemmas which allow us to show that the robustness analysis of \ref{spa} from Theorem~\ref{th1} (which requires $W$ to be full column rank) also applies to \ref{snpa}; see Theorems~\ref{threc} and~\ref{th4}. 
In Section~\ref{genmat}, we generalize the analysis to a broader class of matrices for which $W$ is not required to be full column rank; see Theorems~\ref{mazette} and~\ref{maincor}.  

In Sections~\ref{improve} and~\ref{choicef}, we briefly discuss some possible improvements of \ref{snpa}, and the choice of the function $f$,  respectively.

\subsection{Assumptions and Definitions} \label{assdef}

In this section, we describe the assumptions and definitions useful to prove robustness of \ref{snpa}.

Without loss of generality, we will assume throughout the paper that the input matrix has the following form: 
\begin{assumption}[Near-Separable Matrix] \label{asssep}
The separable matrix $M \in \mathbb{R}^{m \times n}$ can be written as 
\[
M = W H = W [I_r, H'], 
\]
where $W \in \mathbb{R}^{m \times r}$, 
$H \in \mathbb{R}^{r \times n}_+$, and $H(:,j) \in \Delta$ for all $j$. The near-separable matrix $\tilde{M}$ is given by $\tilde{M} = M + N$ where $N$ is the noise with $||N||_{1,2} \leq \epsilon$. 
\end{assumption}  

\noindent Any nonnegative near-separable matrix $M = WH = W[I_r, H']\Pi$ (with $W, H' \geq 0$ and $\Pi$ a permutation) can be put in this form by proper permutation and normalization of its columns. 
In fact, permuting the columns of $M$ and $H$ so that the first $r$ columns of $M$ correspond to the columns of $W$, we have $M = W [I_r, H'] = WH$. 
The permutation does not affect our analysis because \ref{snpa} is not sensitive to permutation, 
while it makes the presentation nicer (the permutation matrix $\Pi$ can be discarded).  
For the scaling, we  
(i)~divide each column of $M$ by its $\ell_1$ norm (unless it is an all-zero column in which case we discard it) and divide the corresponding column of $H$ by the same constant (hence we still have $M = WH$), and 
(ii)~divide each column of $W$ by its $\ell_1$ norm and multiply the corresponding row of $H$ by the same constant (hence $WH$ is unchanged). Since the entries of each column of the normalized matrices $M$ and $W$ sum to one, and $M(:,j) = W H(:,j)$ for all $j$, the entries of each column of $H$ must also sum to one: for all $j$, 
\[
1 = \sum_i M(i,j) = \sum_i \sum_k W(i,k) H(k,j) = \sum_k H(k,j) \sum_i  W(i,k) = \sum_k H(k,j); 
\]
see also the discussion in \cite{GV12}. 
Column normalization also makes the presentation nicer: in fact, otherwise the noise that can be tolerated on each column of $\tilde{M} = M + N$ will have to be proportional to the norm of the corresponding column of the matrix $H$ (for example, an all-zero column cannot tolerate any noise because it can be made an extreme ray of the cone spanned by the columns of $M$ for any positive noise level). 

Note that Assumption~\ref{asssep} \emph{does not require $W$ to be nonnegative} hence our result will apply to a broader class than the nonnegative near-separable matrices. 
It is also interesting to note that data matrices corresponding to hyperspectral images are naturally scaled since the columns of $H$ correspond to abundances and their entries sum to one (see Section~\ref{nsnmf} for more details, and Section~\ref{hsi} for some numerical experiments).

We will also assume that, in \ref{snpa}, 
\begin{assumption} \label{fass1} 
The function $f:\mathbb{R}^m \to \mathbb{R}_+$ is strongly convex with parameter $\mu > 0$, its gradient is Lipschitz continuous with constant $L$, and its global minimizer is the all-zero vector with $f(0) = 0$. 
\end{assumption}

A function $f$ is strongly convex with parameter $\mu$ if and only if it is convex and for any $x, y \in \text{dom}(f)$ and for all $\delta \in [0,1]$ 
\begin{equation} \label{strcon}
f(\delta x + (1-\delta) y) 
\; \leq \;  
\delta f(x) + (1-\delta) f(y) - \frac{\mu}{2} \delta   (1-\delta) ||x-y||_2^2. 
\end{equation} 
Moreover, its gradient is Lipschitz continuous with constant $L$ if and only if for any $x, y \in \text{dom}(f)$, we have $||\nabla f (x) - \nabla f (y) ||_2 \leq L ||x-y||_2$. 
Convex analysis also tells us that if $f$ satisfies Assumption~\ref{fass1} then, for any $x, y$, 
\[
f(x) + \nabla f(x)^T (y-x) + \frac{\mu}{2} ||x-y||_2^2 
\quad \leq \quad
f(y) \quad  \leq  \quad 
f(x) + \nabla f(x)^T (y-x) + \frac{L}{2} ||x-y||_2^2. 
\]
In particular, taking $x=0$, we have, for any $y \in \mathbb{R}^m$, 
\begin{equation} \label{normconv}
 \frac{\mu}{2} ||y||_2^2 
 \quad \leq  \quad 
f(y)
 \quad \leq  \quad 
\frac{L}{2} ||y||_2^2, 
\end{equation}
since $f(0) = 0$ and $\nabla f(0) = 0$ (because zero is the global minimizer of~$f$); see, e.g., \cite{UL01}. Note that this implies $f(x) > 0$ for any $x \neq 0$ hence $f$ induces a semi-metric; the distance between two points $x$ and $y$ being defined by $f(x-y)$. \\

We will use the following notation for the residual computed at step~5 of Algorithm~\ref{snpa}. 
\paragraph{Definiton (Projection and Residual)} 
Given $B \in \mathbb{R}^{m \times s}$ and a function $f$ satisfying Assumption~\ref{fass1}, we define 
 the projection $\mathcal{P}_B^f(x)$ of $x$ onto the convex hull of the columns of $B$ with respect to the semi-metric induced by $f(.)$ as follows: 
\[
\mathcal{P}_B^f(x):  \mathbb{R}^{m} \rightarrow \mathbb{R}^{m} : x \rightarrow \mathcal{P}_B^f(x) = By^* , \; \text{ where }  y^* = \argmin_{y \in \Delta} f(x - By). 
\]   
We also define the residual $\mathcal{R}_B^f$ of the projection $\mathcal{P}_B^f$ as follows: 
\[
\mathcal{R}_B^f :  \mathbb{R}^{m} \rightarrow \mathbb{R}^{m} : x \rightarrow \mathcal{R}_B^f(x) = x - \mathcal{P}_B^f(x). 
\] 
For a matrix $A \in \mathbb{R}^{m \times r}$, we will denote $\mathcal{P}_B^f(A)$ the matrix whose columns are the projections of the columns of $A$, that is, $\left(\mathcal{P}_B^f(A)\right)_{:i} = \mathcal{P}_B^f(A_{:i})$ for all $i$, and $\mathcal{R}_B^f(A) = A - \mathcal{P}_B^f(A)$.

Given a matrix $W \in \mathbb{R}^{m \times r}$, we introduce the following  notations: 
\begin{align*}
\alpha(W) & = 
\min_{1 \leq j \leq r, x \in \Delta} \left\|  W(:,j) - W(:,\mathcal{J})x \right\|_2 \text{ where } \mathcal{J} = \{1,2,\dots,r\} \backslash \{j\}, \\
\nu(W) & =  \min_{i} ||w_i||_2, \\
\gamma(W) & = \min_{i\neq j} ||w_i-w_j||_2, \\ 
\omega(W) & = \min \left\{\nu(W), \frac{1}{\sqrt{2}} \gamma(W)\right\}, \\ 
K(W) & = ||W||_{1,2} = \max_{i} ||w_i||_2, \quad  \text{and} \\ 
\sigma(W) & = \left\{ 
\begin{array}{cc} 
 \sigma_r(W) = \sigma_{\min}(W) & \text{ if $m \geq r$}, \\
 0 & \text{ if $m < r$.} \\ 
\end{array} 
\right. 
\end{align*}

The parameter $\alpha(W)$ is the minimum distance between a column of $W$ and the convex hull of the other columns of $W$ and the origin.  
It is interesting to notice that, under Assumption~\ref{asssep}, $\alpha(W) > 0$ is a  necessary condition to being able to identify the columns of $W$ among the columns of $M$ (in fact, $\alpha(W) = 0$ means that a column of $W$ belongs to the convex hull of the other columns of $W$ and the origin hence cannot be distinguished from the other data points). It is also a sufficient condition as some algorithms are guaranteed to identify the columns of $W$ when $\alpha(W) > 0$ even in the presence of noise \cite{AGKM11, G13, GL13}.

\subsection{Recovery in the Noiseless Case} \label{noiseless}

In this section, we show that, in the noiseless case, \ref{snpa} is able to perfectly identify the columns of $W$ among the columns of $M$. Although this result is implied by our analysis in the noisy case (see Section~\ref{fullrank}), it gives the intuition behind the working of \ref{snpa}.

\begin{lemma} \label{projn1}
 Let $B \in \mathbb{R}^{m \times s}$, $A \in \mathbb{R}^{m \times k}$, $z \in \Delta^k$, and $f$ satisfy Assumption~\ref{fass1}. Then 
\[
f\left( \mathcal{R}_B^f(Az) \right) \leq f\left( \mathcal{R}_B^f(A) z \right). 
\]
\end{lemma}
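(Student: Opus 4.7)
The plan is to unwind both sides of the inequality in terms of the minimizers defining the projections, and then to exhibit the right-hand side as a feasible (but not necessarily optimal) point for the optimization problem whose optimum gives the left-hand side.

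Concretely, I would write $\mathcal{R}_B^f(Az) = Az - By^\star$ where $y^\star = \argmin_{y\in\Delta} f(Az - By)$, and for each column $i$ of $A$ let $y^{(i)} = \argmin_{y\in\Delta} f(A_{:i} - By)$, so that $\mathcal{P}_B^f(A)\, e_i = B y^{(i)}$. Then $\mathcal{P}_B^f(A)\, z = B \tilde{y}$ with $\tilde{y} := \sum_{i=1}^k z_i\, y^{(i)}$, and therefore $\mathcal{R}_B^f(A)\, z = Az - B\tilde{y}$.

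The key step is to verify $\tilde{y}\in\Delta^s$. Nonnegativity is immediate since $z\ge 0$ and each $y^{(i)}\ge 0$. For the simplex sum, we compute
\[
\sum_{j=1}^s \tilde{y}_j \;=\; \sum_{i=1}^k z_i \sum_{j=1}^s y^{(i)}_j \;\le\; \sum_{i=1}^k z_i \;\le\; 1,
\]
using $y^{(i)}\in\Delta^s$ and $z\in\Delta^k$. Once $\tilde{y}$ is known to be feasible for $\min_{y\in\Delta} f(Az-By)$, optimality of $y^\star$ gives
\[
f\bigl(\mathcal{R}_B^f(Az)\bigr) \;=\; f(Az - By^\star) \;\le\; f(Az - B\tilde{y}) \;=\; f\bigl(\mathcal{R}_B^f(A)\, z\bigr),
\]
which is the claim.

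I do not anticipate any real obstacle: the argument uses only the convexity of the feasible set $\Delta$ (so that the convex-like aggregation $\tilde y$ of per-column minimizers remains feasible) together with the optimality of $y^\star$. The strong convexity, Lipschitz gradient, and $f(0)=0$ properties from Assumption~\ref{fass1} are not needed here — they will be invoked in the subsequent lemmas when quantitative bounds are required — but they are present in the hypotheses because the projection operator $\mathcal{P}_B^f$ is defined in terms of a function $f$ satisfying Assumption~\ref{fass1} (in particular, Assumption~\ref{fass1} ensures that the argmin defining $y^\star$ and each $y^{(i)}$ is attained and well-defined).
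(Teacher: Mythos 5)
Your proof is correct and follows essentially the same route as the paper: the paper likewise collects the per-column minimizers into a matrix $Y$, observes that $Yz\in\Delta$ because each $Y(:,j)\in\Delta$ and $z\in\Delta$, and uses $Yz$ as a feasible point in the minimization defining $\mathcal{R}_B^f(Az)$. Your write-up merely makes the feasibility check of $\tilde y = Yz$ more explicit.
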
 
\begin{proof} 
Let us denote $Y(:,j)= \argmin_{y \in \Delta} f( A(:,j) - By )$ for all  $j$, that is, $\mathcal{R}_B^f(A) = A - BY$. We have 
\begin{align*}
f\left( \mathcal{R}_B^f(Az) \right) & = \min_{y \in \Delta} f\left(Az - By \right)  \leq f\left(Az - BYz \right) =  f\left(  \mathcal{R}_B^f(A) z\right) .
\end{align*} 
The inequality follows from $Yz \in \Delta$, since $Y(:,j) \in \Delta$ $\forall j$ and $z \in \Delta$. 
\end{proof}

\begin{theorem} Let $M = W \, [I_r, \, H'] = WH$ be a separable matrix satisfying Assumption~\ref{asssep} where $W$ has full column rank, and let $f$ satisfy Assumption~\ref{fass1}. 
Then \ref{snpa} applied on matrix $M$ identifies a set of indices $\mathcal{K}$ such that, up to permutation,  $M(:,\mathcal{K}) = W$. 
\end{theorem}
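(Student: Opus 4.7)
The plan is to induct on the iteration counter $k$ with the invariant: after $k$ iterations, $\mathcal{K}$ contains $k$ distinct indices and $M(:,\mathcal{K}) = W(:,\mathcal{J})$ for some $\mathcal{J} \subseteq \{1,\dots,r\}$ of cardinality $k$. The base case $k=0$ is immediate. For the inductive step, set $B = W(:,\mathcal{J})$ and $S_i := \mathcal{R}_B^f(W(:,i))$ for each $i$.

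Two structural observations drive the argument. First, $S_i = 0$ for $i \in \mathcal{J}$ (take $x$ in the definition of $\mathcal{R}_B^f$ to be the vertex of $\Delta$ corresponding to $W(:,i)$), while $S_i \neq 0$ for $i \notin \mathcal{J}$, since otherwise $W(:,i)$ would lie in $\operatorname{conv}(\{0\} \cup B)$, contradicting full column rank of $W$. Second, the residuals $\{S_i : i \notin \mathcal{J}\}$ are pairwise distinct: if $S_i = S_{i'}$ with $i \neq i'$ then $W(:,i) - W(:,i') \in \operatorname{span}(B)$, again contradicting full column rank.

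With these in hand, for any column $M(:,j) = W h_j$ with $h_j \in \Delta^r$, Lemma~\ref{projn1} gives
\[
f(R(:,j)) \;\leq\; f\bigl(\mathcal{R}_B^f(W)\, h_j\bigr) \;=\; f\Bigl(\sum_{i \notin \mathcal{J}} h_j(i)\, S_i\Bigr).
\]
Introducing the ``origin weight'' $\beta_0 := 1 - \sum_{i \notin \mathcal{J}} h_j(i) \geq 0$ (plus $\beta_i := h_j(i)$ for $i \notin \mathcal{J}$) rewrites the argument of $f$ as an honest convex combination of $\{0\} \cup \{S_i\}_{i \notin \mathcal{J}}$, so convexity of $f$ together with $f(0)=0$ yields $f(R(:,j)) \leq \max_{i \notin \mathcal{J}} f(S_i)$. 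This bound is attained: picking any $i^* \notin \mathcal{J}$ maximizing $f(S_i)$, the column $M(:,j^*) = W(:,i^*)$ gives $R(:,j^*) = S_{i^*}$, hence $f(R(:,j^*))$ equals the bound.

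It remains to show that \emph{any} argmaximizer $p$ of $f(R(:,\cdot))$ satisfies $M(:,p) = W(:,i^*)$ for some such $i^* \notin \mathcal{J}$, so that the invariant is preserved. This is where strong convexity enters: strict convexity of $f$, together with the distinctness of $\{0\} \cup \{S_i\}_{i \notin \mathcal{J}}$ established above, forces the two successive inequalities to be \emph{strict} unless $h_p$ puts all its mass on a single index $i^* \notin \mathcal{J}$ which furthermore realizes $\max f(S_i)$. The tie-breaking rule is then irrelevant since every tied column is itself a column of $W$ outside $\mathcal{J}$. After $r$ iterations every column of $M$ lies in $\operatorname{conv}(\{0\} \cup W(:,\mathcal{J}_r))$, so $R=0$ and the loop terminates with $M(:,\mathcal{K}) = W$ up to permutation. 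The main subtlety is the last step, where the two consecutive inequalities must be simultaneously tight; this is precisely where the full-column-rank hypothesis on $W$ is used, through the distinctness of the $S_i$.
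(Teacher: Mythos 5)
Your proof is correct and follows essentially the same route as the paper's: induction on the extracted set, Lemma~\ref{projn1} to pass the projection inside the convex combination, and strong convexity forcing the maximizer of $f(R(:,\cdot))$ to be a vertex, with full column rank guaranteeing the residuals of unextracted columns are nonzero. Your explicit observation that the residuals $\{0\}\cup\{S_i : i\notin\mathcal{J}\}$ are pairwise distinct (again via full column rank) is a point the paper's strict-Jensen step implicitly relies on but does not spell out, so your write-up is if anything slightly more careful on that detail.
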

\begin{proof} 
We prove the result by induction. 

\noindent \emph{First step.} Since the columns of $M$ belong to the convex hull of the columns of $W$ and the origin (the entries of each column of $H$ are nonnegative and sum to at most one), and since a strongly convex function is always maximized at a vertex of a polytope, 
a column of $W$ will be identified at the first step of SNPA (the origin cannot be extracted since, by assumption, it minimizes $f$). More formally, let $h \in \Delta^r$, we have 
\begin{align*} 
f \left( Wh \right) 
& = f\left(   \sum_{k=1}^r   W(:,k) h(k) + \left(1- \sum_{k=1}^r h(k)\right) 0 \right) \\
& \leq \sum_{k=1}^r h(k) f\left(   W(:,k)  \right) \\
& \leq \max_k f\left(   W(:,k)  \right). 
\end{align*}
The first inequality follows from convexity of $f$ and the fact that $f(0) = 0$. 
By strong convexity, see Equation~\eqref{strcon}, the first inequality is always strict unless $h = e_j$ for some $j$ (where $e_j$ is the $j$th column of the identity matrix). 
The second inequality follows from $h \in \Delta$ and the fact that $f(x) > 0$ for any $x \neq 0$. 
Since all columns of $M$ can be written as $Wh$ for some $h \in \Delta^r$, this implies that, at the first step, \ref{snpa} extracts the index corresponding to the column of $W$ maximizing $f$. 

\noindent \emph{Induction step.} Assume \ref{snpa} has extracted some indices $\mathcal{K}$ corresponding to columns of $W$, that is, 
$M(:,\mathcal{K}) = W(:,\mathcal{I})$ for some $\mathcal{I}$. We have for any $h \in \Delta^r$ that 
\begin{align*}
f\left( \mathcal{R}_{W(:,\mathcal{I})}^f(Wh)  \right) 
& \hspace{0.54cm}  \underset{(\text{Lemma~\ref{projn1}})}{\leq}  \hspace{0.54cm} 
f\left(  \mathcal{R}_{W(:,\mathcal{I})}^f(W) h \right) \\
& 
\hspace{0.88cm}  \underset{(\text{Ass.~\ref{fass1}})}{\leq}  \hspace{0.88cm} 
\sum_{k=1}^r h(k) f\left(  \mathcal{R}_{W(:,\mathcal{I})}^f(W(:,k)) \right) \\
& 
 \underset{({h \in \Delta^r}, f(x) > 0 \, \forall x \neq 0)}{\leq} 
\max_k f\left(  \mathcal{R}_{W(:,\mathcal{I})}^f(W(:,k)) \right). 
\end{align*} 
Finally, noting that the residual $R$ in \ref{snpa} is equal to $\mathcal{R}_{W(:,\mathcal{I})}(M)$ and since 
\begin{itemize}
 \item $\mathcal{R}_{W(:,\mathcal{I})}^f(W(:,k)) = 0$ for all $k \in \mathcal{I}$, 
 \item $\mathcal{R}_{W(:,\mathcal{I})}^f(W(:,k)) \neq 0$ for all $k \notin \mathcal{I}$ because $W$ has full column rank, and  
 \item the second inequality is strict unless $h \neq e_j$ for some $j$ by strong convexity~of~$f$, 
\end{itemize}
SNPA identifies a column of $W$ not extracted yet. 
\end{proof} 
\noindent Note that the proof does not need $W$ to be full column rank, but only requires that  \mbox{$\mathcal{R}_{W(:,\mathcal{I})}^f(W(:,k)) \neq 0$} for all $k \notin \mathcal{I}$ for any subset $\mathcal{I}$ of $\{ 1, 2, \dots, r\}$.  This observation will be exploited in Section~\ref{genmat} to show robustness of \ref{snpa} when $W$ is not full column rank.

\subsection{Key Lemmas} \label{keylem}

In the following, we derive the key lemmas to prove robustness of \ref{snpa}.  

More precisely, we subdivide the columns of $W$ into two subsets as follows $W = [A, B]$. 
The columns of the matrix $B$ represent the columns of $W$ which have already been approximately identified by \ref{snpa} while the columns of $A$ are the columns of $W$ yet to be identified. 
The columns of the matrix $\tilde{B}$ correspond to the columns of matrix $\tilde{M}$ already extracted by \ref{snpa}, and we will assume that $||B - \tilde{B}||_{1,2} \leq \bar{\epsilon}$ for some constant $\bar{\epsilon}$. 
Lemmas~\ref{lem1} to \ref{sig2} lead to a lower bound for $\omega\left(\mathcal{R}^f_{\tilde{B}}(A)\right)$ using $\sigma([A,B])$; see Corollary~\ref{omeg}. 
Combined with Lemmas~\ref{projn} to~\ref{lemscf}, these lemmas will imply that if $W$ has full column rank then a column of $W$ not extracted yet (that is, a column of $A$) is identified approximately at the next step of SNPA; see Theorem~\ref{Th2}. Finally, using that result inductively leads to the robustness of SNPA; see Theorem~\ref{threc}.

\begin{lemma} \label{lem1}
For any $B \in \mathbb{R}^{m \times s}$, $x \in \mathbb{R}^{m}$, and $f$ satisfying Assumption~\ref{fass1}, we have 
\[
 \left\| \mathcal{R}_B^f(x) \right\|_2 
 \leq  
 \sqrt{\frac{L}{\mu}} \left\| x \right\|_2  . 
\]
\end{lemma}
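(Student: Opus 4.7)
The plan is to exploit three facts in sequence: (i) $0 \in \Delta$, so the optimal projection value $f(\mathcal{R}_B^f(x))$ cannot exceed $f(x-B\cdot 0) = f(x)$; (ii) by Assumption~\ref{fass1} and the inequality~\eqref{normconv} recalled just before the lemma, we have $\tfrac{\mu}{2}\|y\|_2^2 \leq f(y) \leq \tfrac{L}{2}\|y\|_2^2$ for every $y \in \mathbb{R}^m$; and (iii) chaining these inequalities immediately compares $\|\mathcal{R}_B^f(x)\|_2$ with $\|x\|_2$.

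Concretely, I would first unfold the definition, writing $\mathcal{R}_B^f(x) = x - By^*$ with $y^* = \argmin_{y \in \Delta} f(x-By)$. Since the all-zero vector lies in $\Delta$, optimality of $y^*$ gives $f(\mathcal{R}_B^f(x)) = f(x-By^*) \leq f(x-B\cdot 0) = f(x)$. Then I would apply the lower bound in~\eqref{normconv} to $\mathcal{R}_B^f(x)$ and the upper bound in~\eqref{normconv} to $x$, yielding
\[
\frac{\mu}{2}\,\|\mathcal{R}_B^f(x)\|_2^2 \;\leq\; f(\mathcal{R}_B^f(x)) \;\leq\; f(x) \;\leq\; \frac{L}{2}\,\|x\|_2^2.
\]
Rearranging and taking square roots gives the claimed bound $\|\mathcal{R}_B^f(x)\|_2 \leq \sqrt{L/\mu}\,\|x\|_2$.

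There is really no serious obstacle here: the entire argument is a one-step comparison using that $0$ is feasible in the projection problem and that $f$ is sandwiched between two quadratics. The only thing to be careful about is to ensure the direction of the two inequalities in~\eqref{normconv} is applied correctly (lower quadratic bound on the residual, upper quadratic bound on $x$); mixing them up would produce the reverse inequality. This lemma will later play the role of an analogue of the trivial bound $\|(I - \Pi)x\|_2 \leq \|x\|_2$ for orthogonal projections $\Pi$, with the factor $\sqrt{L/\mu}$ quantifying how far the semi-metric induced by $f$ is from being Euclidean.
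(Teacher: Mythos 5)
Your proof is correct and follows essentially the same argument as the paper: sandwich $f$ between the two quadratics from~\eqref{normconv} and use the feasibility of $0 \in \Delta$. The only cosmetic difference is the order of operations (you apply $0 \in \Delta$ first and then the upper quadratic bound to $x$, while the paper bounds $f(x-By)$ by $\tfrac{L}{2}\|x-By\|_2^2$ inside the minimum before invoking $0 \in \Delta$); the two chains of inequalities are equivalent.
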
 
\begin{proof} Using Equation~\eqref{normconv}, we have 
\begin{align*}
 \left\| \mathcal{R}_B^f(x) \right\|_2^2  
& \leq \frac{2}{\mu} f\left(\mathcal{R}_B^f(x)\right) 
= \frac{2}{\mu} \min_{y \in \Delta}  f(x - By) 
\leq \frac{L}{\mu} \min_{y \in \Delta}  ||x - By||_2^2 
\leq \frac{L}{\mu} ||x||_2^2, 
\end{align*}
since $0 \in \Delta$. 
\end{proof}

\begin{lemma} \label{lemaa} 
Let $B \in \mathbb{R}^{m \times s}$ and $B = \tilde{B} + N$ with $||N||_{1,2} \leq \bar{\epsilon}$, and let $f$ satisfy Assumption~\ref{fass1}. 
Then,  
\[
 \max_j \left\| \mathcal{R}_{\tilde{B}}^f(b_j) \right\|_2 
 \leq
 \sqrt{\frac{L}{\mu}}  \bar{\epsilon}  . 
\]
\end{lemma}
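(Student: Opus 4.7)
The plan is to mimic the argument of Lemma~\ref{lem1} almost verbatim, but with a better feasible point in the minimization defining the projection. Since $B = \tilde{B} + N$ with $\|N\|_{1,2} \leq \bar{\epsilon}$, column $b_j$ of $B$ can be written as $b_j = \tilde{B}\,e_j + n_j$, where $e_j$ is the $j$-th standard basis vector (which lies in $\Delta$) and $\|n_j\|_2 \leq \bar{\epsilon}$. This is exactly the point of the lemma: because the noisy column $\tilde{b}_j$ is itself available as a vertex of the convex hull of the columns of $\tilde{B}$, the projection of $b_j$ onto that convex hull can be no worse than the noise on that single column.

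Concretely, I would first observe that by the definition of $\mathcal{R}_{\tilde{B}}^f$ together with the upper bound in \eqref{normconv},
\[
f\!\left(\mathcal{R}_{\tilde{B}}^f(b_j)\right)
= \min_{y \in \Delta} f(b_j - \tilde{B}y)
\leq f(b_j - \tilde{B}e_j)
= f(n_j)
\leq \tfrac{L}{2}\|n_j\|_2^2
\leq \tfrac{L}{2}\bar{\epsilon}^2.
\]
Then I would apply the lower bound in \eqref{normconv} to convert the $f$-bound back into an $\ell_2$-bound on the residual:
\[
\tfrac{\mu}{2}\left\|\mathcal{R}_{\tilde{B}}^f(b_j)\right\|_2^2
\leq f\!\left(\mathcal{R}_{\tilde{B}}^f(b_j)\right)
\leq \tfrac{L}{2}\bar{\epsilon}^2.
\]
Rearranging gives $\|\mathcal{R}_{\tilde{B}}^f(b_j)\|_2 \leq \sqrt{L/\mu}\,\bar{\epsilon}$, and since this holds for every $j$ the maximum satisfies the same bound.

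There is no real obstacle here; the only point that needs to be flagged is that the bound on $\|n_j\|_2$ is exactly $\bar{\epsilon}$ by the definition of $\|\cdot\|_{1,2}$ given in the notation section, so the factor $\bar{\epsilon}$ (rather than, say, $\sqrt{s}\,\bar{\epsilon}$ or $\|N\|_F$) is what naturally appears. The whole argument is really the same as Lemma~\ref{lem1} with the feasible point $0 \in \Delta$ replaced by $e_j \in \Delta$, which is what allows the bound to scale with the noise $\bar{\epsilon}$ rather than with $\|b_j\|_2$.
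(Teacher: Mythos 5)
Your proof is correct and is essentially identical to the paper's: both choose the feasible point $e_j \in \Delta$ (i.e., $\tilde{b}_j$ itself) in the minimization defining $\mathcal{R}_{\tilde{B}}^f(b_j)$, bound $f(n_j)$ from above by $\tfrac{L}{2}\bar{\epsilon}^2$, and convert back to an $\ell_2$ bound via the lower inequality in \eqref{normconv}. Nothing further is needed.
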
 
\begin{proof}  Using Equation~\eqref{normconv}, we have, for all $j$, 
\begin{align*}
\left\| \mathcal{R}_{\tilde{B}}^f(b_j) \right\|_2^2  
& \leq \frac{2}{\mu} f\left( \mathcal{R}_{\tilde{B}}^f(b_j)   \right)  
 = \frac{2}{\mu} \min_{x \in \Delta} f( b_j - \tilde{B} x   ) \\ 
& \leq \frac{2}{\mu} f( b_j - \tilde{b}_j   ) = \frac{2}{\mu} f( n_j ) \\ 
& \leq \frac{L}{\mu} || n_j ||_2^2 \leq \frac{L}{\mu} \bar{\epsilon}^2.  
\end{align*}

\end{proof}

\begin{lemma} \label{nuf}
Let $A \in \mathbb{R}^{m \times k}$, $B \in \mathbb{R}^{m \times s}$, and $f$ satisfy Assumption~\ref{fass1}. Then, 
\[
\nu \left( \mathcal{R}_B^f(A) \right) \geq  \alpha([A, B]). 
\]
\end{lemma}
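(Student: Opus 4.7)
The plan is to show, for each column $a_j$ of $A$, that $\|\mathcal{R}_B^f(a_j)\|_2 \ge \alpha([A,B])$, and then take the minimum over $j$ to obtain $\nu(\mathcal{R}_B^f(A)) \ge \alpha([A,B])$. The key observation is that even though the projection $\mathcal{P}_B^f(a_j)=By^*$ is defined by minimizing the $f$-semi-metric (not the $\ell_2$ norm), the point $y^*$ still lies in $\Delta$, so $\|a_j - By^*\|_2$ is automatically at least the actual $\ell_2$ distance from $a_j$ to $\conv(B,0)$.

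Concretely, first I would unfold the definitions: by definition of the projection, $\mathcal{R}_B^f(a_j) = a_j - By^*$ with $y^* = \argmin_{y\in\Delta} f(a_j - By) \in \Delta$. Hence
\[
\bigl\|\mathcal{R}_B^f(a_j)\bigr\|_2 \;=\; \|a_j - By^*\|_2 \;\ge\; \min_{y\in\Delta}\|a_j - By\|_2,
\]
which reduces the problem to a purely geometric inequality that no longer involves $f$.

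Second, I would bound the right-hand side using $\alpha$. Writing $[A,B]$ with $a_j$ as a distinguished column, the definition of $\alpha$ applied to $a_j$ gives
\[
\alpha([A,B]) \;\le\; \min_{z\in\Delta}\bigl\|a_j - [A_{-j},B]z\bigr\|_2,
\]
where $A_{-j}$ denotes $A$ with column $j$ removed. Restricting $z$ to put zero mass on the $A_{-j}$ coordinates (which is a feasible restriction since $(0,y)\in\Delta$ whenever $y\in\Delta$) yields
\[
\min_{z\in\Delta}\bigl\|a_j - [A_{-j},B]z\bigr\|_2 \;\le\; \min_{y\in\Delta}\|a_j - By\|_2.
\]
Chaining the two inequalities gives $\alpha([A,B]) \le \|\mathcal{R}_B^f(a_j)\|_2$ for every $j$, and taking the minimum over $j$ produces the claim.

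I do not foresee any serious obstacle: the only subtle point is remembering that $y^*$ minimizes $f$ rather than the Euclidean norm, so the inequality runs the right way (the $f$-projection can only be farther in $\ell_2$ than the true $\ell_2$-projection onto the convex hull, which is exactly what we need). Strong convexity of $f$ plays no role here; Assumption~\ref{fass1} is used only to ensure that $\mathcal{R}_B^f$ is well defined.
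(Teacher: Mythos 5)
Your proof is correct and follows essentially the same route as the paper's: the paper likewise bounds $\|\mathcal{R}_B^f(a_j)\|_2 = \|a_j - By^*\|_2$ from below by $\min_{y\in\Delta}\|a_j - By\|_2$ (since $y^*\in\Delta$) and then by $\alpha([A,B])$ via the definition of $\alpha$, merely writing these two inequalities in a single chain. Your elaboration of the second step (restricting the weights in the definition of $\alpha$ to put zero mass on the remaining columns of $A$) is exactly the implicit content of the paper's terse ``follows directly from the definitions'' argument.
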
 
\begin{proof}
This follows directly from the definitions of $\alpha$ and $\mathcal{R}_B^f$: in fact, 
\begin{align*}
\nu \left( \mathcal{R}_B^f(A) \right)  
& \geq \min_j \min_{y \in \Delta}|| A(:,j) - B y ||_2  \geq \alpha([A,B]).    
\end{align*}
\end{proof}

\begin{lemma} \label{lemA}
Let $Z$ and $\tilde{Z} \in \mathbb{R}^{m \times r}$ satisfy $||Z - \tilde{Z}||_{1,2} \leq \bar{\epsilon}$. 
Then, 
\[
\alpha(\tilde{Z}) \geq \alpha(Z) - 2 \bar{\epsilon} . 
\]
\end{lemma}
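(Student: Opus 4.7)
The plan is a direct triangle-inequality argument: pick an optimal witness for $\alpha(\tilde Z)$, transplant it to $Z$, and control the two perturbation terms using the hypothesis $\|Z - \tilde Z\|_{1,2} \leq \bar\epsilon$.

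Concretely, I would start by letting $j^* \in \{1,\dots,r\}$ and $x^* \in \Delta$ be a minimizer in the definition of $\alpha(\tilde Z)$, so that
\[
\alpha(\tilde Z) \;=\; \bigl\|\tilde Z(:,j^*) - \tilde Z(:,\mathcal{J}^*)\, x^*\bigr\|_2, \qquad \mathcal{J}^* = \{1,\dots,r\} \setminus \{j^*\}.
\]
Since $(j^*, x^*)$ is a feasible (but not necessarily optimal) choice in the definition of $\alpha(Z)$, we have $\alpha(Z) \leq \|Z(:,j^*) - Z(:,\mathcal{J}^*) x^*\|_2$. Adding and subtracting $\tilde Z(:,j^*) - \tilde Z(:,\mathcal{J}^*) x^*$ and applying the triangle inequality yields
\[
\alpha(Z) \;\leq\; \alpha(\tilde Z) + \bigl\|Z(:,j^*) - \tilde Z(:,j^*)\bigr\|_2 + \bigl\|(Z(:,\mathcal{J}^*) - \tilde Z(:,\mathcal{J}^*))\, x^*\bigr\|_2.
\]

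The first perturbation term is at most $\|Z - \tilde Z\|_{1,2} \leq \bar\epsilon$ by definition of the $\|\cdot\|_{1,2}$ norm. For the second, I would use the key observation that, by definition, $\|A\|_{1,2} = \max_{\|y\|_1 \le 1} \|Ay\|_2$, so
\[
\bigl\|(Z(:,\mathcal{J}^*) - \tilde Z(:,\mathcal{J}^*))\, x^*\bigr\|_2
\;\leq\; \|Z - \tilde Z\|_{1,2} \, \|x^*\|_1
\;\leq\; \bar\epsilon,
\]
where the last inequality uses $x^* \in \Delta$, i.e.\ $\|x^*\|_1 \leq 1$. Combining, $\alpha(Z) \leq \alpha(\tilde Z) + 2\bar\epsilon$, which is the claim after rearrangement.

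There is no real obstacle here; the only thing to be careful about is using the $\|\cdot\|_{1,2}$-versus-$\|\cdot\|_1$ duality (rather than, say, a naive column-wise bound that would pick up an extra factor of $r$) and exploiting that $x^*$ lives in the unit simplex so that $\|x^*\|_1 \le 1$. This is exactly the reason the constant in the bound is $2$ and independent of the ambient dimension $r$.
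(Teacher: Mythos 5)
Your proof is correct and follows essentially the same route as the paper: both arguments reduce to the triangle inequality applied to the optimal witness $(j^*,x^*)$, with the two perturbation terms each bounded by $\bar\epsilon$ via $\|N(:,\mathcal{J})x\|_2 \le \|N\|_{1,2}\|x\|_1 \le \bar\epsilon$ for $x\in\Delta$. The only cosmetic difference is that the paper lower-bounds $\alpha(\tilde Z)$ directly by substituting $\tilde Z = Z - N$ inside the minimum, whereas you transplant the minimizer of $\alpha(\tilde Z)$ into the definition of $\alpha(Z)$; these are equivalent.
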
 
\begin{proof} 
Denoting $N = Z -  \tilde{Z}$ and $\mathcal{J} = \{1,2,\dots,k\} \backslash\{j\}$, we have
\begin{align*}
\alpha(\tilde{Z}) 
& = \min_{1 \leq j \leq k, x \in \Delta} ||\tilde{z}_j - \tilde{Z}(:,\mathcal{J})x ||_2 \\
& = \min_{1 \leq j \leq k, x \in \Delta} ||{z}_j - n_j - {Z}(:,\mathcal{J})x + N(:,\mathcal{J}) x||_2 \\ 
& \geq \min_{1 \leq j \leq k, x \in \Delta} ||{z}_j - {Z}(:,\mathcal{J})x ||_2 - ||n_j||_2 - ||N(:,\mathcal{J})x||_2 \\ 
& \geq \min_{1 \leq j \leq k, x \in \Delta} ||{z}_j - {Z}(:,\mathcal{J})x ||_2 - 2 \bar{\epsilon} \\ 
& = \alpha({Z})  - 2 \bar{\epsilon},  
\end{align*}
since $\max_{x \in \Delta} ||N(:,\mathcal{J})x||_2 \leq \max_{||x||_1 \leq 1} ||N(:,\mathcal{J})x||_2 =  ||N(:,\mathcal{J})||_{1,2} \leq \bar{\epsilon}$.
\end{proof}

\begin{corollary} \label{nulem}
Let $A \in \mathbb{R}^{m \times k}$, $B$ and $\tilde{B} \in \mathbb{R}^{m \times s}$ satisfy $||B - \tilde{B}||_{1,2} \leq \bar{\epsilon}$, and $f$ satisfy Assumption~\ref{fass1}. 
Then, 
\[
\nu \left( \mathcal{R}_{\tilde{B}}^f(A) \right) 
\geq  \Big( \alpha \left( [A, {B}] \right) -  \min(s,2) {\bar{\epsilon}} \Big) . 
\] 
\end{corollary}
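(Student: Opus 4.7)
The plan is to combine Lemma~\ref{nuf} with the perturbation estimate from Lemma~\ref{lemA}. Applying Lemma~\ref{nuf} with the exact matrix $\tilde{B}$ yields immediately
\[
\nu\bigl(\mathcal{R}_{\tilde{B}}^f(A)\bigr) \;\geq\; \alpha\bigl([A,\tilde{B}]\bigr),
\]
so the task reduces to lower-bounding $\alpha([A,\tilde{B}])$ in terms of $\alpha([A,B])$. Note that only the second block is perturbed: if we set $Z=[A,B]$ and $\tilde Z=[A,\tilde B]$, then $Z-\tilde Z=[0,\,B-\tilde B]$ and $\|Z-\tilde Z\|_{1,2}\leq \bar\epsilon$.

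For $s\geq 2$, Lemma~\ref{lemA} applied verbatim to $Z$ and $\tilde Z$ gives $\alpha([A,\tilde B])\geq \alpha([A,B])-2\bar\epsilon$, matching $\min(s,2)\bar\epsilon=2\bar\epsilon$. For $s=1$, a sharper bound must be extracted by revisiting the inequality
\[
\|\tilde z_j-\tilde Z(:,\mathcal{J})x\|_2 \;\geq\; \|z_j-Z(:,\mathcal{J})x\|_2 - \|n_j\|_2 - \|N(:,\mathcal{J})x\|_2
\]
used inside Lemma~\ref{lemA}. Since $N$ has a \emph{single} nonzero column (the perturbation of $B$), the two subtracted terms cannot both be nonzero simultaneously: if $j$ indexes the single column of $\tilde B$ then $\mathcal{J}$ consists only of unperturbed columns of $A$, so $N(:,\mathcal{J})=0$; if $j$ indexes a column of $A$ then $n_j=0$. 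In either case, only one factor of $\bar\epsilon$ is lost, yielding $\alpha([A,\tilde B])\geq \alpha([A,B])-\bar\epsilon=\alpha([A,B])-\min(s,2)\bar\epsilon$. The degenerate case $s=0$ is immediate, since $\mathcal{R}_{\tilde{B}}^f(A)=A$ and $\alpha([A,\tilde{B}])=\alpha(A)\leq \nu(A)$ by choosing $x=0$ in the definition of $\alpha$.

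Chaining the two inequalities then gives the claimed bound. The only mildly delicate point is the bookkeeping in the $s=1$ case, where the generic factor of $2$ in Lemma~\ref{lemA} is loose because the noise matrix has a single nonzero column; everything else is a direct composition of previously established lemmas.
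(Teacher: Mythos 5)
Your proof is correct and follows essentially the same route as the paper: combine Lemma~\ref{nuf} with Lemma~\ref{lemA} for $s\geq 2$, handle $s=0$ trivially, and for $s=1$ rerun the steps of Lemma~\ref{lemA} noting that only one of the two $\bar{\epsilon}$ terms can be active since the perturbation is confined to the single column of $B$. Your write-up in fact makes explicit the $s=1$ bookkeeping that the paper only sketches.
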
 
\begin{proof} 
If $s = 0$, the result follows from Lemma~\ref{nuf} ($B$ is an empty matrix). If $s = 1$, then it is easily derived using the same steps as in the proof of Lemma~\ref{lemA} ($B$ only has one column). 
Otherwise, Lemmas~\ref{nuf} and~\ref{lemA} imply that 
\[
\nu \left( \mathcal{R}_{\tilde{B}}^f(A) \right) 
\geq 
\alpha \left( [A, \tilde{B}] \right) 
\geq 
\alpha \left( [A, {B}] \right) - 2 {\bar{\epsilon}}. 
\] 
\end{proof}

\begin{lemma} \label{alsi}
For any $W \in \mathbb{R}^{m \times r}$,  
$\alpha(W) \geq \sigma(W)$. 
\end{lemma}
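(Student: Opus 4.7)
The plan is to split on whether $m \geq r$ or $m < r$. In the latter case the claim is immediate because $\sigma(W)$ is \emph{defined} to be $0$ and $\alpha(W) \geq 0$ always, so nothing to do.

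For the interesting case $m \geq r$, the idea is to rewrite each distance appearing in the definition of $\alpha(W)$ as the norm of a matrix–vector product $Wz$ for a carefully chosen $z \in \mathbb{R}^r$, and then bound from below using the smallest singular value. Concretely, fix $j \in \{1,\dots,r\}$ and $x \in \Delta^{r-1}$, and let $\hat{x} \in \mathbb{R}^r$ be the vector whose entries in positions $\mathcal{J} = \{1,\dots,r\}\setminus\{j\}$ are the entries of $x$, and whose $j$th entry is $0$. Setting $z = e_j - \hat{x}$, I get
\[
W(:,j) - W(:,\mathcal{J})x \;=\; W e_j - W \hat{x} \;=\; W z.
\]

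The key observation is that, by construction, $z_j = 1$ while the other entries of $z$ are nonpositive, so
\[
\|z\|_2^2 \;=\; 1 + \sum_{k \neq j} x_k^2 \;\geq\; 1.
\]
Combined with the standard singular value bound $\|Wz\|_2 \geq \sigma_{\min}(W)\,\|z\|_2$ (valid for any $z$, with the bound being vacuous when $W$ is rank-deficient, in which case $\sigma_{\min}(W)=0$ and the inequality still holds), this yields
\[
\|W(:,j) - W(:,\mathcal{J})x\|_2 \;\geq\; \sigma_{\min}(W)\,\|z\|_2 \;\geq\; \sigma_{\min}(W) \;=\; \sigma(W).
\]
Taking the minimum over $j$ and $x \in \Delta$ gives $\alpha(W) \geq \sigma(W)$.

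There is really no main obstacle here; the only subtlety to flag is that $\Delta$ is the unit simplex with inequality $\sum x_i \leq 1$ (not equality), which is actually what makes the reformulation of the distance as $\|Wz\|_2$ clean, since the slack $1 - \sum_i x_i$ simply corresponds to a zero weight on the origin and does not appear in $z$. The bound $\|z\|_2 \geq 1$ uses nothing more than the $j$th coordinate being $1$, so the argument is tight and short.
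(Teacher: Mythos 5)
Your proof is correct and follows essentially the same route as the paper: rewrite $W(:,j) - W(:,\mathcal{J})x$ as $Wz$ with $z(j)=1$, note $\|z\|_2 \geq 1$, and invoke the smallest singular value bound (the paper phrases this as relaxing the minimization to $\min_{z(j)=1}\|Wz\|_2 \geq \min_{\|z\|_2\geq 1}\|Wz\|_2 = \sigma(W)$, which also absorbs the $m<r$ case without an explicit split). No issues.
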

\begin{proof}
We have 
\begin{align*} 
\alpha(W)  
& = \min_{1 \leq j \leq r} \min_{x \in \Delta} ||W(:,j) - W(:,\mathcal{J}) x||_2 \\ 
& \geq \min_{1 \leq j \leq r} \min_{z \in \mathbb{R}^r, z(j) = 1} ||W z||_2 \\ 
& \geq  \min_{z \in \mathbb{R}^r, ||z||_2 \geq 1} ||W z||_2 = \sigma(W). 
\end{align*} 
\end{proof}

\begin{lemma} \label{sig}
Let $x, y \in \mathbb{R}^{m}$, $B \in \mathbb{R}^{m \times s}$, and $f$ satisfy Assumption~\ref{fass1}. Then 
\[
\left\| \mathcal{R}_B^f(x) \right\|_2
\geq  \sigma([B, x]) 
\quad \text{ and } \quad
\left\| \mathcal{R}_B^f(x) - \mathcal{R}_B^f(y) \right\|_2
\geq 
 \sqrt{2} \; \sigma([B, x, y]). 
\]
\end{lemma}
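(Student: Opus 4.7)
The plan is to express each residual as a linear combination of the columns of the augmented matrix $[B,x]$ (respectively $[B,x,y]$) with a coefficient vector whose norm can be bounded from below by inspection, and then invoke the standard singular-value inequality $\|Mz\|_2 \geq \sigma_{\min}(M)\,\|z\|_2$.

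First I would handle the trivial cases. If $m < s+1$ (respectively $m < s+2$), then $\sigma([B,x]) = 0$ (respectively $\sigma([B,x,y]) = 0$) by definition, so the asserted inequalities hold trivially. So in what follows I may assume $[B,x]$ (respectively $[B,x,y]$) has at least as many rows as columns.

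For the first inequality, by definition of the projection there exists $y^\ast \in \Delta$ with $\mathcal{R}_B^f(x) = x - B y^\ast$. Writing this as a linear combination of all columns of $[B,x]$, we get $\mathcal{R}_B^f(x) = [B,x] z$ where $z = (-y^\ast;\, 1) \in \mathbb{R}^{s+1}$. Since the last coordinate of $z$ equals $1$, we have $\|z\|_2 \geq 1$. Hence
\[
\left\| \mathcal{R}_B^f(x) \right\|_2 = \left\| [B,x] z \right\|_2 \geq \sigma_{\min}([B,x])\, \|z\|_2 \geq \sigma([B,x]).
\]

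For the second inequality, the same reasoning gives $\mathcal{R}_B^f(x) = x - B y_x^\ast$ and $\mathcal{R}_B^f(y) = y - B y_y^\ast$ for some $y_x^\ast, y_y^\ast \in \Delta$. Subtracting,
\[
\mathcal{R}_B^f(x) - \mathcal{R}_B^f(y) = (x-y) - B(y_x^\ast - y_y^\ast) = [B,x,y]\, z,
\]
where $z = (-(y_x^\ast - y_y^\ast);\, 1;\, -1) \in \mathbb{R}^{s+2}$. The last two coordinates of $z$ are $+1$ and $-1$, so $\|z\|_2^2 \geq 2$, and again $\|[B,x,y] z\|_2 \geq \sigma_{\min}([B,x,y])\,\|z\|_2 \geq \sqrt{2}\,\sigma([B,x,y])$. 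There is no real obstacle here; the only mild subtlety is remembering the dimension convention in the definition of $\sigma$ so that the bound is vacuous in the wide-matrix case, and noting that the argument does not use Assumption~\ref{fass1} beyond the fact that the minimizer $y^\ast$ lies in $\Delta$, which is precisely how $\mathcal{R}_B^f$ is defined.
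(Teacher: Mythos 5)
Your proof is correct and follows essentially the same route as the paper's: both express the residual (or difference of residuals) as the augmented matrix $[B,x]$ (resp.\ $[B,x,y]$) applied to a coefficient vector with a unit entry (resp.\ entries $+1$ and $-1$), giving $\|z\|_2 \geq 1$ (resp.\ $\geq \sqrt{2}$), and then invoke the smallest-singular-value bound. Your explicit handling of the wide-matrix case where $\sigma = 0$ by convention is a small point the paper leaves implicit, but the argument is the same.
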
 
\begin{proof}
Let us denote $z_x = \argmin_{z \in \Delta} f(x-Bz)$ and $z_y = \argmin_{z \in \Delta} f(y-Bz)$, we have 
\begin{align*}
|| \mathcal{R}_B^f(x) ||_2 
 = || x-Bz_x ||_2   
 \geq \min_{z \in \mathbb{R}^{p}} ||x + B z||_2 
 & = \min_{z \in \mathbb{R}^{p+1}, z(1) = 1} ||[x ,B] z||_2 \\
 & \geq \min_{||z||_2 \geq 1} ||[x, B] z||_2 \geq \sigma([x,B]) , 
\end{align*} 
and
\begin{align*}
|| \mathcal{R}_B^f(x) - \mathcal{R}_B^f(y) ||_2 
& = || (x-Bz_x) - (y-Bz_y) ||_2  \\ 
& \geq \min_{z \in \mathbb{R}^{p}} ||x - y + B z||_2 \\ 
& = \min_{z \in \mathbb{R}^{p+2}, z(1) = 1, z(2) = -1} ||[x, y ,B] z||_2 \\ 
& \geq \min_{||z||_2 \geq \sqrt{2}} ||[x, y, B] z||_2 = \sqrt{2} \, \sigma([x,y,B]) . 
\end{align*} 
\end{proof}

\begin{lemma}[Singular Value Perturbation \cite{GV96}, Cor.~8.6.2] \label{weyl} 
Let $\tilde{B} = B + N \in \mathbb{R}^{m \times s}$ with $s \leq m$. Then, for all $1 \leq i \leq s$, 
\[
\left| \sigma_i(B) - \sigma_i(\tilde{B}) \right| \; \leq \; \sigma_{\max}(N) = ||N||_2 \; \leq \; 
\sqrt{s} \; ||N||_{1,2}. 
\]
\end{lemma}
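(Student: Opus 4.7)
The first inequality $|\sigma_i(B) - \sigma_i(\tilde{B})| \leq \|N\|_2$ is the classical Weyl/Mirsky perturbation bound for singular values, and since the statement explicitly cites Golub and Van Loan (Corollary 8.6.2), I would simply invoke that reference rather than reprove it. If a self-contained argument were wanted, the plan would be to use the Courant--Fischer characterization
\[
\sigma_i(B) \;=\; \max_{\dim(S) = i} \; \min_{x \in S,\, \|x\|_2 = 1} \|Bx\|_2,
\]
combined with the reverse triangle inequality $\bigl|\|\tilde{B}x\|_2 - \|Bx\|_2\bigr| \leq \|Nx\|_2 \leq \|N\|_2$ valid for any unit vector $x$. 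Replacing $B$ by $\tilde{B}$ inside the min--max and using this pointwise bound gives $\sigma_i(\tilde{B}) \leq \sigma_i(B) + \|N\|_2$, and symmetry of the roles of $B$ and $\tilde{B}$ yields the absolute-value inequality.

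The substantive piece that the paper actually needs is the second inequality, $\|N\|_2 \leq \sqrt{s}\,\|N\|_{1,2}$, which converts a spectral-norm bound into a bound involving the maximum column norm (the quantity controlled by Assumption~\ref{asssep}). My plan is to pick an arbitrary $x \in \mathbb{R}^s$ with $\|x\|_2 = 1$, expand $Nx = \sum_{i=1}^s x_i N(:,i)$, and apply the triangle inequality to get
\[
\|Nx\|_2 \;\leq\; \sum_{i=1}^s |x_i|\,\|N(:,i)\|_2 \;\leq\; \Bigl(\max_i \|N(:,i)\|_2\Bigr)\,\|x\|_1 \;=\; \|N\|_{1,2}\,\|x\|_1.
\]
Then Cauchy--Schwarz gives $\|x\|_1 \leq \sqrt{s}\,\|x\|_2 = \sqrt{s}$, so taking the supremum over unit $x$ produces the stated bound.

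There is no real obstacle here: both ingredients are standard, and the only thing to be careful about is the direction of the norm inequality (the factor $\sqrt{s}$ comes from $\ell_1$ being bounded by $\sqrt{s}$ times $\ell_2$ in $\mathbb{R}^s$, not the reverse). The hypothesis $s \leq m$ is used only so that all $s$ singular values $\sigma_i(B)$ and $\sigma_i(\tilde B)$ are well defined in the first inequality; it plays no role in the column-norm bound.
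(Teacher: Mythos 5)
Your proposal is correct and matches the paper's treatment: the paper gives no proof of this lemma at all, simply citing Golub and Van Loan for the classical perturbation bound, and the remaining step $\|N\|_2 \leq \sqrt{s}\,\|N\|_{1,2}$ is exactly the elementary triangle-inequality-plus-Cauchy--Schwarz argument you give. Both ingredients are standard and your handling of the $\sqrt{s}$ factor and the role of $s \leq m$ is accurate.
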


\begin{lemma} \label{sig2}
Let $x, y \in \mathbb{R}^{m}$, $B$ and $\tilde{B} \in \mathbb{R}^{m \times s}$ be 
such that $||\tilde{B} - B||_{1,2} \leq \bar{\epsilon}$, and $f$ satisfy Assumption~\ref{fass1}. 
Then 
\[
\left\| \mathcal{R}_{\tilde{B}}^f({x}) - \mathcal{R}_{\tilde{B}}^f({y}) \right\|_2
\geq 
\sqrt{2} \, \left( \sigma([B, x, y]) - \sqrt{s} \; \bar{\epsilon} \right) . 
\]
\end{lemma}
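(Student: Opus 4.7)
The plan is to combine Lemma~\ref{sig} (applied with $\tilde{B}$ in place of $B$) with the singular-value perturbation bound of Lemma~\ref{weyl}, exploiting the fact that the perturbation from $[B,x,y]$ to $[\tilde{B},x,y]$ only acts on the first $s$ columns.

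First I would apply the second inequality of Lemma~\ref{sig} directly with $\tilde{B}$ playing the role of $B$. This is justified because the hypothesis of Lemma~\ref{sig} asks only that the matrix in the subscript of $\mathcal{R}^f$ be some matrix in $\mathbb{R}^{m\times s}$, and it produces the bound
\[
\left\| \mathcal{R}_{\tilde{B}}^f(x) - \mathcal{R}_{\tilde{B}}^f(y) \right\|_2 \;\geq\; \sqrt{2}\,\sigma([\tilde{B},x,y]).
\]
So the task reduces to showing $\sigma([\tilde{B},x,y]) \geq \sigma([B,x,y]) - \sqrt{s}\,\bar{\epsilon}$.

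Next, I would view $[\tilde{B},x,y] = [B,x,y] + [N,0,0]$ where $N = \tilde{B}-B$ and the two trailing $0$'s are zero columns of $\mathbb{R}^m$. The key observation — and the only mildly non-routine step — is that padding $N$ with zero columns does not change its spectral norm, so $\|[N,0,0]\|_2 = \|N\|_2 \leq \sqrt{s}\,\|N\|_{1,2} \leq \sqrt{s}\,\bar{\epsilon}$ by the last inequality of Lemma~\ref{weyl}. Applying Lemma~\ref{weyl} to the perturbation from $[B,x,y]$ to $[\tilde{B},x,y]$ then yields
\[
\bigl|\sigma_i([B,x,y]) - \sigma_i([\tilde{B},x,y])\bigr| \;\leq\; \|[N,0,0]\|_2 \;\leq\; \sqrt{s}\,\bar{\epsilon}
\]
for every $i$, and in particular for $i$ equal to the smallest dimension, which gives $\sigma([\tilde{B},x,y]) \geq \sigma([B,x,y]) - \sqrt{s}\,\bar{\epsilon}$. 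Chaining this with the inequality from the previous paragraph gives the claimed bound.

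The one subtlety is that Lemma~\ref{weyl} is stated under the assumption that the matrix has at most as many columns as rows, so strictly speaking one needs $s+2 \leq m$. If instead $s+2 > m$, then by the definition of $\sigma$ in Section~\ref{assdef} both $\sigma([B,x,y])$ and $\sigma([\tilde{B},x,y])$ are $0$, so the right-hand side $\sqrt{2}(\sigma([B,x,y]) - \sqrt{s}\,\bar{\epsilon}) \leq 0$ while the left-hand side is nonnegative, and the inequality holds trivially. I expect no other obstacle; the argument is essentially a bookkeeping combination of Lemmas~\ref{sig} and~\ref{weyl}, with the padding-by-zero-columns observation being the only place where care is needed to obtain $\sqrt{s}$ rather than $\sqrt{s+2}$.
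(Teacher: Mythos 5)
Your proof is correct and follows exactly the paper's own route: the paper's proof of this lemma is simply ``This follows from Lemmas~\ref{sig} and~\ref{weyl},'' and your argument---applying Lemma~\ref{sig} with $\tilde{B}$ and then controlling $\sigma([\tilde{B},x,y])$ via the zero-padded perturbation $[N,0,0]$ in Lemma~\ref{weyl}---is precisely the intended elaboration, including the observation that yields $\sqrt{s}$ rather than $\sqrt{s+2}$.
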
 
\begin{proof}
This follows from Lemmas~\ref{sig} and \ref{weyl}. 
\end{proof}

\begin{corollary} \label{omeg}
Let $A \in \mathbb{R}^{m \times k}$, $B$ and $\tilde{B} \in \mathbb{R}^{m \times s}$ satisfy $||\tilde{B} - B||_{1,2} \leq \bar{\epsilon}$, and $f$ satisfy Assumption~\ref{fass1}. Then, 
\[
\omega \left( \mathcal{R}_{\tilde{B}}^f(A) \right) 
\geq  \Big( \sigma \left( [A, {B}] \right) - \sqrt{2 s} \; {\bar{\epsilon}} \Big) , 
\] 
\end{corollary}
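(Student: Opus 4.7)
The plan is to exploit the definition $\omega(W) = \min\{\nu(W),\,\gamma(W)/\sqrt{2}\}$ and lower-bound $\nu(\mathcal{R}_{\tilde{B}}^f(A))$ and $\gamma(\mathcal{R}_{\tilde{B}}^f(A))$ separately, using the lemmas already proved in this subsection.

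For the $\nu$ part, I would chain Corollary~\ref{nulem} with Lemma~\ref{alsi}: the former gives $\nu(\mathcal{R}_{\tilde{B}}^f(A)) \geq \alpha([A,B]) - \min(s,2)\,\bar{\epsilon}$, and the latter gives $\alpha([A,B]) \geq \sigma([A,B])$. A one-line check shows $\min(s,2) \leq \sqrt{2s}$ for every integer $s \geq 0$ (trivial for $s\in\{0,1\}$; for $s\geq 2$ use $2 \leq \sqrt{2s}$). Hence
\[
\nu(\mathcal{R}_{\tilde{B}}^f(A)) \;\geq\; \sigma([A,B]) - \sqrt{2s}\,\bar{\epsilon}.
\]

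For the $\gamma$ part, fix any pair of distinct columns $a_i, a_j$ of $A$. Lemma~\ref{sig2} applied with $x = a_i$, $y = a_j$ and perturbed matrix $\tilde{B}$ yields
\[
\bigl\|\mathcal{R}_{\tilde{B}}^f(a_i) - \mathcal{R}_{\tilde{B}}^f(a_j)\bigr\|_2 \;\geq\; \sqrt{2}\bigl(\sigma([B, a_i, a_j]) - \sqrt{s}\,\bar{\epsilon}\bigr).
\]
Since $[B, a_i, a_j]$ is a column-submatrix of $[A,B]$, deleting columns can only increase (or leave unchanged) the smallest singular value — by the variational characterization $\sigma_{\min}(M) = \min_{\|x\|_2 = 1}\|Mx\|_2$, restricting the minimum to vectors supported on the retained columns can only enlarge it. Thus $\sigma([B,a_i,a_j]) \geq \sigma([A,B])$ whenever the latter is nonzero; the degenerate case where $[A,B]$ has more columns than rows is trivial since then $\sigma([A,B]) = 0$ and the target bound is nonpositive (while $\omega$ of a matrix is always nonnegative). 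Taking the minimum over pairs gives $\gamma(\mathcal{R}_{\tilde{B}}^f(A)) \geq \sqrt{2}\bigl(\sigma([A,B]) - \sqrt{s}\,\bar{\epsilon}\bigr)$, and therefore $\gamma(\mathcal{R}_{\tilde{B}}^f(A))/\sqrt{2} \geq \sigma([A,B]) - \sqrt{s}\,\bar{\epsilon} \geq \sigma([A,B]) - \sqrt{2s}\,\bar{\epsilon}$.

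Combining the two bounds through $\omega = \min\{\nu,\,\gamma/\sqrt{2}\}$ yields the claim. I do not anticipate any step being a real obstacle here; the corollary is essentially an assembly of Corollary~\ref{nulem}, Lemma~\ref{alsi}, and Lemma~\ref{sig2}. The only mildly delicate points are the elementary inequality $\min(s,2) \leq \sqrt{2s}$ (used to unify the two bounds under a common $\sqrt{2s}\,\bar{\epsilon}$ error term) and the treatment of the edge cases in the definition of $\sigma$ when the matrix has too few rows, both of which are handled by a short case distinction.
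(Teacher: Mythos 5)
Your proposal is correct and follows essentially the same route as the paper: Lemma~\ref{sig2} together with the fact that $\sigma$ of a column-submatrix dominates $\sigma$ of the full matrix handles the $\gamma$ term, Corollary~\ref{nulem} chained with Lemma~\ref{alsi} handles the $\nu$ term, and the elementary inequality $\min(s,2)\leq\sqrt{2s}$ unifies the two error terms. The only difference is that you spell out the submatrix singular-value argument and the degenerate case $m<k+s$ explicitly, which the paper leaves implicit.
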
 
\begin{proof}
Using Lemma~\ref{sig2}, we have 
\begin{align*}
\frac{1}{\sqrt{2}} \gamma \left( \mathcal{R}_{\tilde{B}}^f(A) \right) 
& = \frac{1}{\sqrt{2}}\min_{i \neq j} ||\mathcal{R}_{\tilde{B}}^f(a_i) - \mathcal{R}_{\tilde{B}}^f(a_j)||_2 
 \geq \sigma([B, a_i, a_j]) - \sqrt{s} \; \bar{\epsilon}   
 \geq \sigma([A, B]) - \sqrt{s} \; \bar{\epsilon}. 
\end{align*}
Using Corollary~\ref{nulem} and Lemma~\ref{alsi}, we have  
\begin{align*}
\nu \left( \mathcal{R}_{\tilde{B}}^f(A) \right)  
 & = \min_{i} \left\|\mathcal{R}_{\tilde{B}}^f(a_i)\right\|_2 
 \geq  \alpha \left( [A, {B}] \right) -  \min(s,2) {\bar{\epsilon}} 
 \geq  \sigma \left( [A, {B}] \right) -  \min(s,2) {\bar{\epsilon}} . 
\end{align*}
Since $\sqrt{2s} \geq \min(s,2)$ for any $s \geq 0$, the proof is complete. 
\end{proof}

\begin{lemma} \label{projn}
 Let $B \in \mathbb{R}^{m \times s}$, $A \in \mathbb{R}^{m \times k}$, $n \in \mathbb{R}^{m}$, $z \in \Delta^k$, and $f$ satisfy Assumption~\ref{fass1}. Then 
\[
f\left( \mathcal{R}_B^f(Az + n) \right) \leq f\left( \mathcal{R}_B^f(Az) + n \right) 
\quad \text{ and } \quad
 f\left( \mathcal{R}_B^f(Az + n) \right) \leq  f\left( \mathcal{R}_B^f(A) z + n \right). 
\]
\end{lemma}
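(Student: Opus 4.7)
The plan is to mirror the proof of Lemma~\ref{projn1}: both inequalities are instances of the fact that $f(\mathcal{R}_B^f(x)) = \min_{y \in \Delta} f(x - By)$ is, by definition, upper-bounded by $f(x - B\bar{y})$ for any admissible $\bar{y} \in \Delta$. So for each right-hand side I will exhibit a specific element of $\Delta$ that reproduces it, and conclude by the minimality characterization of the projection.

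For the first inequality, I would set $y_0 = \argmin_{y \in \Delta} f(Az - By)$, so that by definition $\mathcal{R}_B^f(Az) = Az - By_0$ and hence
\[
\mathcal{R}_B^f(Az) + n \; = \; (Az + n) - B y_0.
\]
Since $y_0 \in \Delta$, plugging it into the minimization that defines $\mathcal{R}_B^f(Az+n)$ yields
\[
f\bigl(\mathcal{R}_B^f(Az+n)\bigr) \; = \; \min_{y \in \Delta} f\bigl((Az+n) - By\bigr) \; \leq \; f\bigl((Az+n) - By_0\bigr) \; = \; f\bigl(\mathcal{R}_B^f(Az) + n\bigr).
\]

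For the second inequality, I would define the matrix $Y$ column-wise by $Y(:,j) = \argmin_{y \in \Delta} f(A(:,j) - By)$, so that $\mathcal{R}_B^f(A) = A - BY$. Then
\[
\mathcal{R}_B^f(A)\, z + n \; = \; (Az + n) - B(Yz).
\]
The key step is to observe that $Yz \in \Delta$: since each $Y(:,j) \in \Delta$ and $z \in \Delta^k$, we have $Yz \geq 0$ and $\|Yz\|_1 \leq \sum_j z_j \|Y(:,j)\|_1 \leq \sum_j z_j \leq 1$. Using $Yz \in \Delta$ as a feasible point in the minimization defining $\mathcal{R}_B^f(Az+n)$ gives the desired bound, exactly as in the first case.

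Neither inequality requires strong convexity or the Lipschitz gradient property; only the fact that $0 \in \Delta$ (and that compositions of simplex vectors remain in $\Delta$) is used, which is purely combinatorial. There is really no obstacle here — the whole content is recognizing that $Yz$ lies in $\Delta$ and that $y_0 \in \Delta$, after which both inequalities are one-line consequences of the variational definition of $\mathcal{R}_B^f$.
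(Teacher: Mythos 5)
Your proof is correct and follows exactly the paper's argument: both inequalities are obtained by plugging the feasible points $y_0 \in \Delta$ (the paper's $y^*$) and $Yz \in \Delta$ into the minimization defining $\mathcal{R}_B^f(Az+n)$. Your added observation that only convexity-free, purely combinatorial facts about $\Delta$ are used is accurate but does not change the substance.
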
 
\begin{proof} 
Let us denote $y^* = \argmin_{y \in \Delta} f( Az - By )$ and 
$Y(:,j)= \argmin_{y \in \Delta} f( A(:,j) - By )$ for all $j$,  
that is, $\mathcal{R}_B^f(A) = A - BY$. We have 
\begin{align*}
f\left( \mathcal{R}_B^f(Az + n) \right) 
& = \min_{y \in \Delta} f\left(Az  + n - By \right)  
\leq f\left(Az - By^* + n \right) =  f\left(  \mathcal{R}_B^f(Az)  + n \right) , 
\end{align*} 
and 
\begin{align*}
f\left( \mathcal{R}_B^f(Az + n) \right) 
& = \min_{y \in \Delta} f\left(Az  + n - By \right)  
 \leq f\left(Az - BYz + n \right) =  f\left(  \mathcal{R}_B^f(A) z + n \right) ,
\end{align*}
where the inequality follows from $y = Yz \in \Delta$, since \mbox{$Y(:,j) \in \Delta$} $\forall j$ and $z \in \Delta$. 
\end{proof}

Let us also recall two useful lemmas from \cite{GV12}. 

\begin{lemma}[\cite{GV12}, Lemma 3] \label{fbound} 
Let the function $f$ satisfy Assumption~\ref{fass1}.  
Then, for any $||x||_2 \leq K$ and $||n||_2 \leq \epsilon \leq K$, we have 
\begin{equation} 
f(x) - \epsilon K L \leq f(x+n) 
\leq f(x) + \frac{3}{2} \epsilon K L. \nonumber 
\end{equation} 
\end{lemma}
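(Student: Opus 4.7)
The plan is to derive both bounds from the two-sided inequality characterizing a convex function with $L$-Lipschitz gradient, combined with a Cauchy-Schwarz estimate for the linear term $\nabla f(x)^T n$. Since $f$ attains its global minimum at $0$ we have $\nabla f(0) = 0$, so Lipschitz continuity of $\nabla f$ gives $\|\nabla f(x)\|_2 = \|\nabla f(x) - \nabla f(0)\|_2 \leq L\|x\|_2 \leq LK$. Together with Cauchy-Schwarz this yields $|\nabla f(x)^T n| \leq \|\nabla f(x)\|_2 \|n\|_2 \leq LK\epsilon$, which is the central numerical ingredient.

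For the upper bound, I would invoke the standard descent inequality (a direct consequence of $L$-Lipschitz gradient), recalled in the excerpt just after Assumption~\ref{fass1}:
\[
f(x+n) \;\leq\; f(x) + \nabla f(x)^T n + \frac{L}{2}\|n\|_2^2.
\]
The linear term is at most $LK\epsilon$ in absolute value, and since $\|n\|_2 \leq \epsilon \leq K$ we can bound $\frac{L}{2}\|n\|_2^2 \leq \frac{L}{2}\epsilon^2 \leq \frac{L}{2}\epsilon K$. Adding the two contributions gives exactly $f(x+n) \leq f(x) + \tfrac{3}{2}\epsilon K L$.

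For the lower bound, the same pair of inequalities from the excerpt gives (using strong convexity, not just Lipschitz gradient)
\[
f(x+n) \;\geq\; f(x) + \nabla f(x)^T n + \frac{\mu}{2}\|n\|_2^2 \;\geq\; f(x) + \nabla f(x)^T n,
\]
since the quadratic term is nonnegative. The Cauchy-Schwarz estimate $\nabla f(x)^T n \geq -LK\epsilon$ then yields $f(x+n) \geq f(x) - \epsilon K L$. This is why the lower bound is sharper than the upper bound by the factor $\tfrac{3}{2}$: strong convexity lets us drop the quadratic term on the lower side, whereas on the upper side the Lipschitz-gradient descent lemma contributes an extra $\tfrac{L}{2}\|n\|_2^2$ that we must keep.

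There is no serious obstacle. The only judgment call is making sure to use strong convexity (rather than merely the Lipschitz-gradient lower bound $f(x+n) \geq f(x) + \nabla f(x)^T n - \frac{L}{2}\|n\|_2^2$) for the lower inequality, since the latter would give a symmetric constant $\tfrac{3}{2}$ on both sides; strong convexity flips the sign of the quadratic term and lets it be dropped, producing the stated asymmetric constants $1$ and $\tfrac{3}{2}$.
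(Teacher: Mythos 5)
Your proof is correct, and it is essentially the argument used for this lemma in the cited reference [GV12] (the present paper does not reprove it, only cites it): bound $\|\nabla f(x)\|_2 \leq L\|x\|_2 \leq LK$ via $\nabla f(0)=0$, apply Cauchy--Schwarz to the linear term, use the $L$-descent inequality for the upper bound, and drop the nonnegative quadratic term (convexity suffices) for the lower bound. Your closing remark correctly explains the asymmetry of the constants.
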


\begin{lemma}[\cite{GV12}, Lemma 2] \label{lemscf}
Let $Z = [P, Q]$ where $P \in \mathbb{R}^{m \times k}$ and $Q \in \mathbb{R}^{m \times s}$, and let $f$ satisfy Assumption~\ref{fass1}. 
If $\nu(P) > 2 \sqrt{\frac{L}{\mu}}  K(Q)$, 
then, for any $0 \leq \delta \leq \frac{1}{2}$, 
\[ 
f^* \quad = \quad \max_{x \in \Delta} f(Zx) \; \text{ such that }  x_i \leq 1 - {\delta} \, \text{ for } 1 \leq i \leq k, 
\] 
satisfies 
\begin{align*}
f^* 
& \leq \max_i f(p_i) -  \frac{1}{2} \, \mu \,  (1-\delta) \, \delta \, \omega(P)^2  .  
\end{align*}
Moreover, the maximum is attained only at point $x$ such that $x_i = 1 - {\delta} \, \text{ for some } 1 \leq i \leq k$. 
\end{lemma}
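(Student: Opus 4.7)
The plan is to exploit strong convexity of $f$, which forces the maximum of $f \circ Z$ over the polytope $F = \{x \in \Delta : x_i \leq 1-\delta,\ 1 \leq i \leq k\}$ to be attained at a vertex. Direct enumeration gives four families of vertices: (a) the origin $0$; (b) $e_j$ for $j > k$; (c) the truncated vertices $(1-\delta)e_i$ with $i \leq k$; and (d) the mixed vertices $(1-\delta)e_i + \delta e_j$ with $i \leq k$ and $j \neq i$. Only types (c) and (d) satisfy $x_i = 1-\delta$ for some $i \leq k$, so strict inequalities in (a) and (b) will deliver the ``moreover'' statement.

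Case (c) is a one-line application of \eqref{strcon} to $(1-\delta)p_i + \delta \cdot 0$ using $f(0)=0$, producing a loss of $\tfrac{\mu}{2}(1-\delta)\delta\,\|p_i\|_2^2 \geq \tfrac{\mu}{2}(1-\delta)\delta\,\omega(P)^2$ relative to $f(p_i)$. Case (d) with $j \leq k$ applies \eqref{strcon} to $(1-\delta)p_i + \delta p_j$ and uses $\omega(P)^2 \leq \gamma(P)^2/2$ to extract a loss of at least $\mu(1-\delta)\delta\,\omega(P)^2$, better than needed. For families (a) and (b), \eqref{normconv} gives $f(q) \leq \tfrac{L}{2}K(Q)^2$ for any column $q$ of $Q$ and $\max_i f(p_i) \geq \tfrac{\mu}{2}\nu(P)^2$; combined with the hypothesis $\nu(P) > 2\sqrt{L/\mu}\,K(Q)$ this forces $f(q), f(0) < \tfrac{1}{4}\max_i f(p_i)$. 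Since $(1-\delta)\delta \leq \tfrac{1}{4}$ for $\delta \in [0,\tfrac{1}{2}]$ and $\omega(P) \leq \nu(P)$, these values fall \emph{strictly} below $\max_i f(p_i) - \tfrac{\mu}{2}(1-\delta)\delta\,\omega(P)^2$, establishing both the bound and the ``only at'' conclusion for these vertices.

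The main obstacle is the mixed sub-case of (d), where $x = (1-\delta)e_i + \delta e_j$ with $i \leq k < j$ and $Zx = (1-\delta)p_i + \delta q$ for some column $q$ of $Q$. Strong convexity yields
\[
f(Zx) \leq f(p_i) - \delta\bigl[f(p_i) - f(q)\bigr] - \tfrac{\mu}{2}(1-\delta)\delta\,\|p_i - q\|_2^2,
\]
and the two negative contributions must jointly exceed $\tfrac{\mu}{2}(1-\delta)\delta\,\omega(P)^2$. I split on $\|p_i - q\|_2$: if $\|p_i - q\|_2 \geq \omega(P)$ the quadratic penalty alone suffices. Otherwise, $K(Q) < \nu(P)/2$ (a consequence of the hypothesis, since $L \geq \mu$ is forced by Assumption~\ref{fass1}) together with the reverse triangle inequality gives $\|p_i - q\|_2 > \nu(P)/2$, so the deficit $\omega(P)^2 - \|p_i - q\|_2^2$ is less than $\tfrac{3}{4}\nu(P)^2$; meanwhile $f(p_i) - f(q) \geq \tfrac{\mu}{2}\nu(P)^2 - \tfrac{L}{2}K(Q)^2 > \tfrac{3\mu}{8}\nu(P)^2$ by the same hypothesis, which covers the deficit. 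A brief epilogue confirms that any non-vertex maximizer would lie in the relative interior of a face of $F$ whose vertices are all of type (a) or (b), an impossibility given the strict inequalities already established, completing the ``moreover'' conclusion.
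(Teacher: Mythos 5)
This lemma is imported from \cite{GV12} and the paper does not reprove it, so your argument has to stand on its own. The main inequality is established correctly: reducing to the vertices of $F$ (the maximum of the convex function $f\circ Z$ over a polytope is attained at some vertex), the four-way enumeration (a)--(d), and the strong-convexity estimates at each vertex type are all sound. In particular, your treatment of the mixed vertices $(1-\delta)e_i+\delta e_j$ with $j>k$ --- splitting on whether $\|p_i-q\|_2\geq\omega(P)$ and using $K(Q)<\nu(P)/2$ (from the hypothesis together with $L\geq\mu$) in the other branch --- is exactly what the hypothesis $\nu(P)>2\sqrt{L/\mu}\,K(Q)$ is for, and the arithmetic ($f(p_i)-f(q)>\tfrac{3\mu}{8}\nu(P)^2$ versus a deficit below $\tfrac{3\mu}{8}\nu(P)^2$) checks out. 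One small elision: in the branch $\|p_i-q\|_2\geq\omega(P)$ you also need $f(q)\leq f(p_i)$ so that the convex combination $(1-\delta)f(p_i)+\delta f(q)$ does not exceed $\max_i f(p_i)$; this follows from the same estimates $f(q)\leq\tfrac{L}{2}K(Q)^2<\tfrac{\mu}{8}\nu(P)^2\leq\tfrac{1}{4}\max_i f(p_i)$ you invoke elsewhere, but it should be stated.

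The genuine gap is in the epilogue for the ``moreover'' clause. It is false that a non-vertex maximizer must lie in the relative interior of a face of $F$ all of whose vertices are of type (a) or (b). Take $k\geq 2$ and $0<\delta<1/2$: the set $G=\{x\in F:\ x_1+x_2=1,\ x_l=0\ (l\neq 1,2)\}$ is a face of $F$ (the argmax of the linear functional $x_1+x_2$), both of its vertices $(1-\delta)e_1+\delta e_2$ and $\delta e_1+(1-\delta)e_2$ are of type (d), yet every point of its relative interior has $x_1,x_2\in(\delta,1-\delta)$ and hence $x_i<1-\delta$ for \emph{all} $i\leq k$. Your strict inequalities at type (a)/(b) vertices say nothing about such points, so the argument as written does not exclude a maximizer there. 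Closing this requires an additional idea: if the maximum were attained in the relative interior of a positive-dimensional face $G$, convexity forces $f\circ Z$ to be constant on $G$, and strong convexity of $f$ (inequality \eqref{strcon} applied to midpoints) then forces $Zx$ to be constant on $G$; hence some vertex of $G$ --- necessarily of type (c)/(d) after your exclusions --- has the same image $Zx^*$ as the interior maximizer, and one must still rule out that two such vertices carrying their weight $1-\delta$ on \emph{different} indices $i_1\neq i_2\leq k$ satisfy $(1-\delta)(p_{i_1}-p_{i_2})=\delta(z_2-z_1)$ for the corresponding secondary columns $z_1,z_2$. That last step needs $\omega(P)>0$ and the hypothesis on $K(Q)$ again, and is not a formality; as it stands, the ``only at'' conclusion is unproven.
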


\subsection{Robustness of \ref{snpa} when $W$ has Full Column Rank} \label{fullrank}

Theorem~\ref{Th2} below shows that if \ref{snpa} has already extracted some columns of $W$ up to error $\bar{\epsilon}$, then the next extracted column of $\tilde{M}$ will be close to a column of $W$ not extracted yet. This will allow us to prove inductively that \ref{snpa} is robust to noise; 
see Theorem~\ref{threc}. 

\begin{theorem} \label{Th2} 
Let 
\begin{itemize} 

\item $f$ satisfy Assumption~\ref{fass1}, with  strong convexity parameter $\mu$, and its gradient have Lipschitz constant $L$.  

\item $\tilde{M}$ satisfy Assumption~\ref{asssep} with $\tilde{M} = M + N = WH + N$, $W = [A, \, B]$, $A \in \mathbb{R}^{m \times k}$, $B \in \mathbb{R}^{m \times s}$, $||N||_{1,2} \leq \epsilon$, and $H = [I_r, H'] \in \mathbb{R}^{r \times n}_+$ where $H(:,j) \in \Delta$ for all $j$. 
 
\item $\tilde{B} \in \mathbb{R}^{m \times s}$ satisfy 
\[
||B - \tilde{B}||_{1,2} \leq \bar{\epsilon} = C \epsilon, \quad \text{ for some $C \geq 0$}.
\]

\item $W = [A, B]$ be such that $\sigma(W) = \sigma > 0$. We denote $\alpha = \alpha(W)$, and $K = K(W)$. 
 
\item $\epsilon$ be sufficiently small so that 
\[
\epsilon < \min \left( \frac{{\sigma}^2 \mu^{3/2}}{144 K L^{3/2}}, \frac{\alpha \mu}{4 L C}, \frac{\sigma}{2 C \sqrt{2 s}} \right). 
\]

\end{itemize}
Then the index $i$ corresponding to a column $\tilde{m}_i$ of $\tilde{M}$ that maximizes the function $f\left(\mathcal{R}^f_{\tilde{B}}(.)\right)$  satisfies 
\begin{equation} \label{eqmdel2} 
m_i = Wh_i = [A, B]h_i, 
\;  \text{ where } h_i(\ell) \geq 1 - \delta \text{ with }  1 \leq \ell \leq k, 
\end{equation} 
and $\delta =  \frac{72  \epsilon K L^{3/2}}{\sigma^2 \mu^{3/2}}$, 
which implies  
\begin{equation} \label{errbound}
||\tilde{m}_i - w_{\ell}||_2 = ||\tilde{m}_i - a_{\ell}||_2  
\leq \epsilon + 2 K \delta
 = \epsilon \left( 1 + 144 \frac{K^2}{\sigma^2} \frac{L^{3/2}}{\mu^{3/2}} \right).   
\end{equation} 
\end{theorem}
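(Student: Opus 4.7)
The plan is to mimic the robustness proof of SPA from \cite{GV12}, but using the nonlinear projection machinery of the preceding lemmas. The quantity to control is $f(\mathcal{R}^f_{\tilde{B}}(\tilde{m}_i))$ as a function of the representation $h_i \in \Delta^r$ of $m_i = Wh_i$. I aim to show that any $h_i$ placing mass less than $1-\delta$ on every $A$-coordinate yields a strictly smaller value than a $\tilde{m}_i$ corresponding to a pure column of $A$, so that the maximizer selected in step~3 of \ref{snpa} must have the structure claimed in \eqref{eqmdel2}.

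First, I would push the projection through the noisy decomposition. Writing $\tilde{m}_i = [A,B]h_i + n_i$ and applying Lemma~\ref{projn} gives
\[
f(\mathcal{R}^f_{\tilde{B}}(\tilde{m}_i)) \;\leq\; f\bigl([\mathcal{R}^f_{\tilde{B}}(A), \mathcal{R}^f_{\tilde{B}}(B)]\,h_i + n_i\bigr).
\]
The key observation is that the columns of $\mathcal{R}^f_{\tilde{B}}(B)$ are of size at most $\sqrt{L/\mu}\,\bar{\epsilon}$ by Lemma~\ref{lemaa}, while the columns of $\mathcal{R}^f_{\tilde{B}}(A)$ are uniformly bounded away from zero via Corollary~\ref{nulem}, Corollary~\ref{omeg} and Lemma~\ref{alsi}: namely $\nu(\mathcal{R}^f_{\tilde{B}}(A)) \geq \sigma - 2\bar{\epsilon}$ and $\omega(\mathcal{R}^f_{\tilde{B}}(A)) \geq \sigma - \sqrt{2s}\,\bar{\epsilon}$. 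The hypothesis $\epsilon < \sigma/(2C\sqrt{2s})$ keeps $\omega(\mathcal{R}^f_{\tilde{B}}(A))$ at least $\sigma/2$, and the hypothesis $\epsilon < \alpha\mu/(4LC)$ ensures the separation condition $\nu(P) > 2\sqrt{L/\mu}\,K(Q)$ required by Lemma~\ref{lemscf}, with $P = \mathcal{R}^f_{\tilde{B}}(A)$ and $Q = \mathcal{R}^f_{\tilde{B}}(B)$.

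Second, I would apply Lemma~\ref{lemscf}: for any $h_i \in \Delta^r$ with no $A$-coordinate exceeding $1-\delta$, the value $f([\mathcal{R}^f_{\tilde{B}}(A), \mathcal{R}^f_{\tilde{B}}(B)]h_i)$ is at most $\max_\ell f(\mathcal{R}^f_{\tilde{B}}(a_\ell)) - \tfrac{\mu}{2}(1-\delta)\delta\,\omega(P)^2$. Absorbing the additive $n_i$ with Lemma~\ref{fbound} contributes an $O(\epsilon K L)$ correction. On the other hand, choosing the index $i^*$ with $m_{i^*} = a_{\ell^*}$ for some $\ell^*$ attaining $\max_\ell f(\mathcal{R}^f_{\tilde{B}}(a_\ell))$ (such $i^*$ exists because $H = [I_r, H']$), another application of Lemma~\ref{fbound} gives the lower bound $f(\mathcal{R}^f_{\tilde{B}}(\tilde{m}_{i^*})) \geq \max_\ell f(\mathcal{R}^f_{\tilde{B}}(a_\ell)) - O(\epsilon K L)$. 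Forcing the gap of Lemma~\ref{lemscf}, computed at $\omega(P) \geq \sigma/2$, to strictly exceed the combined noise perturbation yields a constraint of the form $\delta \gtrsim \epsilon KL^{3/2}/(\sigma^2\mu^{3/2})$; the stated choice $\delta = 72\epsilon KL^{3/2}/(\sigma^2\mu^{3/2})$ is exactly what the constants demand, and the first hypothesis on $\epsilon$ ensures $\delta \leq 1/2$ so that Lemma~\ref{lemscf} is applicable.

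Finally, the error bound \eqref{errbound} is purely geometric: since $h_i \in \Delta^r$ with $h_i(\ell) \geq 1 - \delta$, the remaining mass satisfies $\sum_{j\neq \ell} h_i(j) \leq \delta$, and each column of $W$ has norm at most $K$, so $\|m_i - a_\ell\|_2 \leq 2K\delta$; adding $\|n_i\|_2 \leq \epsilon$ by the triangle inequality yields \eqref{errbound}. The main obstacle is the bookkeeping of the several perturbative terms, namely those from $\bar\epsilon = C\epsilon$ entering through $\mathcal{R}^f_{\tilde{B}}$, from the data noise $n_i$, and from the Lipschitz-type bound in Lemma~\ref{fbound}, and the verification that the three constraints on $\epsilon$ in the hypotheses are, with the right constants, exactly strong enough to make the chain of inequalities close and the gap of Lemma~\ref{lemscf} strictly positive.
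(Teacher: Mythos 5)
Your proposal follows essentially the same route as the paper's proof: the same use of Lemma~\ref{projn} to push the projection through $m_i = Wh_i + n_i$, the same verification that the hypotheses on $\epsilon$ guarantee $\nu(\mathcal{R}^f_{\tilde{B}}(A)) > 2\sqrt{L/\mu}\,K(\mathcal{R}^f_{\tilde{B}}(B))$ and $\omega(\mathcal{R}^f_{\tilde{B}}(A)) \geq \sigma/2$ so that Lemma~\ref{lemscf} applies, the same comparison (via Lemma~\ref{fbound}) against the perturbed pure columns $\tilde{a}_j$ to derive a contradiction, and the same geometric derivation of \eqref{errbound}. The only part left implicit is the explicit constant-chasing showing that $\delta = 72\epsilon K L^{3/2}/(\sigma^2\mu^{3/2})$ makes the gap $\tfrac{1}{8}\mu\delta(1-\delta)\sigma^2$ dominate the accumulated $\tfrac{9}{2}\epsilon K L^{3/2}/\mu^{1/2}$ noise terms, which is exactly the bookkeeping you flag.
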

\begin{proof}
First note that $\epsilon \leq \frac{\sigma^2 \mu^{3/2}}{144 K L^{3/2}}$ implies $\delta \leq \frac{1}{2}$. 
Then, let us show that 
\[
\nu \left( \mathcal{R}^f_{\tilde{B}}(A) \right) 
> 2 \sqrt{\frac{L}{\mu}} K \left( \mathcal{R}^f_{\tilde{B}}(B) \right) , 
\]
so that Lemma~\ref{lemscf} will apply to $P = \mathcal{R}^f_{\tilde{B}}(A)$ and $Q = \mathcal{R}^f_{\tilde{B}}(B)$. 
Since $||B- \tilde{B}||_{1,2} \leq \bar{\epsilon}$, by Lemma~\ref{lemaa}, we have $K \left( \mathcal{R}^f_{\tilde{B}}(B) \right) \leq \sqrt{\frac{L}{\mu}} \bar{\epsilon}$. Therefore, 
\begin{align*}
\nu \left( \mathcal{R}^f_{\tilde{B}}(A) \right) 
& \hspace{0.3cm} \underset{(\text{Corollary~\ref{nulem}})}{\geq}  \alpha - 2 \bar{\epsilon} \\ 
& \hspace{0cm}\underset{\left(\bar{\epsilon} = C \epsilon  < \frac{\alpha \mu}{4 L}, L \geq \mu\right)}{>} \hspace{0cm} 
2 \frac{L}{\mu}  \bar{\epsilon} \\ 
& \hspace{0.45cm} \underset{(\text{Lemma~\ref{lemaa}})}{\geq}  \hspace{0.4cm} 2 \sqrt{\frac{L}{\mu}} K \left( \mathcal{R}^f_{\tilde{B}}(B) \right). 
\end{align*} 
Let us also show that $\omega \left( \mathcal{R}^f_{\tilde{B}}(A) \right) 
\geq \frac{\sigma}{2}$.  
By Corollary~\ref{omeg} and the assumption that $\bar{\epsilon} = C \epsilon \leq \frac{\sigma}{2 \sqrt{2s}}$, we have 
\begin{equation}  \label{epssi} 
\omega \left( \mathcal{R}^f_{\tilde{B}}(A) \right) 
\geq 
\sigma -  \sqrt{2s} \bar{\epsilon} 
\geq \frac{\sigma}{2}. 
\end{equation}

We can now prove Equation~\eqref{eqmdel2} by contradiction. Assume the extracted index, say the $i$th, which maximizes 
$f\left(\mathcal{R}^f_{\tilde{B}}(.)\right)$ among the columns of $\tilde{M}$,  satisfies $\tilde{m}_i = m_i + n_i = Wh_i + n_i$ with $h_i(\ell) < 1 - \delta$ for $1 \leq \ell \leq k$.  
We have 
\begin{align}
f \left( \mathcal{R}^f_{\tilde{B}} (\tilde{m}_i) \right) 
& \hspace{0.40cm} \underset{(\text{Lemma~\ref{projn}})}{\leq} \hspace{0.40cm} 
f\left( \mathcal{R}^f_{\tilde{B}} (W) h_i + n_i \right)  \nonumber \\
& \hspace{0.40cm} \underset{(\text{Lemma~\ref{fbound}})}{\leq} \hspace{0.40cm} 
f\left(\mathcal{R}^f_{\tilde{B}} (W) h_i\right)  + \frac{3}{2} \epsilon K\left( \mathcal{R}^f_{\tilde{B}}(A) \right) L \nonumber \\ 
& \hspace{0.68cm} \underset{(\text{Ass.~\ref{fass1}})}{<}  \hspace{0.68cm} 
\max_{x \in \Delta^r, x(\ell) \leq 1-\delta \, 1 \leq \ell \leq k} f\left(\mathcal{R}^f_{\tilde{B}} (W) x\right)  + \frac{3}{2} \epsilon K \sqrt{\frac{L}{\mu}}  L  \nonumber \\ 
& \hspace{0.40cm} \underset{(\text{Lemma~\ref{lemscf}})}{\leq} \hspace{0.40cm}
 \max_j f\left(\mathcal{R}^f_{\tilde{B}} (a_j)\right)  
- \frac{1}{2} \mu \delta (1-\delta) \omega\left( \mathcal{R}^f_{\tilde{B}}(A) \right)^2   + \frac{3}{2} \epsilon K {\frac{L^{3/2}}{\mu^{1/2}}}    \nonumber \\
& \hspace{0.07cm} \underset{(\text{Lem.~\ref{projn},\, Eq.~\ref{epssi}})}{\leq} \hspace{0.07cm} 
\max_j f\left(\mathcal{R}^f_{\tilde{B}} (\tilde{a}_j) - n_j\right)  
-  \frac{1}{8} \mu \delta (1-\delta) \sigma^2  + \frac{3}{2} \epsilon K {\frac{L^{3/2}}{\mu^{1/2}}}, \nonumber \\
& \hspace{0.40cm} \underset{(\text{Lemma~\ref{fbound}})}{\leq} \hspace{0.40cm} 
\max_j f\left(\mathcal{R}^f_{\tilde{B}} (\tilde{a}_j)\right)  
-  \frac{1}{8} \mu \delta (1-\delta) \sigma^2  + \frac{9}{2} \epsilon K {\frac{L^{3/2}}{\mu^{1/2}}}, \label{eqmp} 
\end{align} 
where $\tilde{a}_j$ is the perturbed column of $M$ corresponding to $w_j$ (that is, $\tilde{a}_j = w_j + n_j$).   
The second inequality follows from Lemma~\ref{fbound} since, by convexity of $||.||_2$ and by Lemma~\ref{lem1}, we have   
\[
\left\|\mathcal{R}^f_{\tilde{B}} (W) h_i \right\|_2 
\leq 
\max_i \left\|\mathcal{R}^f_{\tilde{B}} (w_i)\right\|_2 
\leq 
\sqrt{\frac{L}{\mu}} K. 
\] 
The third inequality is strict since, by strong convexity of $f$, the maximum is attained at a vertex with $x(\ell) = 1-\delta$ for some $1 \leq \ell \leq k$ at optimality while we assumed $h_i(\ell) < 1 - \delta$ for $1 \leq \ell \leq k$. 
The last inequality follows from Lemma~\ref{fbound} since 
\[
\left\|\mathcal{R}^f_{\tilde{B}} (\tilde{a}_j)\right\|_2 
\leq 
\sqrt{\frac{L}{\mu}} ||\tilde{a}_j||_2 
\leq  
\sqrt{\frac{L}{\mu}} (K+\epsilon) \leq 2 \sqrt{\frac{L}{\mu}} K . 
\] 
As $\delta \leq \frac{1}{2}$, we have
\begin{equation} \nonumber 
 \frac{1}{8} \mu \delta (1-\delta) \sigma^2 
\geq   \frac{1}{16} \mu \sigma^2 \delta 
= \frac{1}{16} \mu \sigma^2 \left( \frac{72  \epsilon K L^{3/2}}{ \sigma^2 \mu^{3/2}} \right) 
=  \frac{9}{2} \epsilon K {\frac{L^{3/2}}{\mu^{3/2}}}. 
\end{equation}
Combining this inequality with Equation~\eqref{eqmp}, we obtain 
$f\left(\mathcal{R}^f_{\tilde{B}}(\tilde{m}_i)\right) 
< 
\max_j f\left(\mathcal{R}^f_{\tilde{B}}(\tilde{a}_j)\right)$, 
a contradiction since $\tilde{m}_i$ should maximize $f\left(\mathcal{R}^f_{\tilde{B}}(.)\right)$ among the columns of $\tilde{M}$ and the $\tilde{a}_j$'s are among the columns of $\tilde{M}$.

To prove Equation~\eqref{errbound}, we use Equation~\eqref{eqmdel2} and observe that 
\begin{equation} \nonumber
m_i = (1-\delta') w_{\ell} + \sum_{k\neq {\ell}} \beta_k w_{k} \quad \text{ for some ${\ell}$ and } 1-\delta' \geq 1-\delta,  
\end{equation} 
so that $\sum_{k\neq {\ell}} \beta_k \leq \delta' \leq \delta$. Therefore, 
\begin{align*}
\left\|m_i - w_{\ell}\right\|_2 
& = \left\| - \delta' w_{\ell} + \sum_{k\neq {\ell}} \beta_k w_k \right\|_2  
 \leq 2 \delta' \max_j ||w_j||_2 \leq 2 \delta' K \leq 2 K \delta, 
\end{align*} 
 which gives
\[
||\tilde{m}_i - w_{\ell}||_2 \leq ||(\tilde{m}_i-m_i) + (m_i - w_{\ell})||_2 \leq \epsilon + 2 K \delta, 
\] 
for some $1 \leq {\ell} \leq k$.  
\end{proof}

We can now prove robustness of \ref{snpa} when $W$ has full column rank. 

\begin{theorem} \label{threc} 
Let $\tilde{M} = W H + N \in \mathbb{R}^{m \times n}$ satisfy Assumption~\ref{asssep} with $m \geq r$, and let $f$ satisfy Assumption~\ref{fass1} with strong convexity parameter $\mu$ and its gradient with Lipschitz constant $L$. 
Let us denote $K = K(W)$, $\sigma =\sigma(W)$, and let $||N||_{1,2} \leq \epsilon$ with  
\begin{equation} \label{sepnmfbound} 
\epsilon < \min \left(\frac{\alpha \mu}{4 L}, \frac{\sigma}{2 \sqrt{2r}} \right) \left( 1 + 144   \frac{K^2}{\sigma^2}  \frac{L^{3/2}}{\mu^{3/2}} \right)^{-1} . 
\end{equation} 
Let also $\mathcal{K}$ be the index set of cardinality $r$ extracted by Algorithm~\ref{snpa}. Then there exists a permutation $\pi$ of $\{1,2,\dots,r\}$ such that 
\begin{align*}
\max_{1 \leq j \leq r} ||\tilde{m}_{\mathcal{K}(j)} - w_{\pi(j)} ||_2 
& \leq \bar{\epsilon} 
= \epsilon \left(1+ 144 \frac{K^2}{\sigma^2} \frac{L^{3/2}}{\mu^{3/2}} \right) .  
\end{align*}
\end{theorem}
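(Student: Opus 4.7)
The plan is to derive Theorem~\ref{threc} by induction on the iteration counter $k$ of \ref{snpa}, with Theorem~\ref{Th2} providing the inductive step. The invariant to maintain is: after $k$ iterations, the extracted index set $\mathcal{K}_k$ has $|\mathcal{K}_k| = k$ and there exists an injection $\pi_k : \mathcal{K}_k \to \{1,\dots,r\}$ such that $\|\tilde{m}_j - w_{\pi_k(j)}\|_2 \leq \bar{\epsilon}$ for every $j \in \mathcal{K}_k$, where $\bar{\epsilon} = \epsilon \left(1 + 144 \frac{K^2}{\sigma^2} \frac{L^{3/2}}{\mu^{3/2}}\right) = C\epsilon$. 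Extending $\pi_r$ to all of $\{1,\dots,r\}$ then yields the conclusion.

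For the base case $k=0$, the invariant is vacuous and the matrix $B$ in Theorem~\ref{Th2} is empty, so $s=0$ and the setup reduces to showing that the very first column extracted by \ref{snpa} is close to some $w_\ell$; this is Theorem~\ref{Th2} applied with $A = W$ and $\tilde{B}$ empty. For the inductive step, assume the invariant holds at iteration $k < r$. Partition $W = [A, B]$ (up to column permutation) so that $B$ collects the $s = k$ columns of $W$ already approximately identified (indexed by $\pi_k(\mathcal{K}_k)$) and $A$ collects the remaining $r-k$ columns. Let $\tilde{B} = \tilde{M}(:,\mathcal{K}_k)$; by the inductive hypothesis, $\|B - \tilde{B}\|_{1,2} \leq \bar{\epsilon} = C\epsilon$. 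Since step~5 of \ref{snpa} computes exactly $\mathcal{R}^f_{\tilde{B}}(\tilde{M})$, the column selected at iteration $k+1$ is the one maximizing $f\bigl(\mathcal{R}^f_{\tilde{B}}(\cdot)\bigr)$ among the columns of $\tilde{M}$. Theorem~\ref{Th2} then guarantees that this column lies within $C\epsilon$ of some $w_\ell \in A$, extending the invariant to $k+1$.

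The main obstacle is verifying that the hypotheses of Theorem~\ref{Th2} hold uniformly across all iterations $k = 0,1,\dots,r-1$, because $s$ grows along the way. Specifically, Theorem~\ref{Th2} demands
\[
\epsilon < \min\!\left(\tfrac{\sigma^2 \mu^{3/2}}{144 K L^{3/2}},\; \tfrac{\alpha \mu}{4 L C},\; \tfrac{\sigma}{2 C \sqrt{2s}}\right),
\]
and I need to show each of these is implied by the hypothesis~\eqref{sepnmfbound}. The second and third follow immediately by using the worst case $s \leq r$, which is exactly the reason $\sqrt{2r}$ appears in~\eqref{sepnmfbound}. For the first, I will use that $\sigma(W) \leq \|W\|_2 \leq \sqrt{r}\,K(W)$ (a standard singular-value bound), so that $\sigma \leq \sqrt{r}\,K \leq 2\sqrt{2r}\,K$; combined with $C \geq 144 K^2 L^{3/2}/(\sigma^2 \mu^{3/2})$, the bound $\epsilon \leq \sigma/(2\sqrt{2r}\,C)$ from~\eqref{sepnmfbound} chain-implies $\epsilon \leq \sigma^2 \mu^{3/2}/(144 K L^{3/2})$. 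A second, lesser obstacle is confirming that the new extracted column is distinct from those already chosen; this follows because Theorem~\ref{Th2} gives proximity to a column of $A$, i.e., a column of $W$ not already matched, so $\pi_{k+1}$ is automatically injective.

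Finally, iterating the inductive step $r$ times and noting that $|\mathcal{K}_r| = r$ forces $\pi_r$ to be a bijection on $\{1,\dots,r\}$, from which the maximum error bound $\max_j \|\tilde{m}_{\mathcal{K}(j)} - w_{\pi(j)}\|_2 \leq \bar{\epsilon}$ follows directly. The routine arithmetic verifications (the chain of implications between the noise bounds) should be spelled out briefly but carry no conceptual difficulty beyond what Theorem~\ref{Th2} already establishes.
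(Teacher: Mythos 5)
Your proposal is correct and follows essentially the same route as the paper: an induction whose step is Theorem~\ref{Th2} applied with $C = 1 + 144 \frac{K^2}{\sigma^2}\frac{L^{3/2}}{\mu^{3/2}}$, $B$ the columns of $W$ already identified (empty at the first step), and a verification that the bound~\eqref{sepnmfbound} implies all three noise conditions of Theorem~\ref{Th2} uniformly in $s \leq r$. The only cosmetic difference is in the arithmetic for the first condition, where the paper uses the sharper $\sigma \leq K$ while you use $\sigma \leq \sqrt{r}\,K$; both chains are valid.
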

\begin{proof} 
The result follows using Theorem~\ref{Th2} inductively with 
\[
C = \left( 1 + 144 \frac{K^2}{\sigma^2} \frac{L^{3/2}}{\mu^{3/2}} \right). 
\]
The matrix  $B$ in Theorem~\ref{Th2} corresponds to the columns of $W$ extracted so far by \ref{snpa} (note that at the first step $B$ is the empty matrix) while the columns of $A$ correspond to the columns of $W$ not extracted yet. By Theorem~\ref{Th2}, if the columns of $B$ are at distance at most $\bar{\epsilon} = C \epsilon$ of some columns of $W$, then the next extracted column will be a column of the matrix $A$ and be at distance at most $\bar{\epsilon}$ of another column of $W$. The results therefore follows by induction. 
  
Note that $\epsilon < \frac{\sigma}{2 C \sqrt{2r}}$ implies $\epsilon < \frac{ \sigma^2 \mu^{3/2}}{144 K L^{3/2}}$ since $\sigma \leq K$ and 
\[
\epsilon 
< \frac{\sigma}{C} 
= \frac{\sigma}{1 + 144 \frac{K^2}{\sigma^2} \frac{L^{3/2}}{\mu^{3/2}}} 
\leq \frac{ \sigma^3 \mu^{3/2}}{144 K^2 L^{3/2}}  
\leq \frac{ \sigma^2 \mu^{3/2}}{144 K L^{3/2}}. 
\]
\end{proof}

Finally, \ref{snpa} with $f(.) = ||.||_2^2$ satisfies the same error bound as \ref{spa} when $W$ has full column rank: 
\begin{theorem}  \label{th4}
Let $\tilde{M}$ be a near-separable matrix satisfying  Assumption~\ref{asssep} where $W$ has full column rank. 
\mbox{If $\epsilon \leq \mathcal{O} \left( \,  \frac{  \sigma_{\min}(W)  }{\sqrt{r} \kappa^2(W)} \right)$}, 
then Algorithm~\ref{snpa} with $f(.) = ||.||_2^2$ identifies all the columns of $W$ up to error $\mathcal{O} \left( \epsilon \, \kappa^2(W) \right)$. 
\end{theorem}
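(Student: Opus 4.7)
The plan is to derive Theorem~\ref{th4} directly from Theorem~\ref{threc} by specializing to $f(x) = \|x\|_2^2$ and simplifying the resulting bounds. This is essentially a bookkeeping exercise: Theorem~\ref{threc} already gives a robustness statement for SNPA under any $f$ satisfying Assumption~\ref{fass1}, so all that remains is to compute the relevant constants for the squared $\ell_2$ norm and reduce the expression to the standard form in terms of $\sigma_{\min}(W)$ and $\kappa(W)$.

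First I would verify that $f(x) = \|x\|_2^2$ satisfies Assumption~\ref{fass1} with strong convexity parameter $\mu = 2$ and Lipschitz gradient constant $L = 2$, so that $L/\mu = 1$. The minimizer is clearly the origin and $f(0) = 0$. Next I would record the two basic inequalities that allow passage from the notation used in Theorem~\ref{threc} to the notation in Theorem~\ref{th4}: by Lemma~\ref{alsi} we have $\alpha(W) \geq \sigma(W) = \sigma_{\min}(W)$, and since $\|w_i\|_2 = \|W e_i\|_2 \leq \sigma_{\max}(W)$ for every $i$, we have $K(W) \leq \sigma_{\max}(W)$, hence $K(W)/\sigma(W) \leq \kappa(W)$.

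Plugging $\mu = L = 2$ into Equation~\eqref{sepnmfbound}, the noise tolerance condition becomes
\[
\epsilon < \min\left(\frac{\alpha(W)}{4}, \frac{\sigma_{\min}(W)}{2\sqrt{2r}}\right)\left(1 + 144\,\frac{K(W)^2}{\sigma_{\min}(W)^2}\right)^{-1}.
\]
Using $\alpha(W) \geq \sigma_{\min}(W)$ and $K(W) \leq \sigma_{\max}(W)$, the right-hand side is bounded below by a quantity of order $\frac{\sigma_{\min}(W)}{\sqrt{r}\,\kappa(W)^2}$, which matches the hypothesis of Theorem~\ref{th4}. Likewise the error bound $\bar{\epsilon} = \epsilon\bigl(1 + 144\,K(W)^2/\sigma_{\min}(W)^2\bigr)$ from Theorem~\ref{threc} becomes $\mathcal{O}(\epsilon\,\kappa(W)^2)$. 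The conclusion of Theorem~\ref{th4} then follows at once.

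There is no real obstacle here, since the technical work has been done in Theorems~\ref{Th2} and~\ref{threc}. The only thing worth being careful about is the direction of the inequalities: we need lower bounds on $\alpha(W)$ and upper bounds on $K(W)$ in terms of $\sigma_{\min}(W)$ and $\sigma_{\max}(W)$ respectively, so that the hypothesis on $\epsilon$ in Theorem~\ref{threc} is implied by the weaker-looking hypothesis in Theorem~\ref{th4}. Once these substitutions are made, the $\mathcal{O}$ notation absorbs all the universal constants and the statement falls out immediately.
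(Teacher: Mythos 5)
Your proposal is correct and follows essentially the same route as the paper, whose proof of this theorem is a one-liner invoking $\mu = L = 2$ for $f(x)=\|x\|_2^2$, $K(W)\leq\sigma_{\max}(W)$, $\alpha(W)\geq\sigma(W)$ (Lemma~\ref{alsi}), and the full-column-rank robustness result (Theorem~\ref{Th2}/\ref{threc}). Your explicit bookkeeping of the constants in Equation~\eqref{sepnmfbound} and of the direction of the inequalities is exactly the intended argument, just spelled out in more detail.
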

\begin{proof}
This follows directly from $K(W) \le \sigma_{\max}(W)$, $\alpha(W) \geq \sigma(W)$ (Lemma~\ref{alsi}), Theorem~\ref{Th2} and the fact that $\mu = L = 2$ for $f(x) = ||x||_2^2$.
\end{proof}

\subsection{Generalization to Column-Rank-Deficient $W$}  \label{genmat}

SNPA can be applied to a broader class of near-separable matrices: 
in fact, the assumption that $W$ must be full column rank in Theorem~\ref{th4} is \emph{not necessary} for \ref{snpa} to be robust to noise. 
We now define the parameter $\beta(W)$, which will replace $\sigma(W)$ in the robustness analysis of \ref{snpa}. 
\paragraph{Definition (Parameter $\beta$)} 
Given $W \in \mathbb{R}^{m \times r}$ and $f$ satisfying Assumption~\ref{fass1}, we define 
\[
\nu_{\beta}(W) =  \min_{j} \left\|\mathcal{R}_{W(:,\mathcal{J})}^f(w_j)\right\|_2 \quad \text{ with } \mathcal{J} = \{ 1,2,\dots,r \} \backslash \{j\},   
\]
\[
\gamma_{\beta}(W) = \min_{i \neq j} \left\|\mathcal{R}^f_{W(:,\mathcal{J})}(w_i) - \mathcal{R}^f_{W(:,\mathcal{J})}(w_j) \right\|_2  \quad \text{ with } \mathcal{J} \subseteq \{ 1,2,\dots,r \} \backslash \{i,j\},   
\] 
and 
\[
\beta(W) = \min \left( \nu_{\beta}(W)  , \frac{1}{\sqrt{2}}\gamma_{\beta}(W) \right). 
\]

\noindent The quantity $\beta(W)$ is the minimum between 
\begin{itemize} 

\item  the norms of the residuals of the projections of the columns of $W$ onto the convex hull of the other columns of $W$, and 

\item  the distances between these residuals. 

\end{itemize} 
For example, if the columns of $W$ are the vertices of a triangle in the plane ($W \in \mathbb{R}^{2 \times 3})$, \ref{spa} can only extract two columns 
(the residual will be equal to zero after two steps because $\rank(W) = 2$) 
while, in most cases, $\beta(W) > 0$ and \ref{snpa} is able to identify correctly the three vertices even in the presence of noise. 
In particular, $\beta(W)$ is larger than $\sigma(W)$ hence is positive for matrices with full column rank:  
\begin{lemma} \label{sba} 
For any $W \in \mathbb{R}^{m \times r}$,   
$\beta(W) \geq \sigma(W)$. 
\end{lemma}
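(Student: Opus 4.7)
The plan is to bound each of the two quantities entering the definition $\beta(W) = \min(\nu_\beta(W), \gamma_\beta(W)/\sqrt{2})$ separately by $\sigma(W)$, directly via Lemma~\ref{sig}.

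For $\nu_\beta(W)$: fix any $j$ and set $B = W(:,\mathcal{J})$ with $\mathcal{J} = \{1,\dots,r\}\setminus\{j\}$. Applying the first inequality in Lemma~\ref{sig} with $x = w_j$ yields $\|\mathcal{R}^f_{W(:,\mathcal{J})}(w_j)\|_2 \geq \sigma([W(:,\mathcal{J}),\, w_j])$. Since $[W(:,\mathcal{J}),\, w_j]$ is just $W$ with its columns permuted, $\sigma([W(:,\mathcal{J}),\, w_j]) = \sigma(W)$. Taking the minimum over $j$ gives $\nu_\beta(W) \geq \sigma(W)$.

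For $\gamma_\beta(W)$: fix $i \neq j$ and any $\mathcal{J} \subseteq \{1,\dots,r\}\setminus\{i,j\}$, and apply the second inequality in Lemma~\ref{sig} with $B = W(:,\mathcal{J})$, $x = w_i$, $y = w_j$. This gives $\|\mathcal{R}^f_{W(:,\mathcal{J})}(w_i) - \mathcal{R}^f_{W(:,\mathcal{J})}(w_j)\|_2 \geq \sqrt{2}\,\sigma([W(:,\mathcal{J}),\, w_i,\, w_j])$. Here $[W(:,\mathcal{J}),\, w_i,\, w_j]$ is a column submatrix of $W$, so the main remaining step is to argue that for any column submatrix $W'$ of $W$, $\sigma(W') \geq \sigma(W)$. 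If $m < r$ then $\sigma(W)=0$ by definition and the inequality is trivial; if $m \geq r$, then $W^T W \succeq 0$ and any column submatrix $W'$ of $W$ corresponds to a principal submatrix of $W^T W$, so by the Cauchy interlacing theorem $\lambda_{\min}((W')^T W') \geq \lambda_{\min}(W^T W)$, i.e., $\sigma_{\min}(W') \geq \sigma_{\min}(W) = \sigma(W)$. Taking the minimum over $i,j,\mathcal{J}$ gives $\gamma_\beta(W)/\sqrt{2} \geq \sigma(W)$, and combining the two bounds proves $\beta(W) \geq \sigma(W)$.

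There is no real obstacle here; the only point to handle carefully is the column-submatrix step, where one must pass through the Gram matrix $W^T W$ to invoke Cauchy interlacing rather than trying to relate $\sigma_{\min}$ of $W'$ and $W$ directly. Everything else is immediate bookkeeping from Lemma~\ref{sig} and the definitions of $\nu_\beta$, $\gamma_\beta$.
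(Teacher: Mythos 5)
Your proof is correct and takes essentially the same route as the paper, whose entire proof of this lemma is the single line ``This follows directly from Lemma~\ref{sig}''; you have simply made explicit the bookkeeping (column permutation for $\nu_\beta$, column submatrix for $\gamma_\beta$, and the degenerate case $m<r$) that the paper leaves implicit. The only minor remark is that the column-submatrix step admits a more elementary argument than Cauchy interlacing: for a column submatrix $W'$ of $W$ (with $m\geq r$) and any unit vector $z'$, extending $z'$ by zeros to $z$ gives $\|W'z'\|_2=\|Wz\|_2\geq\sigma_{\min}(W)$, so $\sigma_{\min}(W')\geq\sigma_{\min}(W)$ without passing through the Gram matrix.
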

\begin{proof}
 This follows directly from Lemma~\ref{sig}. 
\end{proof}

However, the condition $\beta(W) > 0$ is not necessarily satisfied for any set of vertices $w_i$'s hence \ref{snpa} is not robust to noise for any matrix $W$ with $\alpha(W) > 0$. 
For example, in case of a triangle in the plane, $\beta(W) = 0$ if and only if the residuals of the projections of two columns of $W$ onto the segment joining the origin and the last column of $W$ are equal to one another (this requires that they are on the same side and at the same distant of that segment). It is the case for example for the following matrix 
\[
W = \left( \begin{array}{cccc} 
4 & 1  & 3  \\
0 & 1  & 1    \end{array} \right)  
\] 
with $\beta(W) = 0$ while $\alpha(W) > 0$, 
and any data point on the segment $[W(:,2), W(:,3)]$ could be extracted at the second step of \ref{snpa}. In fact, we have 
\[
\mathcal{R}^f_{W(:,1)}\Big( W(:,2) \Big) 
= \mathcal{R}^f_{W(:,1)}\Big( W(:,3) \Big)
=  
\left( \begin{array}{c} 
0   \\
1    \end{array} \right). 
\]

We can link $\beta(W)$ and $\alpha(W)$ as follows. 

\begin{lemma} \label{albet} 
For any $W \in \mathbb{R}^{m \times r}$,  
$\alpha(W) \geq \sqrt{\frac{\mu}{L}} \beta(W)$. 
\end{lemma}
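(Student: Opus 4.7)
The crucial observation is structural: by definition $\beta(W) = \min\!\bigl(\nu_\beta(W),\,\tfrac{1}{\sqrt{2}}\gamma_\beta(W)\bigr) \leq \nu_\beta(W)$, so it is enough to prove the one-sided inequality $\nu_\beta(W) \leq \sqrt{L/\mu}\,\alpha(W)$; the $\gamma_\beta$ term need not be controlled at all. This reduction is what makes the lemma essentially painless.

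To bound $\nu_\beta(W)$, I would fix an arbitrary index $j \in \{1,\dots,r\}$, put $\mathcal{J} = \{1,\dots,r\}\setminus\{j\}$, and let $x^* \in \Delta$ achieve the minimum in $\alpha_j = \min_{x \in \Delta}\|w_j - W(:,\mathcal{J})x\|_2$. The idea is to sandwich the $f$-value of the projection residual using the two inequalities in Equation~\eqref{normconv}. Concretely, apply the lower bound $\tfrac{\mu}{2}\|y\|_2^2 \leq f(y)$ to $y = \mathcal{R}^f_{W(:,\mathcal{J})}(w_j)$, use feasibility of $x^*$ to bound $\min_{x\in\Delta} f(w_j - W(:,\mathcal{J})x) \leq f(w_j - W(:,\mathcal{J})x^*)$, and finish with the upper bound $f(y) \leq \tfrac{L}{2}\|y\|_2^2$ applied to $y = w_j - W(:,\mathcal{J})x^*$. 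Chaining these three steps gives
\[
\tfrac{\mu}{2}\bigl\|\mathcal{R}^f_{W(:,\mathcal{J})}(w_j)\bigr\|_2^2 \;\leq\; f\bigl(\mathcal{R}^f_{W(:,\mathcal{J})}(w_j)\bigr) \;\leq\; f(w_j - W(:,\mathcal{J})x^*) \;\leq\; \tfrac{L}{2}\alpha_j^2.
\]
Taking square roots yields $\|\mathcal{R}^f_{W(:,\mathcal{J})}(w_j)\|_2 \leq \sqrt{L/\mu}\,\alpha_j$, and minimizing over $j$ gives $\nu_\beta(W) \leq \sqrt{L/\mu}\,\alpha(W)$.

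Combining this with $\beta(W) \leq \nu_\beta(W)$ and rearranging the factor $\sqrt{L/\mu}$ produces the desired bound $\alpha(W) \geq \sqrt{\mu/L}\,\beta(W)$. I would flag one sanity check: when $f(x) = \|x\|_2^2$ one has $\mu = L = 2$ and the projection defining $\nu_\beta$ coincides with the $\ell_2$ projection, so $\nu_\beta(W) = \alpha(W)$ and the bound is tight; this is reassuring and mirrors the analogous relationship between Lemma~\ref{alsi} and Lemma~\ref{sba}. There is no real obstacle here — the only conceptual point to notice is that the $\gamma_\beta$ half of $\beta$ is a free rider thanks to the minimum, so no triangle-inequality or residual-difference estimate is required.
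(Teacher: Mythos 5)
Your proof is correct and is essentially the paper's own argument run in the opposite direction: both rely on $\beta(W)\leq\nu_\beta(W)$, the identity $f\bigl(\mathcal{R}^f_{W(:,\mathcal{J})}(w_j)\bigr)=\min_{x\in\Delta}f\bigl(w_j-W(:,\mathcal{J})x\bigr)$, and the two sides of Equation~\eqref{normconv} to trade $\|\cdot\|_2$ for $f$ and back at the cost of $\sqrt{\mu/L}$. No substantive difference.
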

\begin{proof}
Denoting $\mathcal{J} = \{1,2,\dots,r\} \backslash \{ j \}$, we have 
\begin{align*} 
\alpha(W)  
& = \min_{1 \leq j \leq r} \min_{x \in \Delta} ||w_j - W(:,\mathcal{J}) x||_2 \\ 
& \geq \sqrt{\frac{2}{L}} \min_{1 \leq j \leq r} \min_{x \in \Delta} f\left( w_j - W(:,\mathcal{J}) x \right) \\ 
& = \sqrt{\frac{2}{L}} \min_{1 \leq j \leq r} f\left( \mathcal{R}^f_{W(:,\mathcal{J})}(w_j) \right) \\
& \geq \sqrt{\frac{\mu}{L}} \min_{1 \leq j \leq r} \left\| \mathcal{R}^f_{W(:,\mathcal{J})}(w_j) \right\|_2 \geq \sqrt{\frac{\mu}{L}} \beta(W) .  
\end{align*} 
\end{proof}

In the following, we get rid of $\sigma(W)$ in the robustness analysis of \ref{snpa} to replace it with $\beta(W)$. 
In Theorems~\ref{Th2} and \ref{threc}, $\sigma(W)$ was used to lower bound $\omega\left( \mathcal{R}^f_{\tilde{B}}(A) \right)$; see in particular Equation~\eqref{epssi}. 
Hence we need to lower bound $\omega\left( \mathcal{R}^f_{\tilde{B}}(A) \right)$ using $\beta(W)$. 
The remaining of the proofs follows exactly the same steps as the proofs of Theorem~\ref{Th2} and \ref{threc}, and we do not repeat them here. 
\begin{theorem}  \label{mazette}
Let $\tilde{M} = WH + N \in \mathbb{R}^{m \times n}$ be a near-separable matrix satisfying Assumption~\ref{asssep}, and let $f$ satisfy Assumption~\ref{fass1} with strong convexity parameter $\mu$ and its gradient with Lipschitz constant $L$. 
Let us denote $K = K(W)$, $\beta =\beta(W)$, and let $||N||_{1,2} \leq \epsilon$ with  
\begin{equation} \label{epsbet} 
 \epsilon < \min \left( \frac{{\beta}^2 \mu^{3/2}}{144 K L^{3/2}}, \frac{\alpha \mu}{4 L C}, \frac{\beta^2 \mu}{128 K L C} \right), 
 \end{equation}
 with $C =  \left( 1 + 144 \frac{K^2}{\beta^2} \frac{L^{3/2}}{\mu^{3/2}} \right)$. 
Let also $\mathcal{K}$ be the index set of cardinality $r$ extracted by Algorithm~\ref{snpa}.
Then there exists a permutation $\pi$ of $\{1,2,\dots,r\}$ such that 
\begin{align*}
\max_{1 \leq j \leq r} ||\tilde{m}_{\mathcal{K}(j)} - w_{\pi(j)} ||_2 
& \leq \bar{\epsilon} 
= C \epsilon.  
\end{align*} 
\end{theorem}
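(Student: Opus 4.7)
The plan is to follow precisely the structure of the proofs of Theorem~\ref{Th2} and Theorem~\ref{threc}, replacing the single place where the full-column-rank hypothesis enters, namely Corollary~\ref{omeg} (which yields the lower bound $\omega(\mathcal{R}^f_{\tilde{B}}(A)) \geq \sigma - \sqrt{2s}\bar{\epsilon}$ used at Equation~\eqref{epssi}), by an analogous lower bound expressed directly in terms of $\beta(W)$. Every other ingredient of Theorem~\ref{Th2}---the projection inequalities of Lemma~\ref{projn}, the Lipschitz-type Lemma~\ref{fbound}, the selection Lemma~\ref{lemscf}, and the bound $\nu(\mathcal{R}^f_{\tilde{B}}(A)) > 2\sqrt{L/\mu}\, K(\mathcal{R}^f_{\tilde{B}}(B))$ obtained from Corollary~\ref{nulem} and Lemma~\ref{lemaa}---only involves the columns of $W$ through $K$, $\alpha$, and $\omega$, and so carries over unchanged. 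Once the new $\omega$-bound is in place, the inductive argument of Theorem~\ref{threc} is reused verbatim, producing the stated error $C\epsilon$ with $C = 1 + 144\,K^2 L^{3/2}/(\beta^2 \mu^{3/2})$.

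The key technical step is thus to prove, under the hypotheses of Theorem~\ref{Th2} but with $\beta(W)$ in place of $\sigma(W)$, a replacement for Corollary~\ref{omeg} of the form
\[
\omega\!\left(\mathcal{R}^f_{\tilde{B}}(A)\right) \;\geq\; \frac{\beta(W)}{2},
\]
valid whenever $\bar{\epsilon} = C\epsilon$ is small compared with $\beta^2 \mu / (KL)$; this is precisely what the third inequality in Equation~\eqref{epsbet} enforces. The starting observation is that, since $B$ is a subset of the columns of $W$ other than those in $A$, enlarging the projection set from $B$ to $W(:,\mathcal{J})$ (with $\mathcal{J}$ the remaining columns of $W$) can only decrease the $f$-value of the residual. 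Combining this with Equation~\eqref{normconv} yields
\[
\left\|\mathcal{R}^f_B(a_j)\right\|_2 \;\geq\; \sqrt{\mu/L}\; \nu_\beta(W),
\]
and the same argument applied to the pairwise differences $\mathcal{R}^f_B(a_i) - \mathcal{R}^f_B(a_j)$ (with a $\mathcal{J}$ that also excludes $a_i$ or $a_j$) gives the matching Euclidean lower bound in terms of $\gamma_\beta(W)$. Propagating from $B$ to the perturbed $\tilde{B}$ is then done by a stability argument analogous to the use of Weyl's inequality in Lemma~\ref{sig2}, bounding $\|\mathcal{R}^f_{\tilde{B}}(x) - \mathcal{R}^f_{B}(x)\|_2$ by a multiple of $\bar{\epsilon}$ using strong convexity and Lipschitz-continuity of $\nabla f$.

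The remainder is mechanical: substitute $\sigma$ by $\beta$ in the chain of inequalities culminating in Equation~\eqref{eqmp} of Theorem~\ref{Th2}, which forces $\delta = 72\epsilon K L^{3/2}/(\beta^2 \mu^{3/2}) \leq 1/2$ (the first condition in Equation~\eqref{epsbet}); use Corollary~\ref{nulem} together with $\alpha \geq \sqrt{\mu/L}\,\beta$ from Lemma~\ref{albet} to secure Lemma~\ref{lemscf}'s separation hypothesis (giving the second condition $\epsilon < \alpha\mu/(4LC)$); and then apply the resulting one-step statement inductively, exactly as in Theorem~\ref{threc}, with the same $C$.

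The main obstacle is the $f$-analogue of Weyl's inequality. In the $\sigma$-based analysis, a Euclidean perturbation $\|B - \tilde{B}\|_{1,2} \leq \bar{\epsilon}$ translates into an $\mathcal{O}(\sqrt{s}\,\bar{\epsilon})$ perturbation of $\sigma$ via classical linear-algebra bounds; here the analogous step must be carried out inside the $f$-semimetric, so one must carefully track the constants $\mu$ and $L$ when converting between the $f$-value at the optimal $y \in \Delta$ and the Euclidean norm of the residual. This conversion is exactly what makes the factor $KL/\mu$ appear in the third condition in Equation~\eqref{epsbet}, and it is the only step requiring genuine care; everything else is a faithful transcription of Theorems~\ref{Th2} and~\ref{threc} with $\sigma$ replaced by $\beta$.
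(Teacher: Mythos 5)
Your overall strategy coincides with the paper's: rerun the proof of Theorem~\ref{Th2} verbatim, replacing the lower bound $\omega(\mathcal{R}^f_{\tilde B}(A)) \geq \sigma - \sqrt{2s}\,\bar\epsilon$ of Corollary~\ref{omeg} by a bound of the form $\omega(\mathcal{R}^f_{\tilde B}(A)) \geq \beta/2$, and then inducting as in Theorem~\ref{threc}. Your treatment of the unperturbed part is also essentially right: $\|\mathcal{R}^f_B(a_i) - \mathcal{R}^f_B(a_j)\|_2 \geq \gamma_\beta(W)$ holds because $B$ is one of the admissible subsets $\mathcal{J}$ in the definition of $\gamma_\beta$, and your monotonicity-plus-\eqref{normconv} argument gives $\|\mathcal{R}^f_B(a_j)\|_2 \geq \sqrt{\mu/L}\,\nu_\beta(W)$.

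The gap is in the perturbation step, which you yourself single out as the only delicate point and then misstate. You claim $\|\mathcal{R}^f_{\tilde B}(x) - \mathcal{R}^f_B(x)\|_2$ can be bounded by a multiple of $\bar\epsilon$. It cannot: projections onto perturbed convex hulls are only H\"older-$1/2$ stable in the perturbation, and the correct bound (Lemma~\ref{betlem1} in Appendix~\ref{appB}) is $\|\mathcal{R}^f_B(x) - \mathcal{R}^f_{\tilde B}(x)\|_2 \leq \sqrt{12\,(L/\mu)\,\|B\|_{1,2}\,\bar\epsilon}$, proved not by a Weyl-type argument but by comparing, via strong convexity, the value of $f$ at the midpoint $\tfrac12\big(\mathcal{P}^f_B(x)+\mathcal{P}^f_{\tilde B}(x)\big)$ with its value at the optimal projection onto the combined set $[B,\tilde B]$. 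This $\sqrt{\bar\epsilon}$ rate is not a technicality: it is exactly what produces the third condition $\epsilon < \beta^2\mu/(128KLC)$ in \eqref{epsbet} (from requiring $2\sqrt{6KL\bar\epsilon/\mu}\leq\beta/2$) and the cubic, rather than quadratic, dependence on $\kappa_\beta$ in Theorem~\ref{maincor} --- a point the paper makes explicitly right after that theorem. Indeed your own account is internally inconsistent: a bound linear in $\bar\epsilon$ would yield a condition of the form $\bar\epsilon \lesssim \beta\mu/L$, with no $K$ and no $\beta^2$, so the ``mechanical substitution'' you describe would not reproduce the constants in the statement. To complete the proof you must actually establish the square-root perturbation lemma (or an equivalent), which is the genuinely new content of this theorem relative to Theorem~\ref{threc}.
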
 
\begin{proof}
Using the first four assumptions of Theorem~\ref{Th2} (that is, all assumptions of Theorem~\ref{Th2} but the upper bound on $\epsilon$) and denoting $\beta = \beta(W)$, we show in Appendix~\ref{appB} that  
\[
 \omega\left( \mathcal{R}^f_{\tilde{B}}(A) \right) 
 \geq  \beta - 2 \sqrt{ \frac{6 K L \bar{\epsilon}}{\mu} } . 
\]
Therefore, if 
\[
2 \sqrt{ \frac{6 K L \bar{\epsilon}}{\mu} } \leq \frac{\beta}{2} \qquad  \iff \qquad 
\bar{\epsilon} = C \epsilon \leq \frac{\beta^2 \mu}{96 K L}, 
\]
then $\omega\left( \mathcal{R}^f_{\tilde{B}}(A) \right) \geq \frac{\beta}{2}$.  
Hence $\sigma(W)$ can be replaced with $\beta(W)$ in Equations~\eqref{epssi}, \eqref{eqmp} and in the following derivations, 
and Theorem~\ref{Th2} applies under the same conditions except for  
$\delta =  \frac{72  \epsilon K L^{3/2}}{\beta^2 \mu^{3/2}}$, and the condition on $\epsilon$ given by Equation~\eqref{epsbet}.  
\end{proof}

Let us denote $1 \leq \kappa_{\beta}(W) = \frac{K(W)}{ \beta(W)} \leq \kappa(W)$. 
\begin{theorem}  \label{maincor} 
Let $\tilde{M}$ be a near-separable matrix (see Assumption~\ref{asssep}) where $W$ satisfies $\beta(W) > 0$. 
\mbox{If $\epsilon \leq \mathcal{O} \left( \,  \frac{  \beta(W)  }{\kappa^3_{\beta}(W)} \right)$}, then Algorithm~\ref{snpa} with 
$f(.) = ||.||_2^2$ identifies all the columns of $W$ up to error $\mathcal{O} \left( \epsilon \, \kappa^3_{\beta}(W) \right)$. 
\end{theorem}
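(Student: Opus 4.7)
The plan is to derive Theorem~\ref{maincor} as an immediate corollary of Theorem~\ref{mazette}, specialized to the quadratic $f(x) = \|x\|_2^2$. This is the same strategy by which Theorem~\ref{th4} was obtained from Theorem~\ref{threc}: once the machinery of the previous section is in place, only a constant-chasing exercise remains.

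First, I would instantiate the parameters of $f$. For $f(x) = \|x\|_2^2$, the Hessian is $2I$, so $f$ satisfies Assumption~\ref{fass1} with strong-convexity constant $\mu = 2$ and Lipschitz-gradient constant $L = 2$; in particular $\sqrt{L/\mu}=1$ and $\mu=L$. Moreover $f(0)=0$ and $0$ is the global minimizer, so the assumption is met in full.

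Next, I would substitute $\mu=L=2$ into Theorem~\ref{mazette} and simplify. The constant becomes $C = 1 + 144\,\kappa_\beta^2 = \Theta(\kappa_\beta^2)$, and the three upper bounds on $\epsilon$ in~\eqref{epsbet} become, respectively, $\Theta(\beta/\kappa_\beta)$, $\Theta(\alpha/\kappa_\beta^2)$, and $\Theta(\beta/\kappa_\beta^3)$ (using $K = \kappa_\beta \beta$). To confirm that the third term is binding asymptotically, I would invoke Lemma~\ref{albet}, which under $\mu=L$ gives $\alpha(W) \geq \beta(W)$; hence the $\alpha$-term is $\Omega(\beta/\kappa_\beta^2)$ and dominates the $\beta/\kappa_\beta^3$-term, while the first $\beta/\kappa_\beta$-term is even larger. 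Thus the binding condition reduces exactly to $\epsilon \leq \mathcal{O}(\beta/\kappa_\beta^3)$, matching the hypothesis of the theorem.

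Finally, the error bound delivered by Theorem~\ref{mazette} is $\bar\epsilon = C\epsilon = \mathcal{O}(\kappa_\beta^2\,\epsilon)$, which is subsumed by $\mathcal{O}(\kappa_\beta^3\,\epsilon)$ because $\kappa_\beta \geq 1$. This is exactly the advertised accuracy. There is no real obstacle: the substantive work—bounding $\omega(\mathcal{R}^f_{\tilde B}(A))$ in terms of $\beta(W)$ and inductively propagating the error through successive projections—is already packaged inside Theorem~\ref{mazette}. The only point requiring attention is the application of Lemma~\ref{albet} to eliminate the $\alpha(W)$-dependence from the condition on $\epsilon$, in direct analogy with the appeal to Lemma~\ref{alsi} in the proof of Theorem~\ref{th4}.
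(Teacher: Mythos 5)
Your proposal is correct and follows exactly the paper's route: Theorem~\ref{mazette} with $\mu = L = 2$, combined with Lemma~\ref{albet} to absorb the $\alpha(W)$-term, yielding the binding condition $\epsilon \leq \mathcal{O}(\beta/\kappa_\beta^3)$ and error $C\epsilon = \mathcal{O}(\kappa_\beta^2\epsilon) \leq \mathcal{O}(\kappa_\beta^3\epsilon)$. The paper's own proof is a one-line citation of precisely these ingredients, so your constant-chasing simply makes explicit what the paper leaves implicit.
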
 
\begin{proof}
This follows from Theorem~\ref{mazette}, Lemma~\ref{albet} and $\mu = L = 2$ for $f(.) = ||.||_2^2$.
\end{proof}

Note that the bounds are cubic in $\kappa_{\beta}(W)$ while they were quadratic in $\kappa(W)$. 
The reason is that the lower bound on $\omega\left( \mathcal{R}^f_{\tilde{B}}(A) \right)$ based on $\beta(W)$ is of the form $\beta(W) - \mathcal{O}(\sqrt{\bar{\epsilon}})$, 
while the one based on $\sigma(W)$ was of the form $\sigma(W) - \mathcal{O}(\sqrt{r} \bar{\epsilon})$. However the dependence on $\sqrt{r}$ has disappeared.

\begin{remark} \label{betaprime}
Because \ref{snpa} extracts the columns of $W$ in a specific order, the parameter $\beta(W) $ could be replaced by the following larger parameter. Assume without loss of generality that the columns of $W$ are ordered in such a way that the $k$th column of $W$ is extracted at the $k$th step of \ref{snpa}. Then, $\beta(W)$ in the robustness analysis of \ref{snpa} can be replaced with the larger 
\begin{align*}
\beta'(W) & = 
\Big( \min_{i < k} \left\| \mathcal{R}_{W(:,1:i)}^f(w_k) \right\|_2 , 
\frac{1}{\sqrt{2}}\min_{1\leq i \leq r-2, i < k \neq l} \left\| \mathcal{R}_{W(:,1:i)}^f(w_k) - \mathcal{R}_{W(:,1:i)}^f(w_l) \right\|_2 \Big). 
\end{align*} 
It is also interesting to notice that $\beta(W)$ could be equal to zero for some function $f$ while being positive for others. 
Hence, ideally, the function $f$ should be chosen such that $\beta(W)$ is maximized. Note however that $W$ is unknown and $\beta(W)$ is  expensive to compute (although $\beta'(W)$ could be used instead) so the problem seems rather challenging. This is a topic for further research.  
\end{remark}

\subsection{Improvements using Post-Processing, Pre-Conditioning and Outlier Detection} \label{improve}

It is possible to improve the performance of \ref{snpa}, the same way it was done for SPA: 

\begin{itemize}

\item A first possibility is to use \emph{post-processing} \cite[Alg.~4]{Ar13}. Let $\mathcal{K}$ be the set of indices extracted by \ref{snpa}, 
and denote $\mathcal{K}(k)$ the index extracted at step $k$. For $k = 1, 2, \dots r$, the post-processing 
\begin{itemize}
\item Projects each column of the data matrix onto the convex hull of $M(:,\mathcal{K} \backslash \{\mathcal{K}(k)\})$. 
\item Identifies the column of the corresponding projected matrix maximizing $f(.)$ (say the $k'$th), 
\item Updates $\mathcal{K} \, = \, \mathcal{K} \backslash \{\mathcal{K}(k)\} \cup \{k'\}$. 
\end{itemize} 
This allows to improve the bound on the error of Theorem~\ref{th1} from $\mathcal{O} \left( \epsilon \, \kappa^2(W) \right)$ to $\mathcal{O} \left( \epsilon \, \kappa(W) \right)$.

\item A second possibility is to \emph{pre-condition} the input near-separable matrix \cite{GV13}, making the condition number of $W$ constant, while multiplying the error by a factor of at most $\sigma_{\min}^{-1}(W)$. 
This allows to improve the bound on the noise of Theorem~\ref{th1} from $\epsilon \leq \mathcal{O} \left( \,  \frac{  \sigma_{\min}(W)  }{\sqrt{r} \kappa^2(W)} \right)$ to $\epsilon \leq \mathcal{O} \left( \,  \frac{  \sigma_{\min}(W)  }{r \sqrt{r}} \right)$ and the bound on the error  from $\mathcal{O} \left( \epsilon \, \kappa^2(W) \right)$ to $\mathcal{O} \left( \epsilon \, \kappa(W) \right)$. 

\item A third possibility for improvement is to deal with outliers. They will be identified by \ref{snpa} along with the columns of $W$, 
and can be discarded in a second step by computing the optimal weights needed to reconstruct all columns of the input matrix with the extracted columns~\cite{ESV12, GV12, GL13}.   

\end{itemize}

We do not focus in this paper on these improvements as they are straightforward applications of existing techniques. 
Our focus in Section~\ref{ne} is rather to show the better performance of \ref{snpa} compared to the original \ref{spa}.

\subsection{Choice of the Function $f$} \label{choicef}

According to our theoretical analysis (see Theorems~\ref{threc} and \ref{mazette}), the best possible case for the function $f$ is to have $\mu = L$ in which case $f(x) = ||x||_2^2$. However, our analysis considers a worst-case scenario and, in some cases, it might be beneficial to use other functions $f$. For example, if the noise is sparse (that is, only a few entries of the data matrix are perturbed), it was shown that it is better to use $\ell_p$-norms with $1 < p < 2$ (that is, use $f(x) = ||x||_p^p$); see the discussion in \cite[Section~4]{GV12}, and \cite{AC11} for more numerical experiments. 
More generally, it would be particularly interesting to analyze good choices for the function $f$ depending on the noise model. Note also that the assumptions on the function $f$ can be relaxed to the condition that the gradient of $f$ is continuously differentiable and that $f$ is locally strongly convex, hence our result for example applies to $\ell_p$ norms for $1 < p < +\infty$; see \cite[Remark~3]{GV12}.

\section{Numerical Experiments} \label{ne} 

In this section, we compare the following algorithms

\begin{itemize}

\item \textbf{SPA}: the successive projection algorithm; see Algorithm~\ref{spa}. 

\item \textbf{SNPA}: the successive nonnegative projection algorithm; see Algorithm~\ref{snpa} with $f(x) = ||x||_2^2$. 

\item \textbf{XRAY}: recursive algorithm similar to SNPA~\cite{KSK12}. It extracts columns recursively, and projects the data points onto the convex cone generated by the columns extracted so far. (We use in this paper the variant referred to as $max$.)

\end{itemize} 
Our goal is two-fold: 
\begin{enumerate}

\item Illustrate our theoretical result, namely that \ref{snpa} applies to a broader class of nonnegative matrices and is more robust than \ref{spa}; see Section~\ref{synt}. 

\item Show that \ref{snpa} can be used successfully on real-world hyperspectral images; see  Section~\ref{hsi} where the popular Urban data set is used for comparison. 

\end{enumerate}
We also show that \ref{snpa} is more robust to noise than XRAY (in some cases significantly). 

The Matlab code is available at \url{https://sites.google.com/site/nicolasgillis/}.  All tests are preformed using Matlab on a laptop Intel CORE i5-3210M CPU @2.5GHz 2.5GHz 6Go RAM. 

\begin{remark}[Comparison with Standard NMF Algorithms]
We do not compare the near-separable NMF algorithms to standard NMF algorithms whose goal is to solve 
\begin{equation} \label{nmfopt}
\min_{U\geq 0, V \geq 0} ||M-UV||_F^2 . 
\end{equation}
In fact, we believe these two classes of algorithms, although closely related, are difficult to compare. 
In particular, the solution of \eqref{nmfopt} is in general non-unique\footnote{A solution $(U',V')$ is considered to be different from $(U,V)$ if it cannot be obtained by permutation and scaling of the columns of $U$ and rows of $V$; see [15] for more details on non-uniqueness issues for NMF.}, even for a separable matrix $M = W[I_r, H']\Pi$. 
This is case for example if the support (the set of non-zero entries) of a column of $W$ contains the support of another column \cite[Remark~7]{G12}; in particular, most hyperspectral images have a dense endmember matrix $W$ hence have a non-unique NMF decomposition. (More generally, it will be the case if and only if the NMF of $W$ is non-unique.) 
Therefore, without regularization, standard NMF algorithms will in general fail to identify the matrix $W$ from a near-separable matrix $\tilde{M} = W[I_r, H']\Pi + N$. However, it is likely they will generate a solution with smaller error $||\tilde{M}-UV||_F^2$ than near-separable NMF algorithms because they have more degrees of freedom. 

It is interesting to note that near-separable NMF algorithms can be used as good initialization strategies for standard NMF algorithms (which usually require some initial guess for $U$ and $V$); see the discussion in \cite{G14} and the references therein.  
\end{remark}

\subsection{Synthetic Data Sets} \label{synt}

For well-conditioned matrices, $\beta(W)$ and $\sigma(W)$ are close to one another 
 (in fact, $\beta(I_r) = \sigma(I_r) = 1$), in which case we observed that \ref{spa} and \ref{snpa} provide very similar results (in most cases, they extract the same index set). Therefore, in this section, 
 our focus will be on 
\begin{itemize}

\item[(i)] near-separable matrices for which $W$ does not have full column rank (\mbox{$\rank(W) < r$})  to illustrate the fact that \ref{snpa} applies to a broader class of nonnegative matrices.  
We will refer to this case as the `rank-deficient' case; see Section~\ref{rankdefmat}. 

\item[(ii)] ill-conditioned matrices to illustrate the fact that \ref{snpa} is more tolerant to noise than \ref{spa}; in fact, our analysis suggests it is the case when $\sigma(W) \ll \beta(W)$.  
We will refer to this case as the `ill-conditioned' case; see Section~\ref{illcondmat}. 

\end{itemize}   
In both the rank-deficient and ill-conditioned cases, we take $r = 20$ and generate the matrices $H$ and $N$ (to obtain near-separable matrices $\tilde{M} = WH + N = W[I_r, H']  + N$) in two different ways (as in \cite{GV12}):  
\begin{enumerate}

\item \emph{Dirichlet.} $H = [I_r, I_r, H']$ so that each column of $W$ is repeated twice while $H'$ has 200 columns (hence $n = 240$) generated at random following a Dirichlet distribution with parameters drawn uniformly at random in the interval $[0,1]$. 
The repetition of the columns of the identity matrix allows to check whether algorithms are sensitive to duplicated columns of $W$ (some near-separable NMF algorithms are, e.g., \cite{EMO12, ESV12, BRRT12}).   
Each entry of the noise $N$ is drawn following a normal distribution $\mathcal{N}(0,1)$ and then multiplied by the parameter~$\delta$. 

\item \emph{Middle Points.} $H = [I_r, H']$ where each column of $H'$ has exactly two nonzero entries equal to 0.5 (hence $n = r+\binom{r}{2} = 210$). The columns of $W$ are not perturbed (that is, $N(:,1:r) = 0$) while the remaining ones are perturbed towards the outside of the convex hull of the columns of $W$, that is, $N(:,j) = \delta ( M(:,j) - \bar{w})$  for all $j$ 
where $\bar{w} = \frac{1}{r} \sum_{k=1}^r w_k$ and $\delta$ is a parameter. 

\end{enumerate}

\subsubsection{Rank-Deficient Case} \label{rankdefmat}

To generate near-separable matrices with $\rank(W) < r$, we take $m = 10$ (since $r = 20$, these matrices cannot have full column rank) 
while each entry of the matrix $W \in[0,1]^{m \times r}$ is drawn uniformly at random in the interval [0,1]. Moreover, we check that each column of $W$ is not too close to the convex cone generated by the other columns of $W$ by making sure that for all $j$ 
\[
\min_{x \geq 0} ||W(:,j) - W(:,\mathcal{J})x||_2 \geq 0.01 ||W(:,j)||_2 \quad \text{ with } \mathcal{J} = \{1,2,\dots,r\} \backslash \{j\}.  
\]
If this condition is not met (which occurs very rarely), we generate another matrix until it is.  
Hence, we have that $\rank(W) = \rank(M) = 10$ while $M = WH$ is 20-separable with \mbox{$\alpha(W) > 0$}. 
(By a slight abuse of language, we refer to this case as the `rank-deficient case' although the matrix $M$ actually has full (row) rank--to be more precise, we should refer to it as  the `column-rank-deficient case'.)  
It is interesting to point out that, in average, $\beta'(W) \approx 0.2$ (see Remark~\ref{betaprime} for the definition of $\beta'(W)$ which can replace $\beta(W)$ in the analysis of \ref{snpa}) while $\kappa_{\beta'}(W) = \frac{\max_j ||W(:,j)||_2}{\beta'(W)} \approx 10$. 

For hundred different values of the noise parameter $\delta$ (using \texttt{logspace(-3,0,100)}), 
we generate 25 matrices of each type: Figure~\ref{xp11} (resp.\@ Figure~\ref{xp12}) displays the fraction of columns of $W$ correctly identified by the different algorithms for the experiment `Dirichlet' (resp.\@ `Middle points'). 
\begin{figure*}[ht!]
\begin{center}
\includegraphics[width=14cm]{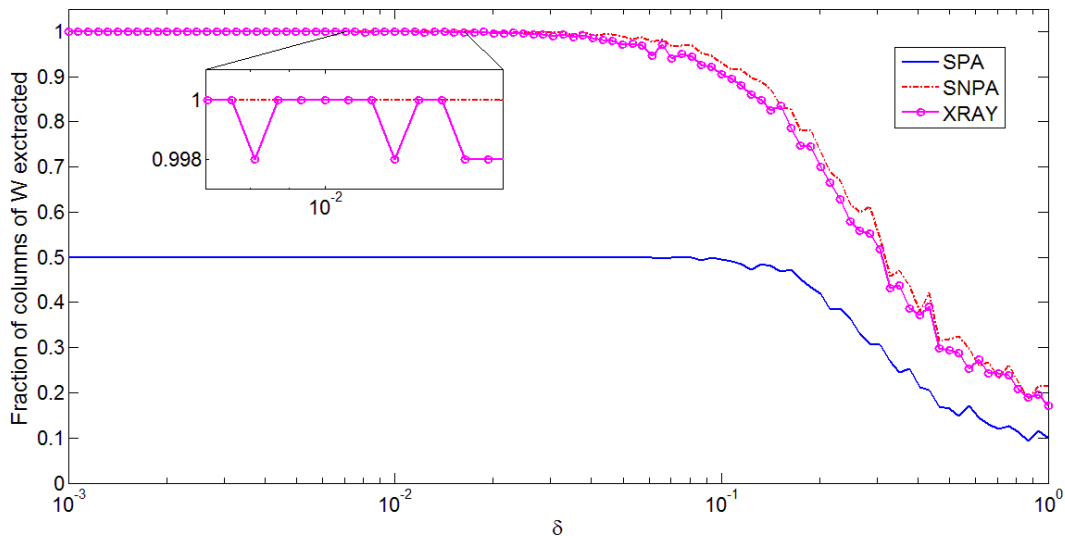}
\caption{Comparison of the different near-separable NMF algorithms on rank-deficient data sets (`Dirichlet' type).}  
\label{xp11}
\end{center}
\end{figure*} 
\begin{figure*}[ht!]
\begin{center}
\includegraphics[width=14cm]{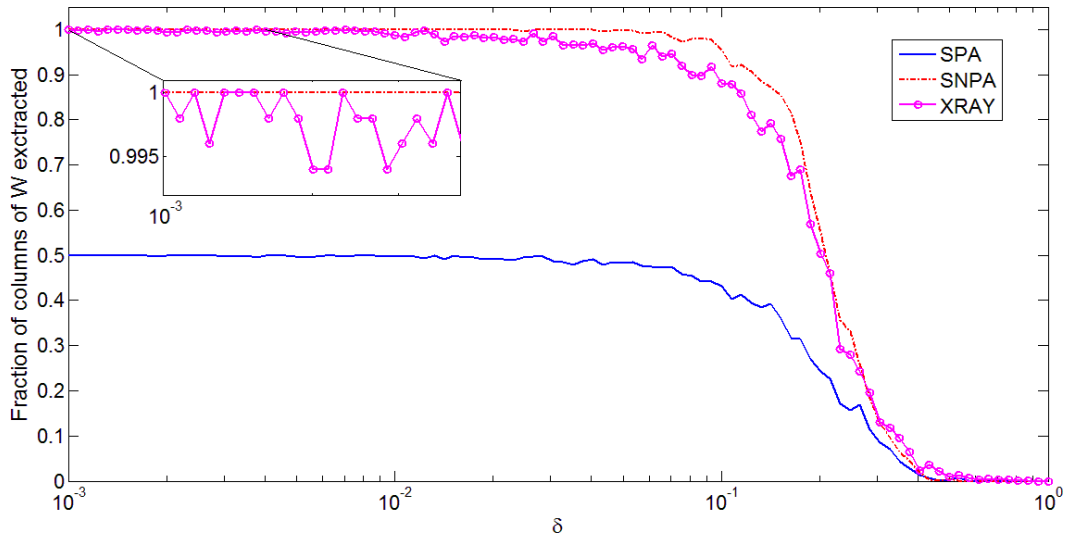}
\caption{Comparison of the different near-separable NMF algorithms on rank-deficient data sets  (`Middle points' type).}  
\label{xp12}
\end{center}
\end{figure*}

Table~\ref{tabrankdefdir} and Table~\ref{tabankdefmidpoints} give the robustness and the average running time for both experiments.  
\begin{table}[ht!] 
\begin{center} 
\caption{Robustness for the rank-deficient `Dirichlet' experiment, that is, largest value of $\delta$ for which all (resp.\@ 95\% of the) columns of $W$ are correctly identified, and average running time in seconds of the different near-separable NMF algorithms.} 
\begin{tabular}{|c|ccc|}
\hline
			&     \ref{spa}  & \ref{snpa} & XRAY   \\ \hline
Robustness (100\%) & 0  & \textbf{1.7}$^*$\textbf{10}$^{\textbf{\text{-2}}}$ & $7.6^*$10$^{-3}$  \\  
Robustness (95\%) &  0  & \textbf{8.9}$^*$\textbf{10}$^{\textbf{\text{-2}}}$   & {6.1$^*$10$^{{\text{-2}}}$}   \\  \hline 
Time (s.) &    $<$ 0.01 & 7.67 & 1.16   \\ 
\hline
\end{tabular} 
\label{tabrankdefdir}
\end{center}
\end{table}
\begin{table}[ht!] 
\begin{center} 
\caption{Robustness and average running time for the rank-deficient `Middle Points' experiment.} 
\begin{tabular}{|c|ccc|}
\hline
			&     \ref{spa}  & \ref{snpa} & XRAY   \\ \hline
Robustness (100\%) & 0  & \textbf{2.3}$^*$\textbf{10}$^{\textbf{\text{-2}}}$ & $10^{-3}$  \\  
Robustness (95\%) &  0  &  \textbf{10}$^{\textbf{\text{-1}}}$ &  {5.5}$^*${10}$^{{\text{-2}}}$ \\  \hline 
Time (s.) &    $<$ 0.01 & 7.83 & 1.05   \\ 
\hline
\end{tabular} 
\label{tabankdefmidpoints}
\end{center}
\end{table}

\ref{spa} is significantly faster than XRAY and \ref{snpa} since the projection at each step simply amounts to a matrix-vector product 
while, for \ref{snpa} and XRAY, the projection requires solving $n$ linearly constrained least squares problems in $r$ variables. 
XRAY is faster than \ref{snpa} because the projections are simpler to compute. (Also, a different algorithm was used: XRAY requires to solve least squares problems in the nonnegative orthant, which is solved with an efficient coordinate descent method from~\cite{GG12}.)

For both experiments (`Dirichlet' and `Middle points'), \ref{spa} cannot extract more than $10$ columns. In fact, the input matrix has only 10 rows hence the residual computed by SPA is equal to zero after 10 steps. 
\ref{snpa} is able to identify correctly the 20 columns of $W$, and it does for larger noise levels than XRAY. 
In particular, for the `Middle points' experiment, XRAY performs rather poorly (in terms of robustness) because it does not deal very well with data points on the faces of the convex hull of the columns of $W$ (in this case, on the middle of the segment joining two vertices); see the discussion in \cite{KSK12}. For example, for $\delta = 0.1$, \ref{snpa} identifies more than 95\% of the columns of $W$, while XRAY identifies less than 90\%.

\subsubsection{Ill-Conditioned Case} \label{illcondmat}
 
In this section, we perform an experiment very similar to the third and fourth experiment in \cite{GV12} to assess the robustness to noise of the different algorithms on ill-conditioned matrices. 
We take $m = 20$, while each entry of the matrix $W \in[0,1]^{m \times r}$ is drawn uniformly at random in the interval [0,1] (as in the rank-deficient case). 
Then the compact singular value decomposition $(U,S,V^T)$ of $W$ is computed (using the function \texttt{svds(M,r)} of Matlab), 
and $W$ is replaced with $U \Sigma V^T$ where $\Sigma$ is a diagonal matrix with $\Sigma(i,i) = \alpha^{i-1}$ ($1 \leq i \leq r$) 
where $\alpha^{r-1} = 1000$ so that $\kappa(W) = 1000$. Finally, to obtain a nonnegative matrix $W$, we replace $W$ with $\max(W,0)$ (this step is necessary because XRAY only applies to nonnegative input matrices).  
Note that this changes the conditioning, with the average of $\kappa(W)$ being equal to $5000$. 
It is interesting to point out that for these matrices ${\beta'}(W)$ is usually much smaller than $\sigma(W)$: 
the order of magnitudes are $\beta'(W) \approx 10^{-2} \ll \sigma_{\min(W)} \approx 10^{-3}$ while $\kappa_{\beta'}(W) \approx 10 \ll \kappa(W) \approx 10^3$. 


For hundred different values of the noise parameter $\delta$ (using \texttt{logspace(-4,-0.5,100)}), we generate 25 matrices of the two types: Figure~\ref{xp1} (resp.\@ Figure~\ref{xp2}) displays the fraction of columns of $W$ correctly identified by the different algorithms for the experiment `Dirichlet' (resp.\@ `Middle points'). 
\begin{figure*}[ht!]
\begin{center}
\includegraphics[width=14cm]{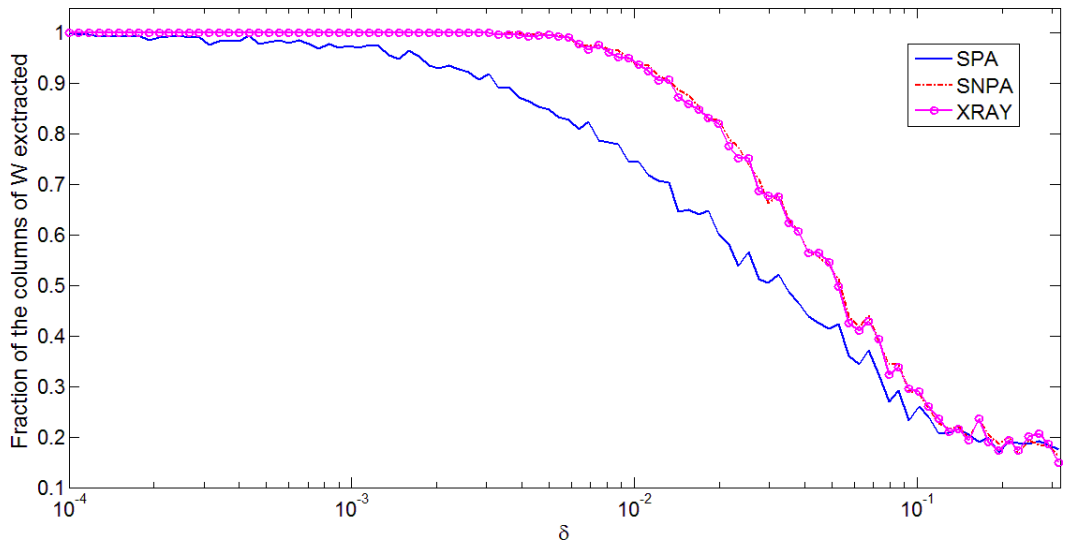}
\caption{Comparison of the different near-separable NMF algorithms on ill-conditioned data sets (`Dirichlet' type).}  
\label{xp1}
\end{center}
\end{figure*} 
\begin{figure*}[ht!]
\begin{center}
\includegraphics[width=14cm]{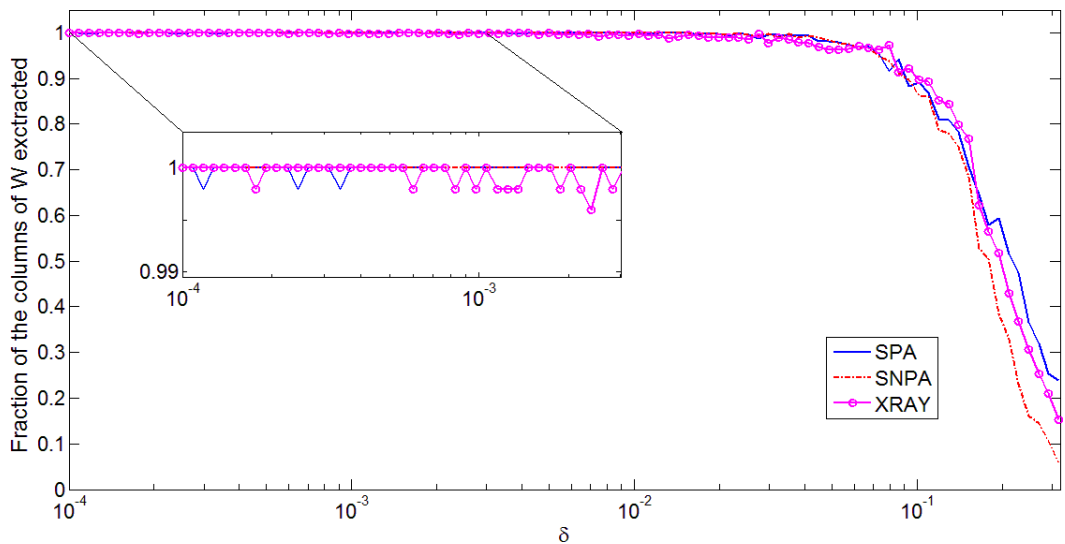}
\caption{Comparison of the different near-separable NMF algorithms on ill-conditioned data sets  (`Middle points' type).}  
\label{xp2}
\end{center}
\end{figure*} 
Table~\ref{robtim} and Table~\ref{robtim2} give the robustness and the average running time for both experiments.  
\begin{table}[ht!] 
\begin{center}
\caption{Robustness and average running time for the ill-conditioned `Dirichlet' experiment.} 
\begin{tabular}{|c|ccc|}
\hline
			&     \ref{spa}  & \ref{snpa} & XRAY   \\ \hline
Robustness (100\%) & $10^{-4}$  & \textbf{3.1$^*$10$^{\textbf{\text{-3}}}$} & \textbf{3.1$^*$10$^{-3}$}  \\  
Robustness (95\%) & $1.44^*10^{-3}$  & \textbf{9.45$^*$10$^{\textbf{\text{-3}}}$}  &  \textbf{9.45$^*$10$^{\textbf{\text{-3}}}$}  \\  \hline 
Time (s.) &    $<$ 0.01 & 7.43 & 1.08    \\ 
\hline
\end{tabular} 
\label{robtim}
\end{center}
\end{table} 
\begin{table}[ht!] 
\begin{center} 
\caption{Robustness and average running time for the ill-conditioned `Middle points' experiment.} 
\begin{tabular}{|c|ccc|}
\hline
			&     \ref{spa}  & \ref{snpa} & XRAY   \\ \hline
Robustness (100\%) & $1.1^*10^{-4}$  & \textbf{1.6$^*$10$^{\textbf{\text{-2}}}$} & $1.6^*$10$^{-4}$  \\  
Robustness (95\%) & $7.4^*10^{-2}$  & 7.3$^*$10$^{-2}$  &  \textbf{8.2}$^*$\textbf{10}$^{\text{\textbf{-2}}}$  \\  \hline 
Time (s.) &    $<$ 0.01 & 9.22  & 1.32   \\ 
\hline
\end{tabular} 
\label{robtim2}
\end{center}
\end{table} 

For the same reasons as before, \ref{spa} is significantly faster than XRAY which is faster than \ref{snpa}. 

For both experiments (`Dirichlet' and `Middle points'), \ref{snpa} outperforms \ref{spa} in terms of robustness, as expected by our theoretical findings. In fact, \ref{snpa} is about ten (resp.\@ hundred) times more robust than \ref{spa} for the experiment `Dirichlet' (resp.\@ `Middle points'), that is, it identifies correctly all columns of $W$ for the noise parameter $\delta$ ten (resp.\@ hundred) times larger; see Tables~\ref{robtim} and~\ref{robtim2}.   
Moreover, for the `Dirichlet' experiment, \ref{snpa} identifies significantly more columns of $W$, even for larger noise levels (which fall outside the scope of our analysis); for example, for $\delta = 0.01$, \ref{snpa} identifies correctly about 95\% of the columns of $W$ while \ref{spa} identifies about 75\%. 


For the `Dirichlet' experiment, \ref{snpa} is as robust as XRAY while, for the `Middle points' experiment, 
it is significantly more robust (for the same reason as in the rank-deficient case) as it extracts correctly all columns of $W$ for $\delta$ hundred times larger. 
However, the three algorithms overall perform similarly on the `Middle points' experiment in the sense that the fraction of columns of $W$ correctly identified do not differ by more than about five percent for all $\delta \leq 0.1$.

\subsection{Real-World Hyperspectral Image}  \label{hsi}

In this section, we analyze the Urban data set\footnote{Available at \url{http://www.agc.army.mil/}.} with $m = 162$ and $n = 307 \times 307 = 94249$. It is mainly constituted of grass, trees, dirt, road and different roof and metallic surfaces; see Figure~\ref{urband}. 
\begin{figure}[ht!] 
\begin{center}
\begin{tabular}{c}
\includegraphics[height=6cm]{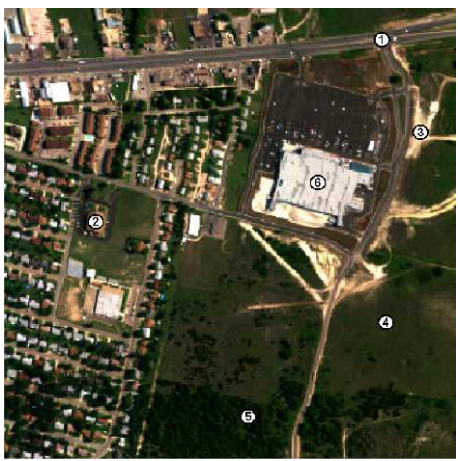} \\ 
\end{tabular}
\caption{Urban data set taken from an aircraft (army geospatial center) with  road surfaces~(1), roofs~1~(2), dirt~(3), grass~(4), trees~(5) and roofs~2~(6).}   
\label{urband}
\end{center}
\end{figure}

We run the near-separable NMF algorithms to  extract $r = 8$ endmembers. As mentioned in Section~\ref{assdef}, Assumption~\ref{asssep} is naturally satisfied by hyperspectral images (up to permutation) hence no normalization of the data is necessary. 
On this data set, \ref{spa} took less than half a second to run, \ref{snpa} about one minute and XRAY about half a minute.   

Figure~\ref{urbanss} displays the extracted spectral signatures, 
\begin{figure*}[ht!] 
\begin{center}
\includegraphics[width=\textwidth]{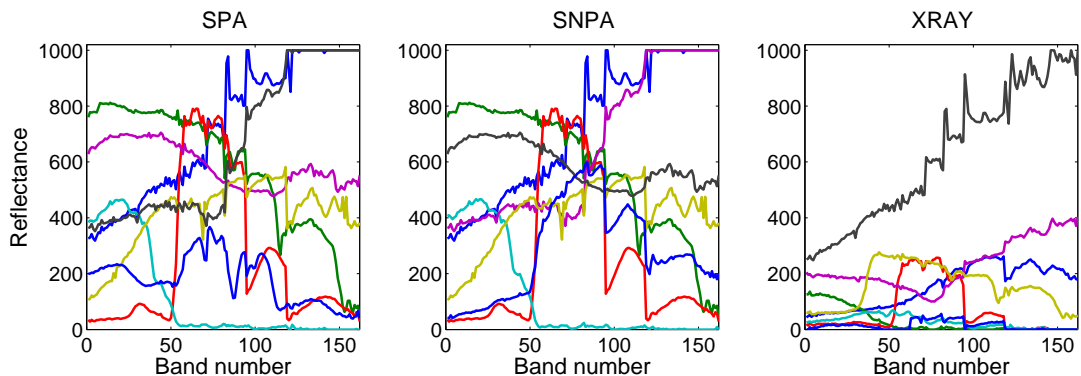} 
\caption{Spectral signatures of the extracted endmembers.}   
\label{urbanss}
\end{center}
\end{figure*} 
and Figure~\ref{urbanam} displays the corresponding abundance maps, that is, the rows of 
\[
H^* = \argmin_{H \geq 0} ||M-M(:\mathcal{K})H||_F 
\] 
where $\mathcal{K}$ is the set of extracted indices by a given algorithm.   
SPA and \ref{snpa} extract six common indices (out of the eight, the first one being different is the fourth). 
\begin{figure*}[ht!] 
\begin{center}
\includegraphics[width=\textwidth]{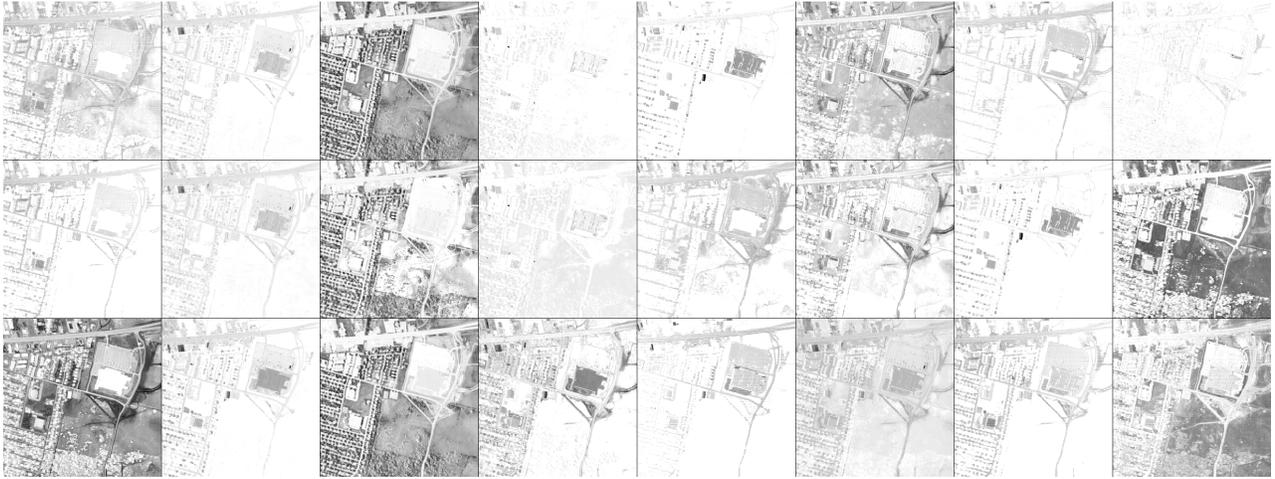} 
\caption{Abundances maps corresponding to the extracted indices. From top to bottom: \ref{spa}, \ref{snpa} and XRAY.} 
\label{urbanam}
\end{center}
\end{figure*}

SNPA performs better than both \ref{spa} and XRAY as it is the only algorithm able to distinguish the grass and trees (3rd and 8th extracted endmembers), while identifying the road surfaces (1st), the dirt (6th) and roof tops (7th). In particular, the relative error in percent, that is, 
\[
100 * \frac{\min_{H \geq 0} ||M-M(:,\mathcal{K})H||_F}{||M||_F} \in [0,100]
\]
for \ref{spa} is 9.45, for XRAY 6.82 and for \ref{snpa} 5.64. In other words, \ref{snpa} is able to identify eight columns of $M$ which can reconstruct $M$ better. (Note that, as opposed to SPA and SNPA that looks for endmembers with large norms, XRAY focuses on extracting extreme rays of the convex cone generated by the columns of $M$. Hence it is likely for XRAY to identify columns with smaller norms. This explains the different scaling of the extracted endmembers in Figure~\ref{urbanss}.) 

Further research includes the comparison of \ref{snpa} with other endmember extraction algorithms, and its incorporation in more sophisticated techniques, e.g., where pre-processing is used to remove outliers and noise, or where pure-pixel search algorithms (that is, near-separable NMF algorithms) are used as an initialization for more sophisticated (iterative) methods not relying on the pure-pixel assumption; see, e.g., \cite{CM11} where \ref{spa} is used.

\subsection{Document Data Sets}

Because, as for \ref{spa}, \ref{snpa} requires to normalize the input near-separable matrices not satisfying Assumption~\ref{asssep} 
(that is, near-separable matrices for which the columns of $H$ do not belong to $\Delta$), 
it may introduce distortion in the data set \cite{KSK12}. In particular, the normalization amplifies the noise of the columns of $M$ with small norm (see the discussion in \cite{GL13}).  

In document data set, the columns of the matrix $H$ are usually not assumed to belong to the unit simplex hence normalization is necessary for applying \ref{snpa}. Therefore, XRAY should be preferred and it has been observed that, for document data sets, \ref{snpa} and \ref{spa} perform similarly while XRAY performs better \cite{Kumar}; see also \cite{KSK12}.

\section{Conclusion and Further Research}

In this paper, we have proposed a new fast and robust recursive algorithm for near-separable NMF, which we referred to as the successive nonnegative projection algorithm (\ref{snpa}). 
Although computationally more expensive than the successive projection algorithm (\ref{spa}), \ref{snpa} can be used to solve large-scale problems, running in $\mathcal{O}(mnr)$ operations, while being more robust and applicable to a broader class of nonnegative matrices. 
In particular, \ref{snpa} seems to be a good alternative to \ref{spa} for real-world hyperspectral images.

There exists several algorithms robust for any near-separable matrix requiring only that $\alpha(W) > 0$ \cite{AGKM11, G12, GL13} which are therefore more general than \ref{snpa} which requires $\beta(W) > 0$. In fact, under Assumption~\ref{asssep}, $\alpha(W) > 0$ is a necessary condition for being able to identify the columns of $W$ among the columns of $\tilde{M}$. 
However, these algorithms are computationally much more expensive ($n$ linear programs in $\mathcal{O}(n)$ variables or a single linear program in $\mathcal{O}(n^2)$ variables have to be solved). Therefore, it would be an interesting direction for further research to develop, if possible, faster (recursive?) algorithms provably robust for any near-separable matrix $\tilde{M} = W[I_r, H'] + N$ with $\alpha(W) > 0$.

\section*{Acknowledgment} 

The author would like to thank Abhishek Kumar and Vikas Sindhwani (IBM T.J. Watson Research Center) for motivating him to study the robustness of algorithms based on nonnegative projections, for insightful discussions and for performing some numerical experiments on document data sets.  The author would also like to thank Wing-Kin Ma (The Chinese University of Hong Kong) for insightful discussions and for suggesting to analyze the noiseless case separately. 
The authors is grateful to the reviewers for their insightful comments which helped improve the paper significantly.

\appendix

\section{Fast Gradient Method for Least Squares on the Simplex}  \label{appA} 
Algorithm~\ref{fastgrad} is a fast gradient method to solve  
\begin{equation} \label{fcnls}
\min_{x \in \Delta^r} f(Ax - y) , 
\end{equation}
where $y \in \mathbb{R}^m$ and $A \in \mathbb{R}^{m \times r}$. To achieve an accuracy of $\epsilon$ in the objective function, 
the algorithm requires $\mathcal{O}\left( \frac{1}{\sqrt{\epsilon}}\right)$ iterations. In other words, the objective function converges to the optimal value at rate $\mathcal{O}\left( \frac{1}{k^2} \right)$ where $k$ is the iteration number. 

\renewcommand{\thealgorithm}{FGM} 
 \algsetup{indent=2em}
\begin{algorithm}[ht!]
\caption{Fast Gradient Method for Solving \eqref{fcnls}; see \cite[p.90]{Y04}} \label{fastgrad}
\begin{algorithmic}[1] 
\REQUIRE A point $y \in \mathbb{R}^m$, a matrix $A \in \mathbb{R}^{m \times r}$, a function $f$ whose gradient is Lipschitz continuous with constant $L_f$, and an initial guess $x \in \Delta^{r}$.  
\ENSURE An approximate solution $x \approx \argmin_{z \in \Delta} f(Az-y)$ so that $Ax \approx \mathcal{P}^f_A(y)$.  \medskip 
\STATE $\alpha_0 \in (0,1)$; $z = x$; $L = L_f \sigma_{\max}(A)^2$ .
\FOR{$k = 1 :$ maxiter} 
\STATE $x^{\dagger} = x$. \quad \emph{\% Keep the previous iterate in memory.}
\STATE $x = \mathcal{P}_{\Delta}\Big( z - \frac{1}{L} \nabla f(Az-y) \Big)$. \emph{\% $\mathcal{P}_{\Delta}$ is the projection on $\Delta$; see Appendix~\ref{remproj}.} 
\STATE $z = x + \beta_k \left(x - x^{\dagger}\right)$, \quad where $\beta_k =  \frac{\alpha_{k} (1-\alpha_{k})}{\alpha_{k}^2 + \alpha_{k+1}}$ 
with 
$\alpha_{k+1} \geq 0$ s.t. 
$\alpha_{k+1}^2 = (1-\alpha_{k+1}) \alpha_{k}^2$. 
\ENDFOR
\end{algorithmic} 
\end{algorithm}  

\begin{remark}
Note that the function $g(x) = f(Ax - b)$ is not necessarily strongly convex, even if $f$ is. This would require $A$ to be full column rank, which we we do not assume here. If it were the case, then even faster methods could be used although the convergence of Algorithm~\ref{fastgrad} becomes linear \cite{Y04}. 
\end{remark}

\begin{remark}[Stopping Condition]
For the numerical experiments in Section~\ref{ne}, we used \emph{maxiter = 500} and combined it with a stopping condition based on the evolution of the iterates; see the online code for more details. 
\end{remark}

\subsection{Projection on the Unit Simplex $\Delta$} \label{remproj}

In Algorithm~\ref{fastgrad}, the projection onto the unit simplex $\Delta$ needs to be computed, that is, given $y \in \mathbb{R}^r$, we have to compute 
\[
\mathcal{P}_{\Delta}(y) = x^* \, = \, \argmin_{x} \, \frac{1}{2} ||x-y||^2 \, \text{ such that } \, 
x \in \Delta. 
\]
Let us construct the Lagrangian dual corresponding to the sum-to-one constraint (since the problem above has a Slater point, there is no duality gap): 
\[
\max_{\mu \geq 0} \min_{x \geq 0} \; \frac{1}{2} ||x-y||^2 - \mu (1-e^T x), 
\]
where $e$ is the all-one vector and $\mu \geq 0$ the Lagrangian multiplier. 
For $\mu$ fixed, the optimal solution in $x$ is given by 
\[
x^* = \max(0,y - \mu e). 
\]
If the sum-to-one constraint is not active, that is, $\sum_i x^*_i < 1$ , we must have $\mu = 0$ hence $x^* = \max(0,y)$ (hence this happens if and only if $\max(0,y) \in \Delta$).  
Otherwise the value of $\mu$ can computed by solving the system $\sum_i x^*_i = 1$ and $x^* = \max(0,y - \mu e)$, equivalent to finding $\mu$ satisfying $\sum_{i=1}^n \max(0,y_i - \mu) = 1$ ($\mu$ can be found easily after having sorted the entries of~$y$).

\section{Lower Bound for $\omega\left( \mathcal{R}^f_{\tilde{B}} (A) \right)$ depending on $\beta\left([A,B]\right)$} \label{appB} 
In this appendix, we derive a lower bound on $\omega\left(\mathcal{R}^f_{\tilde{B}} (A) \right)$ based on $\beta([A,B])$. 

\begin{lemma} \label{betlem1}
Let $x \in \mathbb{R}^{m}$, $B$ and $\tilde{B} \in \mathbb{R}^{m \times s}$ be such that 
$||B-\tilde{B}||_{1,2} \leq \bar{\epsilon} \leq ||B||_{1,2}$, and $f$ satisfy Assumption~\ref{fass1}. 
Then 
\[
\left\| \mathcal{R}^f_{B}(x) - \mathcal{R}_{\tilde{B}}^f(x) \right\|_2^2 
\leq  12 \, \frac{L}{\mu}  \, \bar{\epsilon} \, ||B||_{1,2}. 
\]
\end{lemma}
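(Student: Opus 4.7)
The plan is to work directly with the optimizers $y^{*}_B, y^{*}_{\tilde B} \in \Delta$ defining $u := \mathcal{R}^f_B(x) = x - By^{*}_B$ and $\tilde u := \mathcal{R}^f_{\tilde B}(x) = x - \tilde{B}y^{*}_{\tilde B}$, and to reduce the claim to a bound on $\|B(y^{*}_B - y^{*}_{\tilde B})\|_2$ via strong convexity, absorbing the perturbation $\bar\epsilon$ through the hypothesis $\bar\epsilon \leq \|B\|_{1,2}$ at the very end.

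The first step is a telescoping decomposition: $\tilde u - u = \tilde B y^{*}_{\tilde B} - B y^{*}_B = (\tilde B - B)y^{*}_{\tilde B} + B(y^{*}_{\tilde B} - y^{*}_B)$. Since $y^{*}_{\tilde B} \in \Delta$ gives $\|(\tilde B - B)y^{*}_{\tilde B}\|_2 \leq \bar\epsilon$, the parallelogram identity yields $\|u - \tilde u\|_2^2 \leq 2\bar\epsilon^2 + 2\|B(y^{*}_B - y^{*}_{\tilde B})\|_2^2$, and the hypothesis $\bar\epsilon \leq \|B\|_{1,2}$ turns the $2\bar\epsilon^2$ into $2\bar\epsilon\|B\|_{1,2}$. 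The second step applies strong convexity of $f$ combined with the first-order optimality inequality $\nabla f(u)^T B(y^{*}_B - y^{*}_{\tilde B}) \geq 0$ (valid because $y^{*}_B$ minimizes $y \mapsto f(x - By)$ on $\Delta$). Expanding $f$ around $u$ gives
\[
\tfrac{\mu}{2}\|B(y^{*}_B - y^{*}_{\tilde B})\|_2^2 \leq f(x - By^{*}_{\tilde B}) - f(u),
\]
and by symmetry $\tfrac{\mu}{2}\|\tilde B(y^{*}_B - y^{*}_{\tilde B})\|_2^2 \leq f(x - \tilde B y^{*}_B) - f(\tilde u)$. The third step rewrites $x - By^{*}_{\tilde B} = \tilde u + (\tilde B - B)y^{*}_{\tilde B}$ and $x - \tilde B y^{*}_B = u + (B - \tilde B)y^{*}_B$, each differing from $\tilde u$ or $u$ by a vector of norm at most $\bar\epsilon$, and invokes Lemma~\ref{fbound} to bound both right-hand sides above by $f(\tilde u) + \tfrac{3}{2}\bar\epsilon K L$ and $f(u) + \tfrac{3}{2}\bar\epsilon K L$ respectively, with $K$ a common upper bound on $\|u\|_2$, $\|\tilde u\|_2$ and $\bar\epsilon$. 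Adding the two strong-convexity inequalities then cancels $f(u) - f(\tilde u)$, leaving $\tfrac{\mu}{2}\|B(y^{*}_B - y^{*}_{\tilde B})\|_2^2 \leq 3\bar\epsilon K L$, which plugged back into step one produces a bound of the form $c \, \tfrac{L}{\mu}\bar\epsilon K + 2\bar\epsilon\|B\|_{1,2}$.

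The main obstacle is the final bookkeeping that chooses $K$ as a constant multiple of $\|B\|_{1,2}$ so as to obtain exactly the claimed numerical constant $12$. The naive estimates $\|u\|_2 \leq \|x\|_2 + \|B\|_{1,2}$ and $\|\tilde u\|_2 \leq \|x\|_2 + \|\tilde B\|_{1,2}$ introduce an undesirable dependence on $\|x\|_2$; using that $y = 0 \in \Delta$ is always feasible gives only $\|u\|_2, \|\tilde u\|_2 \leq \sqrt{L/\mu}\|x\|_2$, which still invokes $\|x\|_2$. Collapsing all the $\bar\epsilon^2$ terms using $\bar\epsilon \leq \|B\|_{1,2}$ and controlling $K$ in terms of $\|B\|_{1,2}$ alone—exploiting the fact that the projections place $By^{*}_B$ and $\tilde B y^{*}_{\tilde B}$ in sets of diameter dictated by $B$ and $\tilde B$—is where the factor $12$ must emerge and where the precise arithmetic of the proof will concentrate.
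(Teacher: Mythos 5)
Your route is genuinely different from the paper's. The paper introduces the auxiliary projection $z^*=\mathcal{P}^f_{[B,\tilde B]}(x)$ onto the convex hull of the \emph{concatenated} columns, sandwiches its objective value between a perturbation lower bound (each representation $[B,\tilde B]w^*$ is within $\bar\epsilon$ of a point of either individual hull, then Lemma~\ref{fbound}) and the upper bound obtained by evaluating at the midpoint $\frac12 z+\frac12\tilde z$ with strong convexity, and reads off $\|z-\tilde z\|_2^2$ from the $-\frac{\mu}{8}\|z-\tilde z\|_2^2$ term. You instead use the variational inequality at each optimizer together with a cross-evaluation of the two objectives and a telescoping decomposition of the residual difference; that part of your argument (the optimality inequality $\nabla f(u)^TB(y^*_B-y^*_{\tilde B})\geq 0$, the two strong-convexity inequalities, and the cancellation of $f(u)-f(\tilde u)$ after adding them) is correct and avoids the auxiliary projection entirely.

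The gap is exactly the one you flag, and it cannot be closed in the form you hope. The constant $K$ in your two applications of Lemma~\ref{fbound} must dominate $\|u\|_2=\|x-By^*_B\|_2$ and $\|\tilde u\|_2$; these are \emph{residuals}, not projections, and they scale with $\|x\|_2$ --- the ``diameter dictated by $B$'' controls only $\|By^*_B\|_2$. The obstruction is real: for $f=\|\cdot\|_2^2$, $B=I_2$, $\tilde B=[(1+\bar\epsilon)e_1,\,e_2]$ and $x=(T,T)$ with $T\gg 1/\bar\epsilon$, one gets $\mathcal{P}^f_B(x)=(\tfrac12,\tfrac12)$ but $\mathcal{P}^f_{\tilde B}(x)=(1+\bar\epsilon,0)$, so $\|\mathcal{R}^f_B(x)-\mathcal{R}^f_{\tilde B}(x)\|_2^2>\tfrac12$ while the claimed right-hand side is $12\bar\epsilon$: the bound with $\|B\|_{1,2}$ alone is false when $\|x\|_2$ is unbounded, so no bookkeeping will produce the constant $12$. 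For what it is worth, the paper's own proof slips at the analogous spot: it applies Lemma~\ref{fbound} with $K=\|B\|_{1,2}$ on the strength of $\|y\|_2\le\|B\|_{1,2}$, whereas under the only reading that makes its optimality steps valid the perturbed argument of $f$ is the residual $x-y$, whose norm is not controlled by $\|y\|_2$. Both proofs are repaired by taking $K\ge\max\left(\|x\|_2,\,2\|B\|_{1,2}\right)$, which is harmless where the lemma is used ($x$ is a column of $W$, so $\|x\|_2\le K(W)$, matching the constant $K$ that already appears in Lemma~\ref{betlem}); with that choice your argument goes through and gives a bound of the form $\mathcal{O}\left(\frac{L}{\mu}\,\bar\epsilon\, K\right)$, differing from the paper's only in the numerical constant.
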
 
\begin{proof} 
Let us denote 
 $z = \mathcal{P}_{B}^f(x)$,  $\tilde{z} = \mathcal{P}^f_{\tilde{B}}(x)$, and 
$z^* = \mathcal{P}^f_{[B, \tilde{B}]}(x)$.   We have 
\begin{align*}
\left\| \mathcal{R}^f_{B}(x) - \mathcal{R}_{\tilde{B}}^f(x) \right\|_2 
& = 
\left\| (x - \mathcal{P}^f_{B}(x)) - ( x - \mathcal{P}^f_{\tilde{B}}(x)) \right\|_2 
 = \left\|  \mathcal{P}^f_{B}(x) - \mathcal{P}^f_{\tilde{B}}(x) \right\|_2 
=  \left\|  z - \tilde{z} \right\|_2. 
\end{align*}
Since $z^* = [B, \tilde{B}]w^*$ for some $w^* \in \Delta$, there exists $y=Bw$ and $\tilde{y} = \tilde{B} \tilde{w}$ with $w, \tilde{w}  \in \Delta$ such that
$||z^* - y ||_2 \leq \bar{\epsilon}$ and $||z^* - \tilde{y} ||_2 \leq \bar{\epsilon}$. In fact, it suffices to take 
$w = \tilde{w} = w^*(1$:$r) + w^*(r+1$:$2r)$ since  
\begin{align*}
||z^* - y ||_2 
& = ||[B, \tilde{B}] w^* - B w ||_2 \\
& = || B w^*(1:r) +  \tilde{B} w^*(r+1:2r)  - B w^*(1:r) - B w^*(r+1:2r)  ||_2  \\ 
& = || (\tilde{B} - B) w^*(r+1:2r)||_2 \leq ||B-\tilde{B}||_{1,2} \leq \bar{\epsilon}, 
\end{align*}
and similarly for $\tilde{y}$. 
 Therefore, there exists some $n, \tilde{n}$ such that $z^* = y + n = \tilde{y} + \tilde{n}$ with $||n||_2, ||\tilde{n}||_2 \leq \bar{\epsilon}$.  By Lemma~\ref{fbound} and the fact that $||y||_2 \leq ||B||_{1,2}$ and $||\tilde{y}||_2 \leq ||\tilde{B}||_{1,2} \leq ||B||_{1,2} + \bar{\epsilon} \leq 2  ||B||_{1,2}$, we have 
 \[
 f(z^*) = f(y + n) \geq f(y) -  ||B||_{1,2} L \bar{\epsilon}  
 \quad \text{ and } \quad 
 f(z^*) = f(\tilde{y} + \tilde{n}) \geq f(\tilde{y}) -  2 ||B||_{1,2} L \bar{\epsilon}. 
 \] 
Therefore, by definition of $z$ and $\tilde{z}$, 
\begin{align*}
f(z^*) 
& \geq 
\frac{1}{2} f(y) + \frac{1}{2} f(\tilde{y})  -  \frac{3}{2} ||B||_{1,2} L \bar{\epsilon}  \geq 
\frac{1}{2} f(z) + \frac{1}{2} f(\tilde{z})  -  \frac{3}{2} ||B||_{1,2} L \bar{\epsilon}. 
\end{align*}
Moreover, by definition of $z^*$ and strong convexity of $f$, we obtain 
\[ 
 f(z^*) 
  \leq f\left(\frac{1}{2}z + \frac{1}{2}\tilde{z}\right)  
  \leq \frac{1}{2} f(z) + \frac{1}{2}f(\tilde{z}) - \frac{\mu}{8} ||z-\tilde{z}||_2^2.  
\]
 Hence, combining the above two inequalities, $||z-\tilde{z}||_2^2 \leq 12 \frac{L}{\mu} ||B||_{1,2} \bar{\epsilon}$.  
\end{proof}

\begin{lemma} \label{betlem}
Let $x, y \in \mathbb{R}^{m}$, $B$ and $\tilde{B} \in \mathbb{R}^{m \times s}$ be such that 
$||B-\tilde{B}||_{1,2} \leq \bar{\epsilon} \leq ||{B}||_{1,2}$, and $f$ satisfy Assumption~\ref{fass1}. Then 
\[
\left\| \mathcal{R}_{\tilde{B}}^f(x) - \mathcal{R}_{\tilde{B}}^f(y) \right\|_2
\geq 
\left\| \mathcal{R}_B^f(x) - \mathcal{R}_B^f(y) \right\|_2 - 4 \sqrt{ \frac{3 KL}{\mu} \bar{\epsilon}}. 
\]
\end{lemma}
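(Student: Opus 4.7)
The plan is to combine Lemma~\ref{betlem1} with the triangle inequality. Specifically, by the triangle inequality,
\[
\left\| \mathcal{R}_{\tilde{B}}^f(x) - \mathcal{R}_{\tilde{B}}^f(y) \right\|_2
\geq
\left\| \mathcal{R}_B^f(x) - \mathcal{R}_B^f(y) \right\|_2
- \left\| \mathcal{R}_B^f(x) - \mathcal{R}_{\tilde{B}}^f(x) \right\|_2
- \left\| \mathcal{R}_B^f(y) - \mathcal{R}_{\tilde{B}}^f(y) \right\|_2,
\]
so I just need to control each of the two perturbation terms separately.

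Lemma~\ref{betlem1} gives exactly this: under the hypothesis $\|B-\tilde{B}\|_{1,2}\le \bar{\epsilon}\le \|B\|_{1,2}$,
\[
\left\| \mathcal{R}_B^f(x) - \mathcal{R}_{\tilde{B}}^f(x) \right\|_2 \; \leq \; 2\sqrt{\tfrac{3L}{\mu}\,\bar{\epsilon}\,\|B\|_{1,2}},
\]
and the same bound holds with $x$ replaced by $y$. Substituting these two estimates into the triangle inequality above yields
\[
\left\| \mathcal{R}_{\tilde{B}}^f(x) - \mathcal{R}_{\tilde{B}}^f(y) \right\|_2
\;\geq\;
\left\| \mathcal{R}_B^f(x) - \mathcal{R}_B^f(y) \right\|_2 \;-\; 4\sqrt{\tfrac{3L}{\mu}\,\bar{\epsilon}\,\|B\|_{1,2}}.
\]

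The final cosmetic step is to replace $\|B\|_{1,2}$ by $K$. In the application that motivates the lemma (the proof of Theorem~\ref{mazette}), $B$ is a submatrix of $W=[A,B]$, so $\|B\|_{1,2}\le \max_i\|w_i\|_2 = K(W) = K$; this yields the stated bound $4\sqrt{\tfrac{3KL}{\mu}\bar{\epsilon}}$. Since there is no real obstacle here beyond invoking Lemma~\ref{betlem1} twice and the triangle inequality, the argument is short; the only thing to verify is that the hypothesis $\bar{\epsilon}\le \|B\|_{1,2}$ needed to apply Lemma~\ref{betlem1} is ensured by the ambient assumptions (which is immediate when $B$ is nonempty and $\bar{\epsilon}$ is small compared to the norms of the columns of $W$). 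The real work was done in Lemma~\ref{betlem1}; this lemma is a clean triangle-inequality corollary packaged for use in bounding $\gamma(\mathcal{R}^f_{\tilde{B}}(A))$ inside the proof of Theorem~\ref{mazette}.
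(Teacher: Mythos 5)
Your proof is correct and is essentially the paper's own argument: the paper also adds and subtracts $\mathcal{R}_B^f(x)$ and $\mathcal{R}_B^f(y)$, applies the triangle inequality, and invokes Lemma~\ref{betlem1} twice to pick up the $2\sqrt{12\,\|B\|_{1,2}L\bar{\epsilon}/\mu}=4\sqrt{3\,\|B\|_{1,2}L\bar{\epsilon}/\mu}$ term. Your remark that the stated constant $K$ stands in for $\|B\|_{1,2}\le K(W)$ is also how the paper implicitly uses the lemma.
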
 
\begin{proof}
This follows directly from Lemma~\ref{betlem1}:  
\begin{align*}
\Big\| \mathcal{R}_{\tilde{B}}^f(x)  - \mathcal{R}_{\tilde{B}}^f(y) \Big\|_2 
 & = \left\| \mathcal{R}_{\tilde{B}}^f(x) - \mathcal{R}^f_{{B}}(x) + \mathcal{R}^f_{{B}}(x) - \mathcal{R}^f_{\tilde{B}}(y) + \mathcal{R}^f_{{B}}(y) - \mathcal{R}^f_{{B}}(y) \right\|_2 \\
& \geq \left\|  \mathcal{R}^f_{{B}}(x) - \mathcal{R}^f_{{B}}(y) \right\|_2 - 2 \sqrt{ \frac{12 ||B||_{1,2} L \bar{\epsilon}}{\mu} } . 
\end{align*}
\end{proof}
 
\begin{lemma} 
Let $A \in \mathbb{R}^{m \times k}$, $B$ and $\tilde{B} \in \mathbb{R}^{m \times s}$ be such that $||B-\tilde{B}||_{1,2} \leq \bar{\epsilon}\leq ||B||_{1,2}$, and $f$ satisfy Assumption~\ref{fass1}. Then 
\[
\omega \left( \mathcal{R}^f_{\tilde{B}}(A) \right) 
\geq \beta([A, B]) - 2  \sqrt{ 6 \frac{L}{\mu} ||B||_{1,2}   \bar{\epsilon}} . 
\]
\end{lemma}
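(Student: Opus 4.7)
The plan is to bound the two components of $\omega$ separately: show that both $\nu(\mathcal{R}^f_{\tilde{B}}(A))$ and $\tfrac{1}{\sqrt{2}}\gamma(\mathcal{R}^f_{\tilde{B}}(A))$ are at least $\beta([A,B]) - 2\sqrt{6L\|B\|_{1,2}\bar{\epsilon}/\mu}$, and then combine via $\omega = \min(\nu,\tfrac{1}{\sqrt{2}}\gamma)$. The two preparatory perturbation results already established in Appendix~\ref{appB}, Lemmas~\ref{betlem1} and~\ref{betlem}, will handle the passage from the perturbed $\tilde{B}$ to the clean $B$; what remains is to connect residuals with respect to $B$ alone to the geometric quantity $\beta([A,B])$, which is defined in terms of residuals with respect to \emph{all} other columns of $[A,B]$.

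For the $\gamma$ part, I would first apply Lemma~\ref{betlem} to each pair $a_i,a_j$ with $i\neq j$ to get
\[
\|\mathcal{R}^f_{\tilde{B}}(a_i) - \mathcal{R}^f_{\tilde{B}}(a_j)\|_2 \;\geq\; \|\mathcal{R}^f_B(a_i) - \mathcal{R}^f_B(a_j)\|_2 \,-\, 4\sqrt{3L\|B\|_{1,2}\bar{\epsilon}/\mu}.
\]
The key observation is that in the definition of $\gamma_\beta([A,B])$ the index subset $\mathcal{J}$ may be \emph{any} subset of $\{1,\dots,k+s\}\setminus\{i,j\}$; specializing to the set of indices of the $B$-columns inside $[A,B]$ gives $\|\mathcal{R}^f_B(a_i) - \mathcal{R}^f_B(a_j)\|_2 \geq \gamma_\beta([A,B]) \geq \sqrt{2}\,\beta([A,B])$. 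Taking the minimum over $i\neq j$, dividing by $\sqrt{2}$, and noting that $(4/\sqrt{2})\sqrt{3} = 2\sqrt{6}$ then yields the desired component bound.

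For the $\nu$ part, Lemma~\ref{betlem1} applied column-by-column gives $\|\mathcal{R}^f_{\tilde{B}}(a_j)\|_2 \geq \|\mathcal{R}^f_B(a_j)\|_2 - \sqrt{12L\|B\|_{1,2}\bar{\epsilon}/\mu}$, and since $\sqrt{12} \leq 2\sqrt{6}$ it suffices to show $\|\mathcal{R}^f_B(a_j)\|_2 \geq \beta([A,B])$. I would establish this by monotonicity of the projection: since $\mathrm{conv}(B\cup\{0\}) \subseteq \mathrm{conv}([A(:,\mathcal{J}^A_j),B]\cup\{0\})$ with $\mathcal{J}^A_j = \{1,\dots,k\}\setminus\{j\}$, the $f$-value of the residual can only weakly decrease when projecting onto the richer set, and the sandwich $\tfrac{\mu}{2}\|\cdot\|_2^2 \leq f(\cdot) \leq \tfrac{L}{2}\|\cdot\|_2^2$ then converts this into $\|\mathcal{R}^f_B(a_j)\|_2 \geq \sqrt{\mu/L}\,\nu_\beta([A,B]) \geq \sqrt{\mu/L}\,\beta([A,B])$. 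For the canonical choice $f(x)=\|x\|_2^2$ relevant to the downstream application (Theorem~\ref{maincor}) one has $\mu=L$, so this factor drops out and the bound becomes tight.

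The main obstacle is precisely this asymmetry: the flexibility of $\mathcal{J}$ baked into the definition of $\gamma_\beta$ allows the pair argument to align cleanly with the $B$-indices, whereas for $\nu_\beta$ the fixed full complement forces a monotonicity-in-$f$ argument which, for general strongly convex $f$, introduces an extraneous $\sqrt{\mu/L}$ factor; making sure this does not degrade the stated bound in the general case is the delicate point. Combining the two component bounds and taking their minimum then finishes the proof.
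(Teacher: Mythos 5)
Your proof follows essentially the same route as the paper's: split $\omega$ into its $\nu$ and $\tfrac{1}{\sqrt{2}}\gamma$ components, handle the passage from $\tilde{B}$ to $B$ with Lemmas~\ref{betlem1} and~\ref{betlem} respectively, bound the clean-$B$ quantities by $\beta([A,B])$, and check the constants ($\sqrt{12}\leq 2\sqrt{6}$ and $\tfrac{4}{\sqrt{2}}\sqrt{3}=2\sqrt{6}$), exactly as in the paper. The one place you go beyond the paper is the point you flag as delicate: the paper simply writes $\beta([A,B])\leq \left\|\mathcal{R}^f_B(a_i)\right\|_2$ for the $\nu$ component without comment. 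That inequality is immediate, with no $\sqrt{\mu/L}$ loss, if the set $\mathcal{J}$ in the definition of $\nu_{\beta}$ is read the same way as in $\gamma_{\beta}$ (as ranging over subsets of the complement, so that one may take $\mathcal{J}$ to be the indices of the columns of $B$); under the literal fixed-complement reading, your monotonicity-plus-sandwich argument is the natural repair and the extraneous $\sqrt{\mu/L}$ factor is genuinely present for general $f$, but it is harmless for the downstream Theorem~\ref{maincor}, which uses $f(\cdot)=\|\cdot\|_2^2$ so that $\mu=L$.
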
 
\begin{proof}
This follows directly from Lemmas~\ref{betlem1} and \ref{betlem}. In fact, for all $i$, 
\begin{align*}
\beta([A, B]) & - \left\| \mathcal{R}_{\tilde{B}}^f(a_i) \right\|_2
 \leq 
\left\| \mathcal{R}^f_{B}(a_i)  \right\|_2 - \left\| \mathcal{R}_{\tilde{B}}^f(a_i) \right\|_2  \leq 
\left\| \mathcal{R}^f_{B}(a_i) - \mathcal{R}_{\tilde{B}}^f(a_i) \right\|_2 
 \leq  \sqrt{12 \, \frac{L}{\mu}  \, \bar{\epsilon} \, ||B||_{1,2}}, 
\end{align*}
while, for all $i, j$,  
\begin{align*}
\frac{1}{\sqrt{2}} \Big\|  \mathcal{R}_{\tilde{B}}^f(a_i)  - \mathcal{R}_{\tilde{B}}^f(a_j) \Big\|_2 
 & \geq 
\frac{1}{\sqrt{2}} \left\| \mathcal{R}_B^f(a_i) - \mathcal{R}_B^f(a_j) \right\|_2 - \frac{4}{\sqrt{2}} \sqrt{ \frac{3 ||B||_{1,2} L}{\mu} \bar{\epsilon}} \\
& \geq 
\beta([A, B]) - 2 \sqrt{ \frac{6 ||B||_{1,2} L}{\mu} \bar{\epsilon}}. 
\end{align*}
\end{proof}

 \small 

\bibliographystyle{spmpsci}  
\bibliography{Biography}

\end{document}